%% file: neurips_2026.tex
\documentclass{article}

\usepackage[preprint]{neurips_2026}

\IfFileExists{NeurIPS26/preamble.tex}{\input{NeurIPS26/preamble}}{\input{preamble}}

\usepackage[utf8]{inputenc} %
\usepackage[T1]{fontenc}    %
\usepackage{hyperref}       %
\usepackage{url}            %
\usepackage{booktabs}       %
\usepackage{amsfonts}       %
\usepackage{nicefrac}       %
\usepackage{microtype}      %
\usepackage{xcolor}         %
\usepackage{xspace}

\hypersetup{
    colorlinks,
    linkcolor={blue!50!black},
    citecolor={blue!50!black},
    urlcolor={blue!80!black}
}

\newcommand{\indep}{\mathrel{\perp\mkern-10mu\perp}}
\newcommand{\floor}[1]{\lfloor#1\rfloor}
\newcommand{\dell}[1]{\left(#1\right)}

\newcommand{\setl}[1]{\left\{#1\right\}}
\newcommand{\setcl}[2]{\left\{#1\middle|#2\right\}}
\newcommand{\sbrl}[1]{\left[#1\right]}
\newcommand{\sbrcl}[2]{\left[#1\middle|#2\right]}
\newcommand{\absl}[1]{\left|#1\right|}

\DeclareBoldMathCommand{\c}{c}

\newcommand{\mubar}{\bar{\mu}}

\newcommand{\Deltamin}{\Delta_\text{min}}

\newcommand{\logdelta}{\log \tfrac{1}{\delta}}
\newcommand{\logdeltaK}{\log \tfrac{K - 1}{\delta}}
\newcommand{\logdeltadash}{\log \tfrac{1}{\delta'}}

\newcommand{\glrtz}{Z^\text{GLRT}}

\newcommand{\algname}{\textsf{ISM}}

\newcommand{\Ncal}{\mathcal{N}}

\makeatletter
\newcommand{\vast}{\bBigg@{4}}
\newcommand{\Vast}{\bBigg@{5}}
\makeatother

\newcommand{\ups}[1]{^{(#1)}}

\newcommand{\astar}{a^*}
\newcommand{\atilde}{\tilde{a}}

\newcommand{\lband}{\mathcal B_{\bm \Delta}}
\newcommand{\lalt}{\mathcal B_{\tilde{\bm \Delta}}}

\usepackage{stackrel}

\usepackage{todonotes}

\renewcommand{\log}{\ln}

\let\ex\E
\DeclareMathOperator{\var}{\mathbb V}

\usepackage{times}

\title{Best Arm Identification for Bandits with Shifting Means}

\author{%
  Lukas Zierahn\\
  CWI and Booking.com\\
  Science Park 123 1098 XG Amsterdam\\
  \texttt{lukas.zierahn@gmail.com} \\
  \And
  Wouter M. Koolen\\
  CWI and University of Twente\\
  Science Park 123 1098 XG Amsterdam\\
  \texttt{wmkoolen@cwi.nl} \\
  \AND
  Shubhada Agrawal \\
  Indian Institute of Science (IISc), Bengaluru\\
  CV Raman Rd, Bengaluru, Karnataka 560012, India\\
  \texttt{shubhada@iisc.ac.in}\\
  \And
  Christina Katsimerou \\
  Booking.com \\
  Oosterdokskade 163, 1011 DL Amsterdam\\
  \texttt{christina.katsimerou@booking.com} \\
  \And
  Dirk {van der Hoeven} \\
  Leiden University \\
  Rapenburg 70, 2311 EZ Leiden\\
  \texttt{dirk@dirkvanderhoeven.com} \\
}

\begin{document}

\maketitle

\begin{abstract}
  We study the best arm identification problem in a stochastic environment with a novel form of adversarial perturbations, which we coin Shifting Means. While classically the mean rewards of the $K$ arms are stable in time, in Shifting Means only the gaps $\boldsymbol{\Delta}$ between mean rewards are stable, while their common shift may be determined adversarially in each round. The objective of the learner is to identify the best arm with high probability while minimizing sample complexity (the fixed confidence setting). Handling shifts requires new tools: we show that algorithms employing a Generalized Likelihood Ratio Test (GLRT) stopping rule, including the popular Track-and-Stop, fail under time-varying shifts. Instead, we propose Importance Weights for Shifting Means ($\mathsf{ISM}$). Assuming means bounded by $U$ and $\sigma^2$-sub-Gaussian rewards, we show $\mathsf{ISM}$ to be $\delta$-correct and to enjoy a sample complexity bound of order $K (\sigma^2 + U^2) \Delta_{\min}^{-2} \ln \frac{1}{\delta}$. We also present a matching (up to constant factors) worst-case lower bound and evaluate our results empirically.
\end{abstract}

\section{Introduction}
Best Arm Identification (BAI) is a fundamental problem in sequential decision making, where the learner is given a set of $K$ arms, each associated with an unknown probability distribution. At each time step, the learner may sample one arm and observe a reward, with the goal of identifying the best arm, namely, the one with the highest expected reward. 

In the literature, two classical formulations of the problem have been considered. In the fixed-budget setting, the objective is to maximize the confidence of identifying the best arm given a pre-specified sampling budget \citep{glynn2004large,audibert2010best}. In the fixed-confidence setting, the objective is to identify the best arm with a prescribed level of confidence while minimizing the number of samples \citep{evendar06SH,mannor2004sample}. This work focuses exclusively on the fixed-confidence setting.

A central assumption in the standard BAI framework is that observations are independent and identically distributed (i.i.d.) over time. However, in real-world applications, data are neither independent nor identically distributed. For example, the performance of a new product may decline as novelty wears off, or the conversion rate of a website may fluctuate with time of day or day of the week. When such perturbations are left unmodeled, classical algorithms can lose their correctness guarantees (Section~\ref{sec:TASFails}). To provide robustness against such effects, we study BAI under adversarial perturbations that shift the mean rewards of all arms simultaneously.

One important class of such perturbations arises when global trends affect all arms in the same direction. For instance, user engagement on an online platform may be uniformly higher during weekends and lower during weekdays, or all products may experience a dip in sales following a seasonal lull. Empirical evidence suggests that macro-level effects on real-world platforms (e.g., seasonality, events, platform-wide fluctuations) are largely independent of the intrinsic qualities of the arms and that the differences between arms remain stable over time \citep{fiez2024best,subpopulations}. We model such perturbations adversarially: the absolute performance of all arms is shifted by an arbitrary, unobserved common offset in each round, while the pairwise gaps remain constant. This ensures that a unique best arm persists throughout the experiment, while the adversarial shift captures a broad class of real-world effects without assuming any specific generative process. We call the resulting model Shifting Means.

In Shifting Means, the expected rewards may vary across rounds and allow an arbitrary dependence on past samples. To retain learnability we introduce a latent variable $B_t$ called the \textit{shift}, which the learner will never observe. $B_t$ can vary arbitrarily and even adversarially between timesteps $t$ and may also be drawn at random. We make three core assumptions: (a) the expected mean of an arm given the shift $B_t$ is bounded by $U$, (b) the reward distributions of all arms are $\sigma^2$ sub-Gaussian given $B_t$, and (c) the expected mean gaps between arms given $B_t$ are constant (both over the draw of $B_t$ and across time). One valid way to construct an environment for $K$ arms given these assumptions is to pick a set of gaps $\bm \Delta$, pick a bounded $B_t$ deterministically in a non-oblivious adversarial fashion and draw the rewards from normal distributions with means $B_t - \Delta_a$ for arm $a$. Picking $B_t = 0$ with probability 1 recovers the stationary best arm identification setting.

\paragraph{Contributions} We now summarize key contributions of our work. 
\begin{enumerate}
    \item We show that the much celebrated Track-and-Stop algorithm of \citet{garivier16optimalBAI} fails in the Shifting Means setting (Lemma~\ref{lem:glrt}) and propose \underline{I}mportance Weighting for \underline{S}hifting \underline{M}eans (\algname; Algorithm~\ref{alg:ism}). \algname\ is based on an evidence measure that generalizes importance weighting and optimally tunes the hyperparameters to ensure that the evidence of sub-optimal arms being the best arm does not grow and consequently we prove $\delta$-correctness for \algname\ (Theorem~\ref{lem:fullsafety}).
    
    \item We show an asymptotic in $\delta \rightarrow 0$ upper bound on the expected sample complexity in terms of a problem dependent rate $1/D^*(\bm \Delta)$ (Theorem~\ref{thm:sample complexity}), which we then show to be of order $K (\sigma^2 + U^2) / \Deltamin^2$ (Theorem~\ref{thm:ESbound}), for the smallest gap $\Deltamin$. One of the core difficulties to overcome is to ensure quick enough convergence to optimal arm sampling probabilities for the current bandit, which we tackle by per-round plug-in estimates, sufficient exploration, and a martingale concentration analysis of Bernstein type.

    \item We prove a lower bound on the expected number of samples any $\delta$-correct algorithm needs to generate in the Shifting Means scenario, by randomizing the shifts $B_t$. We draw the shifts from the raised cosine distribution, which is least informative in the sense of Fisher information among all distributions on a bounded interval \citep[Equation~(29.13) and references therein]{polyanskiy2025information}. It uses its bounded range to inflate the effective variance of the rewards, measured through a KL divergence, from $\sigma^2$ to nearly $\sigma^2 + U^2/\pi^2$ (Lemma~\ref{lem:kl inflate}), culminating in a lower bound of order $K (\sigma^2 + U^2) / \Deltamin^2$ (Theorem~\ref{thm:lower bound}). That lower bound contains the standard BAI lower bound of $K \sigma^2 / \Deltamin^2$ \citep{kaufmann16complexity} and shows that \algname\ is optimal up to constant factors.
    
    \item We demonstrate the practicality of \algname\ through experiments in Section~\ref{sec:numerics}. We discuss when the theoretical bounds are good predictors of the actual stopping time. We also verify empirically that the algorithm is $\delta$-correct under mis-specification.

\end{enumerate} 

\paragraph{Limitations} Our model assumes an additive common shift and fixed pairwise gaps and we require valid known upper bounds on the conditional means and sub-Gaussian variance proxy. The current expected sample-complexity upper bound is asymptotic as $\delta \to 0$, the upper and lower bounds match only up to universal constants, and the numerical evaluation is limited to synthetic instances.

\paragraph{Related Work}  The first tight lower bound and an asymptotically optimal algorithm for the fixed-confidence BAI setting with stationary parametric distributions was developed by \citet{garivier16optimalBAI}. Their algorithm, Track-and-Stop (TAS), has since been extended to a variety of BAI frameworks, including BAI with heavy-tails and risk-sensitive objectives \citep{agrawal20a,agrawal2021optimal}, BAI with multiple-correct answers \citep{degenne2019pure}, BAI for linear bandits \citep{jedra2020linearBAI}, multi-agent bandits \citep{andreas2023BAIMAMAB}, and batched BAI \citep{jin2024optimal}. For a broader introduction to BAI and the TAS methodology, we refer the reader to the textbook treatment by \citet{lattimore2020banditalgs}.

Robustness to adversarial perturbations is well studied in the regret-minimization literature \citep{lykouris2018stochastic,gupta2019better}. Within BAI, \citet{altschuler2019contaminated} study $\varepsilon$-contamination, where each sample is independently replaced by an adversarial value with probability $\varepsilon$; since means become unidentifiable, they target the arm with the highest median and give matching sample-complexity bounds. \citet{jamieson16nonstochasticBAI} consider a fully adversarial model where arm losses may change arbitrarily but are assumed to converge to a limit, a condition that need not hold in our setting. Closest to our setting, \citet{globalshifts2024} study fixed-budget BAI under a common additive shift $s_j$ that equally affects the mean of all arms in each environment $j$. They show that pairwise gap estimates are consistent under the condition that the number of environment changes grows sublinearly in the number of rounds and that the learner observes when the environment changes. In contrast, Shifting Means allows the shift to change adversarially every round and the learner does not observe any additional information about the shifts.

A related field is non-stationary bandits, since adversarial common shifts can naturally arise from real-world non-stationarity. In the fixed-budget setting, \citet{abbasi2018best} achieve a ``best of both worlds'' guarantee adapting to both stochastic and adversarial rewards, and \citet{xiong24testingrobust} address linear BAI under arbitrary time variations but can be overly conservative for the structured perturbations we consider. \citet{allesiardo2017nonstationary} study a non-stationary BAI setting similar to but more restrictive than ours using elimination-based methods. In the fixed-confidence setting, \citet{subpopulations} model non-stationarity through seasonal subpopulations with known frequencies, achieving asymptotic optimality but not capturing continuous perturbations. More recently, \citet{NEURIPS2024_minimax} generalize this to piecewise-stationary linear bandits where the learner estimates both rewards and change-points.

\paragraph{Outline of the paper} In Section~\ref{sec:setup}, we formalize the setup and introduce the notation we use throughout. Section~\ref{sec:TASFails} contains certain negative results and also give some intuitions on why TAS cannot be used directly for Shifting Means. In Section~\ref{sec:alg} we first introduce the core ideas of our \algname\ algorithm and show its $\delta$-correctness before also demonstrating the sample complexity guarantees. Then we move on to the lower bounds in Section~\ref{sec:lower bounds}, present the numerical results in Section~\ref{sec:numerics}, and conclude with discussions and broader perspectives in Section~\ref{sec:conclusion}. %

\section{Setup and Notation for Shifting Means}\label{sec:setup}
In this section we define our stochastic bandit model with common adversarial perturbations, describe learner strategies in the fixed-confidence setting, and preview the sample complexity guarantees we will achieve.
Let $K$ be the number of arms. We will model sequential interactions where in each round $t \ge 1$ the learner picks an arm $A_t \in [K]$ and the environment responds with a reward $X_t \in \reals$. We denote the history observed in the first $t$ rounds by $H_t = (A_1, X_1, \ldots, A_t, X_t)$. A \emph{sampling rule} for the learner is a stochastic kernel for $A_t | H_{t-1}$ in each round $t \ge 1$, whereas a \emph{bandit instance} $\mathcal B$ is a stochastic kernel for $X_t | A_t, H_{t-1}$ in each round $t \ge 1$.\footnote{Allowing the environment to condition only on $H_{t-1}$ is without loss of generality: any additional private randomness, latent state, past potential outcomes or past shifts can be marginalised out via regular conditional probabilities, yielding an equivalent kernel for $X_t | A_t, H_{t-1}$.} Together, a sampling rule and a bandit instance induce a unique joint law over histories $(A_t, X_t)_{t \ge 1}$ that we will henceforth denote by $\pr$ and $\ex$, using the convenient shorthand $\ex_t[\cdot] \coloneq \ex\sbrcl{\cdot}{H_{t-1}}$.

To state our requirements about the bandit instance, we will assume that it is the marginal of a richer stochastic kernel, namely for $\bm X_t, B_t, X_t | A_t, H_{t-1}$, modelling the latent potential reward vector $\bm X_t \in \reals^K$ and the latent shift $B_t \in \reals$, neither of which is observed. We impose the standard assumptions of consistency $X_t = X_{t, A_t}$
 and independence $(\bm X_t, B_t) \indep A_t \mid H_{t-1}$, while explicitly allowing arbitrary dependence between $\bm X_t$ and $B_t$. We denote the shift-conditional mean reward of arm $a \in [K]$ by $\mu_{t,a} \coloneq \ex_t\sbrcl{X_{t,a}}{B_t}$. In this paper we additionally assume bounded means, sub-Gaussian noise and time-invariant gaps. More precisely we assume $\mathcal B$ admits a fixed range $U \ge 0$, a variance proxy $\sigma^2 \ge 0$ and a gap vector $\bm \Delta \in [0,2 U]^K$ with a uniquely attained minimum of $\min_{a \in [K]} \Delta_a = 0$ such that
\begin{assumption}\label{ass:main}
  For each $t \ge 1$ and each arm $a \in [K]$, almost surely
  \begin{enumerate}[label=(\alph*)]
  \item\label{it:a}
    the conditional mean $\mu_{t,a}$ is bounded\footnote{We show in Theorem~\ref{thm:lower bound} that without any constraints on the mean of the arm distributions, any $\delta$-correct algorithm would require an unbounded number of samples on average.} by $|\mu_{t,a}| \le U$,
  \item
    \label{it:b}
    the noise is conditionally $\sigma^2$-sub-Gaussian, i.e.
    $
    \ex_t\sbrcl{e^{\eta(X_{t,a} - \mu_{t,a})}}{B_t}
    \le
    e^{\frac{1}{2} \eta^2 \sigma^2}$
    for all $\eta \in \reals$,
  \item
    \label{it:c}
    the mean equals the shifted gap $\mu_{t,a} = B_t - \Delta_a$.
  \end{enumerate}
\end{assumption}
Note that by~\ref{it:c} the shift-conditional mean gap $\mu_{t,a} - \mu_{t,b}$ is the constant $(B_t - \Delta_a) - (B_t - \Delta_b) = \Delta_b-\Delta_a \eqcolon \Delta_{a,b}$. Also note that~\ref{it:b} does \emph{not} imply that $X_{t,a} - \ex_t[X_{t,a}]$ is $\sigma^2$-sub-Gaussian\footnote{
  Analogously, marginalising out $B_t$ raises variance by $\var_t[X_{t,a}] - \ex_t[\var_t\sbrcl{X_{t,a}}{B_t}] = \var_t\sbrl{\ex_t\sbrcl{X_{t,a}}{B_t}}$, i.e.  $\var_t[B_t]$.}, and~\ref{it:a} allows unbounded reward.
Our assumptions imply that there is a consistent and unique best arm, which we denote by%
\[
  a^* ~\coloneq~ \argmin_{a \in [K]} \Delta_a.
\]
We work in the fixed confidence setting, where a strategy for a learner consists of a sampling rule which induces action probabilities $w_{t,a} = \pr_t\setl{A_t = a}$, a stopping time $\tau \in \nats \cup \setl{\infty}$ with respect to the filtration $(H_t)_{t \ge 0}$, and an $H_\tau$-measurable recommendation rule $\tilde a \in [K]$.
We call a learner $\delta$-correct with confidence $\delta \in (0,1)$, if for any bandit instance satisfying Assumption~\ref{ass:main} we have
$
  \pr \setl{
    \text{$\tau < \infty$ and $\tilde a \neq a^*$}
  }
  ~\le~
  \delta.
$
Under this definition a learner that never stops is considered $\delta$-correct, which is intuitive when considering that never giving any recommendation will never produce a wrong recommendation.

We will look at efficient learners, and in particular develop $\delta$-correct learners with strong sample complexity guarantees. Specifically, for any bandit instance satisfying Assumption~\ref{ass:main} with some gap $\bm{\Delta}$, the sample complexity is bounded by
\[
  \ex[\tau]
  ~\le~
  \frac{\ln \frac{K-1}{\delta}}{D^*(\bm{\Delta})}
  +
  o\dell{\ln \frac{1}{\delta}}
\]
as $\delta \to 0$, where $D^*(\bm{\Delta})$ is a gap-dependent rate defined in \eqref{eqn:Dstar}. We will also derive a tight lower bound matching this upper bound in $\delta\to 0$ setting.
Although the rewards are conditionally stochastic, the latent shifts $B_t$ and the distribution of the $\sigma^2$-sub-Gaussian noise associated with each arm may be chosen in an adversarial, history-dependent manner. In this sense, our model lies between classical stochastic bandits and fully adversarial bandits. Randomisation plays a central role in this adversarial interaction. Here, unlike in the fully stochastic case, achieving \emph{any} upper bounds on sample complexity requires randomised arm selection, while \emph{nontrivial} lower bounds necessitate randomised shifts $B_t$.

\section{TAS and GLRT Fail}\label{sec:TASFails}
We argue that existing templates for algorithms, including Track-and-Stop \citep{garivier16optimalBAI}, do not address the Shifting Means setting. Under the mild assumption that the shifts can be larger than the gaps, the opponent may set $B_t = \Delta_{A_t}$ against deterministic $A_t$. Then all rewards $X_t$ are zero-mean regardless of the arm and reveal no information about the true gaps. Thus, any \emph{deterministic sampling} rule is powerless.

In addition, we argue that the classic \emph{GLRT stopping rule} also fails, even with shifted actually Gaussian rewards. There are two ways GLRT could be applied. First, we may simply ignore the shifts in its design. We argue that this is not $\delta$-correct as the randomised shifts can inflate the variance of the rewards. Not taking that increased variance into account leads to a higher error probability than $\delta$, meaning it is unsafe. The second way the GLRT can be applied is by including the possibility of shifts in its design. Now it becomes powerless, i.e.\ it never stops. The reason for $\tau=\infty$ is that this statistic overfits the latent shifts $B_t$ to the observed rewards $X_t$ each round; the statistic equals its starting value forever and hence never crosses its threshold. We analyse both these cases in detail in Appendix~\ref{sec:warmupimpossibilty}.

\section{Design and Sample Complexity Analysis of the \algname{} Strategy}\label{sec:alg}
In this section we will define and analyse our learning strategy. We will first set up the stopping and recommendation rule, and then consider the sampling rule.

\subsection{The \algname{} Stopping and Recommendation Rules}\label{sec:safety}
For each ordered pair of distinct arms $a \neq b$, we introduce a measure of evidence $z_{t,a,b}$ in round $t$ against $\Delta_{a,b} \le 0$, i.e.\ arm $b$ being better than arm $a$. It is parameterised by the sampling rule $\bm w_t$ and by the hyperparameters $\lambda_{t,a,b}$, $\alpha_{t,a,b}$, and $\beta_{t,a,b}$ which we will tune in Sections~\ref{sec:oracle} and~\ref{sec:power}. We define
\begin{equation}\label{eq:z}
  z_{t,a,b}
  ~\coloneq~
  \frac{\indicator_{t, a}}{w_{t, a}} \cdot \dell{
    \lambda_{t,a,b} X_t - \alpha_{t,a,b}
  }
  + \frac{\indicator_{t, b}}{w_{t, b}} \cdot \dell{
    -\lambda_{t,a,b} X_t - \beta_{t,a,b}
  }
\end{equation}
where $\indicator_{t,a} \coloneq \indicator[A_t = a]$ denotes the indicator for $A_t = a$. We further define the cumulative evidence after $t$ rounds by $Z_{t,a,b} \coloneq \sum_{s=1}^t z_{s,a,b}$, which is compared against a threshold depending on the confidence $\delta$ to yield the stopping rule. The full \algname{} stopping and recommendation rules are given by%
\begin{equation}\label{eq:stopping.and.recommendation}
  \tau
  ~\coloneq~
  \inf \setcl{
    t \in \nats
  }{
    \max_{a \in [K]} \min_{b \neq a} Z_{t,a,b} ~\ge~ \ln \frac{K-1}{\delta}
  }
  \qquad
  \text{and}
  \qquad
  \tilde a
  ~\coloneq~
  \argmax_{a \in [K]} \min_{b \neq a} Z_{\tau,a,b}
  ,
\end{equation}
with ties broken arbitrarily. We will only work with parameter values ensuring that $(e^{Z_{t,a,b}})_{t \ge 0}$ is a supermartingale whenever in fact $\Delta_{a,b} \le 0$, i.e.\ $b$ is better than $a$. To express that, define $\Phi$ for $w_a,w_b,\lambda \ge 0$ and $\alpha,\beta \in \reals$ by
\begin{equation}\label{eqn:def Phi}
  \Phi(w_a, w_b, \lambda, \alpha, \beta)
  ~\coloneq~
  1 - w_{a} - w_{b} + \max_{u \in \{-U, U\}} w_{a} e^{ \frac{\lambda u - \alpha}{w_{a}} + \frac{\lambda^2 \sigma^2}{2 w_{a}^2}}
  + w_{b} e^{\frac{-\lambda u - \beta}{w_{b}} + \frac{\lambda^2 \sigma^2}{2 w_{b}^2}}.
\end{equation}
We then have that $\Phi$ equals the worst-case exponential moment of $z_{t,a,b}$ over all bandit instances.
\begin{restatable}{relemma}{safetyconvex}\label{lem:safetyconvex}
  For all $t \ge 1$, arms $a \neq b$, sampling rule $\bm w_t$ and parameters $(\lambda_{t,a,b}$, $\alpha_{t,a,b}$, $\beta_{t,a,b})$,
  \[
    \Phi(w_{t,a}, w_{t,b}, \lambda_{t,a,b}, \alpha_{t,a,b}, \beta_{t,a,b})
    ~=~
    \max_{\mathcal B: Assumption~\ref{ass:main} \text{ holds}, \; \Delta_{a,b} \le 0}~
    \ex_t\sbrl{e^{z_{t,a,b}}}
    .
  \]
Moreover, $\Phi$ is jointly convex (in all five parameters).
\end{restatable}
With this, we can prove $\delta$-correctness, the proofs to both results can be found in Appendix~\ref{app:safety}.
\begin{restatable}{retheorem}{fullsafety}\label{lem:fullsafety}
  Any sampling rule $\bm w_t$, paired with stopping and recommendation rules~\eqref{eq:stopping.and.recommendation} employing predictable parameters $(\lambda_{t,a,b}, \alpha_{t,a,b}, \beta_{t,a,b})$ for each pair of arms $a \neq b$ and at each round $t$ such that $\Phi(w_{t,a}, w_{t,b}, \lambda_{t,a,b}, \alpha_{t,a,b}, \beta_{t,a,b}) \le 1$, yields a $\delta$-correct learner in any bandit satisfying Assumption~\ref{ass:main}.
\end{restatable}
The $\delta$-correctness proof is build on a round-by-round argument and thus does not rely on the assumption that gaps are fixed in time. The same proof would go through under the weaker assumption that the best arm is fixed (but with gaps that change arbitrarily in time), meaning that \algname\ is robust to that misspecified setting, although potentially suffering from a worse sample complexity.

\subsection{The Oracle Strategy for Known Gaps}\label{sec:oracle}
To have \eqref{eq:stopping.and.recommendation} stop and recommend the correct best arm $\tilde a = a^*$, we need $\min_{b \neq a^*} Z_{t,a^*,b}$ to cross its threshold. For each pair $a \neq b$, the cumulative evidence $Z_{t,a,b}$ has \emph{drift}, i.e.\ expected increment,
\[
  \ex_t\sbrl{z_{t,a,b}}
  ~=~
  \lambda_{t,a,b} \Delta_{a,b} - \alpha_{t,a,b} - \beta_{t,a,b}
  ,
\]
which depends on the chosen parameters and the true gap. We assume the gaps $\bm \Delta$ are known for this section and describe how to adapt to unknown gaps in the next Section~\ref{sec:power}. Our approach is to optimize the parameters and the sampling rule to maximise the minimal drift, since \algname{} only stops after the cumulative evidences for all arms have crossed the threshold. To ensure safety, i.e., to bound the probability of a wrong recommendation, the feasible parameters we optimize over have to observe the safety constraint as laid out in Theorem~\ref{lem:fullsafety}, meaning the parameters and the sampling rule are subject to a unit exponential moment constraint (sometimes called ``e-variable''\footnote{Compared to classic e-values our evidences are against sophisticated nulls. First of all, the shifts make our setting fork-convex (at least for fixed gaps $\bm \Delta$). Second of all, the fact that we actively control the arm $A_t$ makes that its randomness may be leveraged, which is what our importance weighted evidences actually exploit.}).

In other words, our goal is to solve
\begin{align}
  \label{eq:Dtab}
  D(w_a, w_b, \Delta)
  &~\coloneq~
  \max_{\substack{
      \lambda \ge 0,\ \alpha, \beta \in \reals
      \\
      \Phi(w_{a}, w_{b}, \lambda, \alpha, \beta)
      \le 1
    }}
  ~
  \lambda \Delta - \alpha - \beta
  \\
  \label{eqn:Dstar}
  D^*(\bm \Delta)
  &~\coloneq~
  \max_{\bm w \in \triangle_K} \min_{b \neq a^*}~ D(w_{a^*}, w_b, \Delta_b)
  \qquad
  \text{where}
  \qquad
  a^* = \argmin_{a \in [K]} \Delta_a~,
\end{align}
at which point the optimal parameters including the optimal sampling proportions can be read off. 

Both $D$ and $D^*$ are convex optimisation problems. To see why, observe that $D^*$ (which contains $D$) is a concave objective with convex constraints, and hence one can compute the solution numerically. We discuss an efficient special-purpose implementation in Appendix~\ref{sec:compute}, exploiting the particular structure of this problem. We prove useful analytic results about these problems in Appendix~\ref{app:drift}. For now, all that matters is that both~\eqref{eq:Dtab} and~\eqref{eqn:Dstar} are efficiently solvable.

For any gap vector $\bm \Delta$ with best arm $a^* = \argmin_{a \in [K]} \Delta_a$. We define the \emph{oracle learner} with sampling rule and parameters given by extracting the relevant maximizers
\begin{align*}
  \bm w_t &~\coloneq~ \argmax D^*(\bm \Delta)
  \\
  (\lambda_{t,a,b}, \alpha_{t,a,b}, \beta_{t,a,b}) &~\coloneq~ \argmax D(w_{t,a}, w_{t,b}, \Delta_{a,b}) \qquad \text{for all pairs $a \neq b$}~,
\end{align*}
which are in fact all independent of $t$. The oracle learner has maximised the minimal drift for all $b \neq a^*$, and in fact equalised all these drifts to $D^*(\bm \Delta)$. Intuitively, if the variance of each $z_{t,\astar,b}$ is not excessive, then after roughly $\logdeltaK / D^*(\bm \Delta)$ timesteps we should start to expect the stopping rule \eqref{eq:stopping.and.recommendation} to trigger. Based on this intuition, Lemma~\ref{lem:oracle} quantifies the stopping time of any sampling rule guaranteeing some nontrivial drift $D > 0$. Lemma~\ref{lem:drift ism} then supplies the required control on the (one-sided) sub-Gaussian variance proxy for the oracle sampling rule. Both proofs are in Appendix~\ref{app:power}.

\begin{restatable}{relemma}{oracle}\label{lem:oracle}
  Fix drift $D>0$ and variance proxy $\rho^2 \ge 0$ such that $\ex_t \sbrl{  e^{\xi \dell{ D - z_{t,\astar, b}} }} \leq e^{\frac{\xi^2 \rho^2}{2}}$ for every $\xi \ge 0$ and arm pair $b \neq \astar$. %
  Then the sample complexity of stopping rule~\eqref{eq:stopping.and.recommendation} is at most
  \begin{align*}
    \E[\tau]
    &~\le~
    1
    + \left(\sqrt{\frac{\logdeltaK}{D }+\frac{\rho ^2}{2 D ^2} (1 + \ln (K-1))}+\sqrt{\frac{\rho ^2}{2 D ^2} (1 + \ln (K-1))}\right)^2
    .
  \end{align*}
\end{restatable}
To interpret this bound, observe that $\lim_{\delta \to 0} \frac{\ex[\tau]}{\ln \frac{1}{\delta}} \le \frac{1}{D}$, so the asymptotic sample complexity is determined by the drift $D$. That is, the drift term dominates as the variance term is $\delta$ independent, which means we can ignore $\rho$. This motivates our choice \eqref{eqn:Dstar} of maximising the minimal drift across all arms within the safety constraint.

\subsection{The \algname{} Sampling Rule for Unknown Gaps}\label{sec:power}
The final step is to define the algorithm for unknown gaps. Our approach here is to estimate the gaps and plug them in. We use the unbiased importance-weighted estimator
\begin{align}\label{eq:unbiased}
  \hat\Delta_{t,a,b}
  ~\coloneq~
  \frac{1}{t}
  \sum_{s=1}^t
  \dell{
    \frac{\indicator_{s, a} }{w_{s, a}} - \frac{\indicator_{s, b}}{w_{s, b}}
  } X_s
  .
\end{align}
We extend the definition to $t=0$ by arbitrarily setting $\hat \Delta_{0,a,b} \coloneq 0$. At a high level, the approach of \algname{} is to tune in round $t$ as if the gaps $\hat{\Delta}_{t - 1, b} \coloneq \max_{a} \hat \Delta_{t - 1, a, b}$ were correct. This requires some care. While the true gap vector $\bm \Delta$ has a unique best arm (i.e.\ gap zero), and its entries are bounded in $[0,2 U]$, neither of which have to be true for the estimated gap vector $\hat{\bm \Delta}_{t-1}$. These are transient problems that vanish in probability as $t \to \infty$ (see e.g.\ Lemma~\ref{lem:deltaconc}) with sufficient exploration. If entries of $\hat{\bm \Delta}_{t-1}$ are above $2U$, we project them down to $2U$ componentwise, written $\hat{\bm \Delta}_{t-1} \wedge 2U$; as $\bm \Delta \in [0, 2U]^K$ this only moves the estimate closer to the truth, and it keeps $\lambda, \alpha$ and $\beta$ bounded. If there are several zeroes, then $D(w_a, w_b, 0) = 0$ and any weights $\bm w$ are optimal, in which case we choose to set $\bm w$ uniform. We set
\begin{subequations}
\begin{align}
  \label{eq:wtilde}
  \tilde{\bm w}_t
  &~\coloneq~
  \begin{cases}
    \argmax D^*(\hat{\bm \Delta}_{t-1} \wedge 2U)
    & \text{if $\hat{\bm \Delta}_{t-1}$ has a unique zero entry,}
    \\
    \bm 1_K / K
    & \text{otherwise,}
  \end{cases}
  \\
  \label{eq:w}
  \bm w_t
  &~\coloneq~
  \dell{1-\frac{1}{\sqrt{t}}} \tilde{\bm w}_t + \frac{\bm 1_K}{K \sqrt{t}}
  \\
  \label{eq:params}
  (\lambda_{t,a,b}, \alpha_{t,a,b}, \beta_{t,a,b})
  &~\coloneq~
  \argmax D(w_{t,a}, w_{t,b}, \hat \Delta_{t-1,a,b} \wedge 2U)
  \qquad \text{for all pairs $a \neq b$~.}
\end{align}
\end{subequations}
Here, $\tilde{\bm w}_t$ maximises the minimal drift as if $\hat{\bm \Delta}_{t-1} \wedge 2U$ were the real gaps, and \eqref{eq:w} mixes in exploration to obtain the actual sampling probabilities $\bm w_t$. With that the algorithm, which we call \algname, is completely defined. See Algorithm~\ref{alg:ism} for a summary. \algname{} is $\delta$-correct by Theorem~\ref{lem:fullsafety}. The next result shows that $D^*(\bm \Delta)$ quantifies its sample complexity.

\begin{algorithm}[t]
  \caption{Importance Weights for Shifting Means (\algname)}\label{alg:ism}
  \begin{algorithmic}[1]
  \REQUIRE Confidence $\delta \in (0,1)$, mean range $U > 0$, sub-Gaussianity constant $\sigma^2 > 0$
    \FOR{$t = 1, \dots$}
    \STATE Compute sampling proportions $\bm w_t$ in~\eqref{eq:w} using~\eqref{eq:wtilde} and~\eqref{eqn:Dstar}
    \STATE Sample arm $A_t \sim \bm w_t$ and observe reward $X_t$
    \STATE \label{line:params}
    For all pairs $a \neq b$, compute $(\lambda_{t,a,b}, \alpha_{t,a,b}, \beta_{t,a,b})$ in \eqref{eq:params} using~\eqref{eq:Dtab}
    \STATE \label{line:ev.n.gap}
    For all pairs $a \neq b$, update evidences $Z_{t,a,b}$ in~\eqref{eq:z} and gap estimates $\hat \Delta_{t,a,b}$ in~\eqref{eq:unbiased}
    \STATE \textbf{if} stopping condition \eqref{eq:stopping.and.recommendation} reached (i.e.\ $\tau = t$) \textbf{then} Stop and recommend arm $\tilde a$ in~\eqref{eq:stopping.and.recommendation}
    \ENDFOR
  \end{algorithmic}
\end{algorithm}

\begin{restatable}{retheorem}{samplecplx} \label{thm:sample complexity}
  Fix a bandit instance satisfying Assumption~\ref{ass:main} with gap $\bm \Delta$. \algname{} (Algorithm~\ref{alg:ism}) ensures asymptotic sample complexity
  \[
    \lim_{\delta \to 0} \frac{\ex[\tau]}{\logdelta}
    ~\le~
    \frac{1}{D^*(\bm \Delta)}.
  \]
\end{restatable}

The proof needs to overcome two complications compared to that of Lemma~\ref{lem:oracle}. First, the gaps are now learned, and as such the drift is not uniformly bounded below over time. We argue, by concentration, that the true gap is learned quickly enough. In addition, the sub-Gaussian variance proxy used in Lemma~\ref{lem:oracle} can get arbitrarily large when the sampling weights $w_{t,a}$ get small. Forced exploration provides a lower bound of order $1/\sqrt{t}$, yet this would be vacuous when combined with Hoeffding's range bound, which scales as $\frac{1}{w_{t,a}^2}$. We overcome this by relying on Bernstein instead, as the actual variance scales by only $\frac{1}{w_{t,a}}$. We refer the reader to Appendix~\ref{sec:scproof} for a complete proof, and conclude with showing that \algname\ is efficiently implementable, proof in Appendix~\ref{sec:compute}.
\begin{lemma}
  \algname{} requires $O(K^2)$ storage, and $O(K)$ work per round.
\end{lemma}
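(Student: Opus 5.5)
The plan is to exploit two structural facts: the per-round evidence increment $z_{t,a,b}$ in \eqref{eq:z} is sparse, and the gap estimates \eqref{eq:unbiased} are separable. Together these mean that almost none of the $K^2$ stored quantities change in a given round.

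\textbf{Storage.} First I will check that everything \algname{} needs fits in $O(K^2)$ memory. We keep the cumulative evidences $Z_{t,a,b}$ for all ordered pairs, which is $K(K-1)$ numbers. For the gap estimates I will not store $\hat\Delta_{t,a,b}$ per pair. Instead I write $\hat\Delta_{t,a,b} = S_{t,a} - S_{t,b}$ with
\[
  S_{t,a} \coloneq \frac1t \sum_{s\le t} \frac{\indicator_{s,a}}{w_{s,a}} X_s ,
\]
so only $K$ running sums are needed. Since the parameters $(\lambda_{t,a,b},\alpha_{t,a,b},\beta_{t,a,b})$ are predictable, they can be recomputed on demand and need not be stored. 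In addition, for each row $a$ we store an auxiliary structure for $\min_{b\neq a} Z_{t,a,b}$, which is discussed below.

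\textbf{Work per round.} In round $t$ the steps are as follows.
\begin{enumerate}
\item Compute $\hat\Delta_{t-1,b} = \max_a S_{t-1,a} - S_{t-1,b}$ for all $b$. This takes $O(K)$: one pass for the maximum, then one subtraction per $b$.
\item Clip at $2U$, detect whether the zero entry is unique, and solve $D^*$ in \eqref{eqn:Dstar}. For $D^*$ I plan to use the structure from Appendix~\ref{sec:compute}. At the optimum all drifts $D(w_{a^*},w_b,\Delta_b)$ are equalised. For a fixed target drift $d$ and a fixed $w_{a^*}$, each $D(w_{a^*},\cdot,\Delta_b)$ is monotone in $w_b$, so the required $w_b$ is obtained by a scalar inversion. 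We then search over $(d,w_{a^*})$ so that the weights sum to one. Each outer step costs $O(K)$, and to fixed numerical precision the number of outer steps is a constant independent of $K$. This gives $O(K)$ in total.
\item Mix in exploration via \eqref{eq:w} in $O(K)$, and sample $A_t$ in $O(K)$.
\item Use sparsity. Since $\indicator_{t,a}=\indicator_{t,b}=0$ unless $A_t\in\{a,b\}$, we have $z_{t,a,b}=0$ for every pair not involving $A_t$. So only the $2(K-1)$ evidences $Z_{t,A_t,b}$ and $Z_{t,a,A_t}$ change. For each of them we solve the small convex problem \eqref{eq:Dtab}, which has three variables and, by Lemma~\ref{lem:safetyconvex}, a single convex constraint. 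To fixed precision this costs $O(1)$, so these updates total $O(K)$.
\item Update $S_{t,A_t}$, which is $O(1)$, and rescale the others lazily by keeping the unnormalised sums.
\end{enumerate}

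\textbf{Main obstacle: the stopping rule.} Evaluating $\max_a\min_{b\ne a}Z_{t,a,b}$ naively costs $O(K^2)$. Row $A_t$ changes completely, but its minimum can be recomputed in $O(K)$. Every other row $a$ has exactly one changed entry, $Z_{t,a,A_t}$.
\begin{itemize}
\item If that entry decreased, the row minimum is updated in $O(1)$.
\item If it increased and it was the minimiser, the new minimum is not immediately available.
\end{itemize}
My first attempt would be to keep per-row cached minima and show that the increasing case cannot force a rescan. I would try to argue this from the sign structure of the $b$-part of $z$, namely $-\lambda X_t-\beta$, together with the fact that the stopping rule only has to decide whether some row exceeds the threshold. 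If that fails, I would fall back to one of two options. The first is per-row min-heaps, which give $O(K\log K)$. The second is to check the threshold only for the currently estimated best arm $\argmin_b\hat\Delta_{t-1,b}$, arguing that this does not change the sample-complexity analysis. I expect this step to be where the claimed $O(K)$ bound needs the most care.
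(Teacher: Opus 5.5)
Your storage count and steps 1--5 match the paper's proof. The gap estimates are kept as separable running sums, the $K(K-1)$ evidences are stored, and sparsity of \eqref{eq:z} means only the $2(K-1)$ pairs containing $A_t$ need parameters and updates. The weights come from the $O(K)$ nested bisections of Appendix~\ref{sec:compute}, which use the $1$-homogeneity of $D$ in $(w_a,w_b)$ to collapse your two-parameter search over $(d,w_{a^*})$ into one scalar search over the common level $C$.

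The paper's proof never discusses the cost of testing the stopping condition, so you are right to raise it. However, your proposal leaves that step open, so as written it does not establish the $O(K)$ bound. Your first idea fails in general. For $a\neq A_t$, the only changed entry of row $a$ moves by $(-\lambda_{t,a,A_t}X_t-\beta_{t,a,A_t})/w_{t,A_t}$. When $\lambda_{t,a,A_t}>0$, this is positive whenever $X_t<-\beta_{t,a,A_t}/\lambda_{t,a,A_t}$. The idea would work for nonnegative rewards, since Lemma~\ref{lem:exact alpha beta} gives $\beta^*\ge\lambda^2\sigma^2/(2w_b)\ge 0$. But Assumption~\ref{ass:main} allows means down to $-U$ and unbounded sub-Gaussian noise, so a cached row minimiser can increase and force a rescan. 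Your fallbacks either give only $O(K\log K)$ or change the algorithm: testing only the row of the estimated best arm keeps $\delta$-correctness, but it is not the rule \eqref{eq:stopping.and.recommendation}.

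The missing idea is the one you half-state at the end of your first attempt. The threshold $\ln\frac{K-1}{\delta}$ does not change over time, so you never need the row minima, only whether each one clears the threshold.
\begin{itemize}
\item For each row $a$, keep the number $N_a$ of indices $b\neq a$ with $Z_{t,a,b}<\ln\frac{K-1}{\delta}$ (initially $K-1$). Also keep the number $M$ of rows with $N_a=0$.
\item Each of the $2(K-1)$ modified entries changes its row's counter by at most one. You determine this in $O(1)$ by comparing the entry's old and new values with the threshold.
\item Adjust $M$ whenever a counter reaches or leaves $0$. The rule stops exactly when $M>0$.
\end{itemize}
This costs $O(K)$ time per round and $O(K)$ extra memory. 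The $\argmax$ in the recommendation is evaluated only once, at $\tau$, and only over rows with $N_a=0$, so it is a one-off cost rather than a per-round one.
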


\begin{proof}
  First of all, instead of maintaining $\hat \Delta_{t,a,b}$ for all pairs $a \neq b$, we can instead track the terms in \eqref{eq:unbiased} for arm $a$ and arm $b$ separately. This requires maintaining $K$ quantities instead of $K(K-1)/2$ for the gap estimates, though the evidence need to be stored separately leading to a $O(K^2)$ storage requirement. Lines~\ref{line:params} and~\ref{line:ev.n.gap} mention all $K(K-1)$ ordered distinct pairs. But due to the indicators in the evidence \eqref{eq:z} and gap estimates \eqref{eq:unbiased}, only the $2(K-1)$ pairs containing $A_t$ need updating in line~\ref{line:ev.n.gap}, and only their parameters need to be computed in line~\ref{line:params}. The weights and parameters can be computed in $O(K)$ work as described in Appendix~\ref{sec:compute}.
\end{proof}

\subsection{Interpretable Bounds}
While Theorem~\ref{thm:sample complexity} and $D^*(\bm \Delta)$ provide an accurate quantification of the expected stopping time, it is not apparent what the scaling of critical parameters $\sigma^2$, $U$ and the gaps $\bm \Delta$ actually is. Therefore, we present interpretable bounds on the sample complexity for our algorithm in this section by lower and upper bounding $D^*(\bm \Delta)^{-1}$.

\begin{restatable}{retheorem}{lemESbound}\label{thm:ESbound}
  Define the second best action by $b^* = \argmin_{b \neq\astar} \Delta_b$. Then $D^*(\bm \Delta)$ as defined in \eqref{eqn:Dstar} is sandwiched by (where the lower bound $(\star)$ holds whenever $\max_a \Delta_a \leq \frac{1}{2}\gamma U$, with $\gamma \coloneq 3(\coth(1) - 1) \approx 0.94$)
  \[
    2 (\sigma^{2} + \gamma U^2) \dell{
      \frac{1}{\Delta_{{b^*}}^2} + \sum_{a \in [K]\setminus \astar}\frac{1}{\Delta_a^2}
    }
    ~\stackrel{(\star)}{\leq}~
    \frac{1}{D^*(\bm \Delta)}
    ~\leq~
    4 (\sigma^{2} + U^2)\dell{
      \frac{1}{\Delta_{{b^*}}^2} + \sum_{a \in [K]\setminus \astar}\frac{1}{\Delta_a^2}
    }
    .
  \]
\end{restatable}
The proof of Theorem~\ref{thm:ESbound} is based on technical calculation to derive a term that is similar to the result of \citet{garivier16optimalBAI} and indeed, we obtain essentially the same upper bound on $D^*(\bm \Delta)^{-1}$ as they do, except for the presence of $U$. This is perhaps not surprising as illustrated by our lower bounds presented in the next Section~\ref{sec:lower bounds}, that also feature the factor $U$ on the sample complexity of any $\delta$-correct algorithm. The two sides of Theorem~\ref{thm:ESbound} differ by a factor of $2/\gamma \approx 2.1$. Its lower bound on $1/D^*(\bm \Delta)$ only holds if $2 \max_a \Delta_a \le \gamma U$, i.e., if the adversary has comparatively more power for shifting; if $\max_a \Delta_a = 2U$ the range of the shifts is constant over time. \algname\ recovers the optimal bound in the BAI setting and we leave showing an exact graceful degradation of performance to future work.

\section{Lower bounds} \label{sec:lower bounds}
In this section, we prove a lower bound on the expected number of samples any $\delta$-correct algorithm for the shifting means setting introduced in Section~\ref{sec:setup} will need to request. We define our lower-bound bandit instances $\mathcal B$ as follows: Given a gap vector $\bm \Delta$ and a shift distribution\footnote{For deterministic $B_t$, one has to assume that the learner has full knowledge of $B_t$, just like how the learner has to assume that an adversary has full knowledge of the learning algorithm. In that case, the learner can simply subtract $X_t - B_t$ and they will recover the standard BAI setting, where the lower bounds feature no dependency on $U$.} for $B_t$, we sample the potential rewards from a normal distribution $X_{t, a}|B_t \sim \Ncal(B_t - \Delta_a, \sigma^2)$ with fixed (and known) variance $\sigma^2$. To ensure that $\mathcal B$ fulfills Assumption~\ref{ass:main}, the shift distribution has to be bounded, that is $B_t \in [\max_a \Delta_a - U, U]$ with probability 1.
The main challenge of the lower bound is to choose a shift distribution that makes the problem as hard as possible. As in the lower bound of \citet{garivier16optimalBAI}, the hardness of a bandit instance is measured by how well a single reward distinguishes two instances with a different best arm. That is, informally, the expected stopping time $\E[\tau]$ is determined by the fraction of pulls $w_a$ that any algorithm allocates to arm $a$ and a one-round KL divergence
\begin{align}\label{eqn:lower informal}
  \E[\tau] ~\gtrsim~ \logdelta \frac{1}{ \sum_{a = 1}^K w_a\, \KL{\epsilon + B}{\Delta_a - \tilde \Delta_a + \epsilon + B} }~,
\end{align}
for any $\tilde{\bm \Delta}$ with a different best arm than $\bm \Delta$ (full details in Theorem~\ref{thm:lower bound}/Appendix~\ref{app:lower bounds}). When picking $B = 0$, we exactly recover the standard lower bound for Gaussian bandits \citep{garivier16optimalBAI}. The goal now becomes to leverage the shift $B$ to make this KL divergence as small as possible while respecting the boundedness required by Assumption~\ref{ass:main}.

For small $\triangle \coloneq \Delta_a - \tilde \Delta_a$, the KL divergence between nearby members of a smooth location family is governed by its Fisher information. Writing $I(X) \coloneq \int q'(x)^2 / q(x) \, d x$ for the Fisher information of a random variable $X$ with density $q$, we have
\begin{align*}
  \KL{\epsilon + B}{\triangle + \epsilon + B} ~\approx~ \frac{\triangle^2}{2}\, I(\epsilon + B) ~<~ \frac{1}{2} \frac{\triangle^2}{ \sigma^2 + 1 / I(B) }~,
\end{align*}
where we plugged in $I(\epsilon) = 1/\sigma^2$ for the normal distribution, the critical term remaining being the Fisher information of the shifts $I(B)$; the details can be found in the proof of Lemma~\ref{lem:kl inflate} in Appendix~\ref{app:lower bounds}.

Thus for our shifting distribution, we seek to minimize the Fisher information. The minimizer on a closed interval $[-R, R]$ is given by a \emph{raised cosine distribution}
\begin{equation}\label{eqn:hardshiftdist}
    p(B') ~=~
    \begin{cases}
     \frac{1}{R} \cos^2\!\Big( \frac{\pi B' }{2 R} \Big) & \text{if } |B'| \leq R \\
     0 & \text{otherwise}~,
    \end{cases}
\end{equation}
which achieves $I(B') = \pi^2 / R^2$ for $B' \sim p$. Overall, that allows us to bound the KL as follows:
\begin{restatable}{relemma}{lemklinflate}\label{lem:kl inflate}
  Fix $\sigma^2 > 0$ and $R > 0$. Let $\epsilon \sim \Ncal(0, \sigma^2)$ and $B' \sim p$ as in Equation~\eqref{eqn:hardshiftdist} be independent. There exists $d > 0$, depending only on $\sigma^2$ and $R$, such that for all $|\triangle| \le d$,
  \begin{align*}
    \KL{\epsilon + B'}{\triangle + \epsilon + B'} ~\le~ \frac{\triangle^2}{2 (\sigma^2 + R^2 / \pi^2)}~.
  \end{align*}
\end{restatable}
The proof can be found in Appendix~\ref{app:lower bounds}. Formalizing Equation~\ref{eqn:lower informal} and combining it with the last lemma yields the main result of this section:

\begin{restatable}{retheorem}{thmlowerbound} \label{thm:lower bound}
  Fix $U > 0$, $\sigma^2 > 0$, and fix any algorithm that is $\delta$-correct for all bandit instances for the given $U$ and $\sigma^2$. There exists a set of gaps $\bm \Delta$ and a shift distribution, such that the following lower bound holds:
    \begin{align*}
      \liminf_{\delta \to 0} \frac{\E[\tau]}{\logdelta} ~\ge~ 2 \l( \sigma^2 + \frac{(U - \max_a \Delta_a / 2)^2}{\pi^2} \r) \l( \frac{1}{\Delta_{{b^*}}^2} + \sum_{a \in [K]\setminus \astar}\frac{1}{\Delta_a^2} \r)~.
    \end{align*}
\end{restatable}
Comparing with the upper bound $4(\sigma^2 + U^2)$ of Theorem~\ref{thm:ESbound} shows that, for gaps that are small relative to $U$, \algname\ is optimal up to a factor of at most $2\pi^2 \approx 20$ and approaches $2$ when $\sigma^2$ dominates $U^2$.

\section{Numerical Experiments}\label{sec:numerics}
We consider a $K=4$ armed bandit instance\footnote{Full source code: \url{https://github.com/LukasZierahn/Shifting-Means-Experiments}} with Bernoulli %
rewards and fixed mean gap vector
$\vec \Delta ~=~ \begin{bmatrix}
    0.0&  0.05&  0.07&  0.1
  \end{bmatrix}$,
which coincides with the gaps of the stationary bandit $\vec \mu_1 = [0.5, 0.45, 0.43, 0.4]$ used in the experiments of \citet{garivier16optimalBAI}. We work with adversarially shifted means $\mu_{t,a} = B_t - \Delta_a$, where we draw the per-round common shifts $B_t$ i.i.d.\ from the uniform distribution over the range $[0.1, 1]$. Accordingly, we set $U=1$
and $\sigma^2=1/4$. We solve \eqref{eqn:Dstar} to obtain optimal drift and weights
\[
  D^* ~=~ \frac{1}{4965.74}
  \qquad
  \text{and}
  \qquad
  \vec w^* ~=~ \begin{bmatrix}
    0.4168&  0.3896&  0.1363&  0.0572
  \end{bmatrix}
  .
\]
Our optimal weights are to first order the same as those reported by \citet{garivier16optimalBAI} meaning the shifts only have a small influence on the sampling probabilities in this case. We also display the optimal values for the evidence parameters in Table~\ref{tab:values}. As the gap estimates only change slowly, we speed up the computation by recomputing the $\bm w_t$ only every 100 timesteps. Table~\ref{tab:stopping times} and Figure~\ref{fig:ism_e-1_delta} in Appendix~\ref{app:numerical} show the expected and realized stopping times of \algname\ over 1000 iterations (All experiments were run on a personal desktop (CPU: Intel i7-10700K) with a total compute time of less than 24h.). We can see how the additional fraction of samples between empirical and predicted stopping time (Theorem~\ref{thm:sample complexity}) vanishes as $\delta$ shrinks.

Since we are using the exact same setup as \citet{garivier16optimalBAI} (modulo shifts), we can compare numbers directly. The mean stopping time with $\delta = \frac{1}{10}$ for TAS is 3968 or 4052 (averaged over 3000 iterations) depending on the tracking strategy, while the empirical stopping time of \algname\ is between [28 085, 29 470] with 95\% confidence.

Figure~\ref{fig:ism_e-1_delta} also shows that the algorithm is at least empirically robust to misspecification of $U$ as all iterations over different $U$ and $\delta$ values find the correct arm, except for 4 out of the 1000 runs with $U = \frac{1}{4}$.

\begin{table}
  \centering
  \begin{tabular}{lrrrrr}
    $\delta$ & $\frac{1}{D^*} \ln \frac{1}{\delta}$ & $\frac{1}{D^*} \ln \frac{K-1}{\delta}$ & Lemma~\ref{lem:oracle} & Empirical & Empirical Overhead \\
    \hline
    $10^{-1}$ &  11\,434 &  16\,889 &  69\,702 & [28\,085, 29\,470] & [145.62\%, 157.74\%]\\
    $10^{-10}$ & 114\,340 & 119\,796 & 211\,870 & [140\,868, 143\,920] & [23.20\%, 25.87\%]\\
    $10^{-100}$ & 1\,143\,403 & 1\,148\,858 & 1\,384\,182 & [1\,199\,222, 1\,207\,930] & [4.88\%, 5.64\%]
  \end{tabular}
  \caption{Theoretical predictions and practical measurement of sample complexity for the bandit as laid out in Section~\ref{sec:numerics} at different $\delta$. For Lemma~\ref{lem:oracle} we took as the variance proxy $\rho^2$ the largest worst-case variance in Table~\ref{tab:values}. The empirical value is given as a 95\% confidence interval of 1000 samples. The ratio is the real overhead not predicted by theory, that is, it is the relative percentage difference between the first and second last entry of the table.}\label{tab:stopping times}
\end{table}

\section{Discussion \& Conclusion}\label{sec:conclusion}
We introduced the Shifting Means setting and demonstrated how previous approaches using the GLRT fail in this setting. Then we developed \algname\ and showed that it is $\delta$-correct and enjoys a sample complexity guarantee of $4 (\sigma^2 + U^2) \big( \Delta_{b^*}^{-2} + \sum_{a \in [K] \setminus \astar} \Delta_a^{-2} \big) \logdelta$ (Theorem~\ref{thm:ESbound}). We demonstrated the practicality of \algname, and empirically demonstrate its robustness in misspecification of $U$. Furthermore, we developed a novel construction for our lower bound, which matches the upper bound up to a constant factor (Theorem~\ref{thm:lower bound}) of at most $2\pi^2 \approx 20$ for gaps that are small relative to $U$.

Shifting Means is a strict generalization of the traditional BAI setting, though our safety argument holds for the even larger class of problems where only the identity of the best arm is fixed each timestep. That is, we are safe when the mean rewards of all arms change arbitrarily over a bounded range, as long as the best arm always has the highest expected payoff. However, approaching that setting from the power side is highly non-trivial. Our current approach is based around plugging in the estimated gaps, which are then not fixed any more, will change adversarially over time, and can even vanish. Showing, for example, convergence guarantees based on the time averaged gap remains a major challenge. We leave that very interesting setting open for future work.

\begin{ack}
  DvdH was supported by Netherlands Organization for Scientific Research (NWO), grant number VI.Veni.242.143 (\url{https://doi.org/10.61686/SCGQQ55380}).
\end{ack}

\bibliographystyle{plainnat}
\bibliography{bib}

\appendix

\section{TAS and GLRT fail in the Shifting Means setting}\label{sec:warmupimpossibilty}
This appendix contains two parts, first we sketch some informal arguments to illustrate why the celebrated Track-and-Stop (TAS) \citep{garivier16optimalBAI} algorithm may no longer be $\delta$-correct in the Shifting Means setting and second, we show that a na\"{\i}ve adaptation of TAS has no power.

\subsection{TAS Fails}
First we will argue that TAS, when tuned for the classical $K$-armed bandit with Gaussian arms of unknown means and unit variance (henceforth referred to as the 1-Gaussian bandit), may incur more than a $\delta$ fraction of errors when applied to a Shifting Means 1-Gaussian bandit. Our reasoning proceeds in two steps, which we outline below.\\

\noindent{\bf Step 1. (Informal)} Consider a $1$-Gaussian bandit with 2 arms. Since the adversary can randomize while selecting the shifts, if it randomizes according to a $0$-mean Gaussian with unit variance, then the samples that the algorithm observes from any arm, say $a$, are draws from a Gaussian with same mean, but inflated variance ($2$ in this case). Thus, it suffices to show that TAS tuned for 1-Gaussian bandits, when run on $\sigma^2$-Gaussian bandits is not $\delta$-correct.

Note that randomizing according to a Gaussian distribution is not a valid scheme for the adversary since the resulting mean may no longer respect the bounds of $[0, U]$. But for large $U$, this is a good model to get intuitions. \\

\noindent{\bf Step 2. (Informal)} Suppose we observe samples $X_1, X_2 \dots $ that are distributed iid according to $N(m,1)$. We want to test the following hypothesis:
\[ \mathcal H_0: m=0 \qquad \text{vs} \qquad \mathcal H_1: m = 1, \numberthis\label{eq:hypothesis} \]
sequentially using a $\delta$-correct, power-1 stopping rule $\tau_\delta$. Formally, this means that the stopping rule should satisfy the following: if samples come from $\mathcal H_0$, then $\Pr_0[\tau_\delta < \infty] \le \delta$. This corresponds to $\delta$-correctness requirement. Next, when the samples come from $\mathcal H_1$, then $\Pr_1[\tau_\delta < \infty] = 1$. This corresponds to power-1 part of the requirement.

For this problem, the GLRT rule (tuned to 1-Gaussian distributions) reduces to the following
\[ \tau_\delta = \min\left\{n: e^{nS_n - \frac{n\theta^2}{2}} \ge \frac{1}{\delta} \right\}, \quad \text{ where }\quad S_n:= \sum\limits_{i=1}^n X_i. \]
$\delta$-correctness easily follows from an application of Ville's inequality \citep{ville1939etude}.

We now show that the above GLRT stopping rule is not $\delta$-correct for testing problem~\eqref{eq:hypothesis} with data $X_i$ i.i.d.\ according to $N(m,\sigma^2)$, for $\sigma > 1$. To this end, consider the following. Note that the following probabilities are evaluated under $X_i \sim N(0, \sigma^2)$.

\begin{align*}
    \Pr\left[ \tau_\delta < \infty \right]&=\Pr\left[\exists n \in \nats: \l\{\theta S_n - \frac{\theta^2 n}{2} \ge \log \frac{1}{\delta} \r\} \right]\\
    &=\Pr\left[\exists n \in \nats: \l\{ S_n \ge  \frac{\theta n}{2} + \frac{\log \frac{1}{\delta}}{\theta} \r\} \right]\\
    &\approx\Pr\left[\exists t \in \reals_+: \l\{ S_t \ge  \frac{\theta t}{2} + \frac{\log \frac{1}{\delta}}{\theta} \r\} \right]\\
    &= \delta^\frac{1}{\sigma^2} > \delta. \tag{$\sigma > 1$}
\end{align*}

\subsection{Na\"{\i}ve TAS Extension}
Since the TAS stopping rule may fail in the Shifting Means setting, a natural question is whether a straightforward adaptation of the Generalized Likelihood Ratio Test (GLRT) could serve as a valid stopping rule. As the name suggests, the GLRT is based on the likelihood ratio of the null hypothesis that action $a$ is not better than action $b$ against the alternative hypothesis that action $b$ is better than $a$. In our setting, we can write the GLRT based test statistic at time $t$ as
\begin{align}\label{eqn:glrtz}
  \glrtz_{a, b}(t) &\coloneq \log \l( \frac{\sup_{\substack{\mubar' \in \reals^{T \times \Acal} \\ \mu'_{t, a} > \mu'_{t, b}}} \prod_{s = 1}^t \mathbb{P}(X_s | \mu'_{s, A_s})}{\max_{\substack{\mubar' \in \reals^{T \times \Acal} \\ \mu'_{t, a} \leq \mu'_{t, b}}} \prod_{s = 1}^t \mathbb{P}(X_s | \mu'_{s, A_s})} \r)~.
\end{align}

Lemma~\ref{lem:glrt} shows that this is not the case: in the Shifting Means setting, such a GLRT-based rule will, with high probability, fail to stop altogether. On a high level, the GLRT fails at the timestep $t$ as there are many more parameters ($Kt$ effectively independent parameters) to estimate while only having access to at most $t$ data points, which leads to catastrophic overfitting.

\begin{restatable}{relemma}{lemmaglrt} \label{lem:glrt}
  The GLRT fails to stop for shifting means as the test statistic is uniformly $0$ at all times, irrespective of the observed data.
\end{restatable}

\begin{proof}
  Let $A_s$ be any action sampled at time-step $s$ and $X_s \in  \reals$ be any reward value that has non-zero probability under arm distribution of arm $A_s$ at time $s$, for all $s \in [t]$. 

  We start by investigating the denominator of the GLRT, as it is defined in~\eqref{eqn:glrtz}, given by
  \begin{align*}
    \max_{\substack{\mubar' \in \reals^{T \times \Acal} \\ \mu'_{t, a} \leq \mu'_{t, b}}} \prod_{s = 1}^t \mathbb{P}(X_s | \mu'_{s, A_s}) = \prod_{s = 1}^t \max_{\substack{\mubar' \in \reals^{\Acal} \\ \mu'_{a} \leq \mu'_{b}}} \mathbb{P}(X_s | \mu'_{A_s}) = \prod_{s = 1}^t \max_{\mu' \in \reals} \mathbb{P}(X_s | \mu')~,
  \end{align*}
  where the first equality follows as $\mu_{t, c}$ only affects round $t$ and the second equality follows by setting $\mu'_a = \mu'_b$ to fulfill the constraint and then deleting all $\mu'_c$ for $c \neq A_t$, as they have no influence on the optimization problem. The numerator follows the same template
    \begin{align*}
    \sup_{\substack{\mubar' \in \reals^{T \times \Acal} \\ \mu'_{t, a} > \mu'_{t, b}}} \prod_{s = 1}^t \mathbb{P}(X_s | \mu'_{s, A_s}) = \prod_{s = 1}^t \sup_{\substack{\mubar' \in \reals^{\Acal} \\ \mu'_{a} > \mu'_{b}}} \mathbb{P}(X_s | \mu'_{A_s}) = \prod_{s = 1}^t \max_{\mu' \in \reals} \mathbb{P}(X_s | \mu')~,
  \end{align*}
  where the same solution of setting $\mu'_a = \mu'_b$ is on the edge of the feasible set and thus valid for the supremum.

  Putting both of these equations together gives that
  \begin{align*}
      \glrtz_{a, b}(t) &\coloneq \log \l( \frac{\sup_{\substack{\mubar' \in \reals^{T \times \Acal} \\ \mu'_{t, a} > \mu'_{t, b}}} \prod_{s = 1}^t \mathbb{P}(X_s | \mu'_{s, A_s})}{\max_{\substack{\mubar' \in \reals^{T \times \Acal} \\ \mu'_{t, a} \leq \mu'_{t, b}}} \prod_{s = 1}^t \mathbb{P}(X_s | \mu'_{s, A_s})} \r)
      = \log \l( \frac{\prod_{s = 1}^t \max_{\mu' \in \reals} \mathbb{P}(X_s | \mu')}{\prod_{s = 1}^t \max_{\mu' \in \reals} \mathbb{P}(X_s | \mu')} \r)
      = 0~.
  \end{align*}
\end{proof}

\section{Upper Bound - Stopping and Recommendation Rules - Details}\label{app:safety}

\safetyconvex*

\begin{proof}
  We first show that $\Phi$ is jointly convex in $w_a, w_b, \lambda, \alpha$, and $\beta$. We define the function $f$ with constants $c_1, c_2$ where $c_2 > 0$ as
  \begin{align*}
    f(x, y) = \exp \l( x c_1 - y + x^2 c_2 \r)~,
  \end{align*}
  and observe that this function is jointly convex in $x$ and $y$. The perspective transform of a function is defined as $g(x, y, w) \to w f(x / w, y / w)$ and is jointly convex in $x, y$ and $w \ge 0$ if the original function $g$ is jointly convex in $x, y$. Noting that $f$ is jointly convex in its parameters and then applying the perspective transform yields a function jointly convex in $x, y$, and $w$
  \begin{align*}
    g(x, y, w) \coloneq w \exp \l( \frac{x c_1 - y}{w} + \frac{x^2}{w^2} c_2 \r)~.
  \end{align*}

  Finally, we are in a position to show that $\Phi$ is convex after recalling that the sum and max of convex functions is also convex and recognizing $g$ twice
  \begin{align*}
    \Phi(w_a, w_b, \lambda, \alpha, \beta) &= (1 - w_{a} - w_{t, b}) + \max_{u \in \{-U, U\}}
    \underbrace{w_{a} \exp \l( \frac{\lambda u - \alpha}{w_{a}} + \frac{\lambda^2}{w_{a}^2} \frac{\sigma^2}{2} \r)}
    _{g(\lambda, \alpha, w_{a})}\\
    &\quad + 
    \underbrace{w_{b} \exp \l(\frac{-\lambda u - \beta}{w_{b}} + \frac{\lambda^2}{w_{b}^2} \frac{\sigma^2}{2} \r)}
    _{g(\lambda, \beta, w_{b})}~,
  \end{align*}
  with $c_1 = u$ (or $c_1 = - u$) and $c_2 = \sigma^2 / 2 > 0$.

  For the first claim, we start by considering the expectation of the RHS for any given $\mathcal B$ and apply a tower rule
  \begin{align*}
    \max_{\mathcal B: Assumption~\ref{ass:main} \text{ holds}, \; \Delta_{a,b} \le 0}~
    \ex_t\sbrl{e^{z_{t,a,b}}}
    = \max_{\mathcal B: Assumption~\ref{ass:main} \text{ holds}, \; \Delta_{a,b} \le 0}~
    \ex_t \Big[ \ex_t \l[ e^{z_{t,a,b}} \given* B_t \r] \Big]~.
  \end{align*}
  
  We isolate the innermost expectation and write out the definition of the evidence (Equation~\ref{eq:z}), resolve the expectation over $A_t$, and follow it up with applying Assumption~\ref{ass:main}\ref{it:b}
  \begin{align*}
    &\E_t \l[ e^{z_{t,a,b}} \given* B_t \r]
    \\
    &= \E_t \l[ e^{ \frac{\indicator_{t, a}}{w_{t, a}} \cdot \dell{\lambda_{t, a, b} X_t - \alpha_{t, a, b}}
    + \frac{\indicator_{t, b}}{w_{t, b}} \cdot \dell{
    -\lambda_{t, a, b} X_t - \beta_{t, a, b}} } \given* B_t \r]\\
    &= (1 - w_{t, a} - w_{t, b}) + w_{t, a} \E_t \l[ e^{ \frac{\lambda_{t, a, b} X_t - \alpha_{t, a, b}}{w_{t, a}} } \given* B_t, A_t = a \r]\\ 
    &\qquad \qquad + w_{t, b} \E_t \l[ e^{ \frac{- \lambda_{t, a, b} X_t - \beta_{t, a, b}}{w_{t, b}} } \given* B_t, A_t=b \r]\\
    &\leq (1 - w_b - w_a) + w_{t, a} e^{ \frac{\lambda_{t, a, b} \mu_{t, a} -\alpha_{t, a, b}}{w_{t, a}} + \frac{\lambda_{t, a, b}^2}{w_{t, a}^2} \frac{\sigma^2}{2} } + w_{t, b} e^{ \frac{ -\lambda_{t, a, b} \mu_{t, b} - \beta_{t, a, b}}{w_{t, b}} + \frac{\lambda_{t, a, b}^2}{w_{t, b}^2} \frac{\sigma^2}{2} }~.
  \end{align*}
  The above bound holds for all $\bm \mu_t$ where $\Delta_{b} \leq \Delta_{a}$, which we now worst case over. First we recognize that the max of an expectation is achieved for a non-random value of $\bm \mu$, then that the value is increasing in $u_a$ but decreasing in $u_b$ and since $u_a \leq u_b$, we can equivalently optimize over the point $u$ where they meet
  \begin{align*}
    &\max_{\mathcal B: Assumption~\ref{ass:main} \text{ holds}, \; \Delta_{a,b} \le 0} \E_t \l[ w_{t, a} e^{\frac{\lambda_{t, a, b} \mu_{t, a} -\alpha_{t, a, b}}{w_{t, a}} + \frac{\lambda_{t, a, b}^2}{w_{t, a}^2} \frac{\sigma^2}{2} } + w_{t, b} e^{ \frac{ -\lambda_{t, a, b}\mu_{t, b} - \beta_{t, a, b}}{w_{t, b}} + \frac{\lambda_{t, a, b}^2}{w_{t, b}^2} \frac{\sigma^2}{2} } \r]\\
    &= \max_{u_a \leq u_b, \; |u_a|, |u_b| \leq U} w_{t, a} e^{\frac{\lambda_{t, a, b} u_a -\alpha_{t, a, b}}{w_{t, a}} + \frac{\lambda_{t, a, b}^2}{w_{t, a}^2} \frac{\sigma^2}{2} } + w_{t, b} e^{ \frac{ -\lambda_{t, a, b}u_b - \beta_{t, a, b}}{w_{t, b}} + \frac{\lambda_{t, a, b}^2}{w_{t, b}^2} \frac{\sigma^2}{2} } \\
    = &\max_{u \in [-U, U]} w_{t, a} e^{\frac{\lambda_{t, a, b} u -\alpha_{t, a, b}}{w_{t, a}} + \frac{\lambda_{t, a, b}^2}{w_{t, a}^2} \frac{\sigma^2}{2} } + w_{t, b} e^{ \frac{-\lambda_{t, a, b} u - \beta_{t, a, b}}{w_{t, b}} + \frac{\lambda_{t, a, b}^2}{w_{t, b}^2} \frac{\sigma^2}{2} }~,
  \end{align*}
  We obtain the final result as the maximizer of a convex function is on the boundary of the range and we can recognize $g$ twice more to see the value being convex in $u$.
\end{proof}

\fullsafety*

\begin{proof}
  This proof consists of two parts, first we construct a test for two arms and then we do an appropriate union bound. So, fix $a, b \in \Acal$, with $a \neq b$ and $\Delta_{a, b} \leq 0$. Since arm $a$ is better than arm $b$ we want to show it is unlikely for our evidence to grow large at any point in time. Specifically, we require that for certainty $\delta'$, it holds that
  \begin{align*}
    \Prob \l( \exists t: Z_{t, a, b} \geq \logdeltadash \r) = \Prob \l( \exists t: \exp \l( Z_{t, a, b} \r) \geq \frac{1}{\delta'} \r).
  \end{align*}
  The main tool in this proof is Ville's inequality, which states that
  \begin{align*}
    \Prob \l( \exists t: \Mcal_t \geq \frac{1}{\delta'} \r) \leq \delta'
  \end{align*}
  for any super-martingale $\Mcal_t$ with $\Mcal_0 = 1$, in our case $\Mcal_t \coloneq \exp(Z_{t, a, b})$ with $Z_{t, a, b} \coloneq \sums z_{s, a, b}$. Since $\exp(Z_{0, a, b}) = \exp(0) = 1$, the only thing that is left to do for us is to check that $\exp(Z_{t, a, b})$ is a super-martingale, for which most of the heavy lifting has been done already. We calculate directly
  \begin{align*}
    \E_t[\exp(Z_{t, a, b})] &= \exp(Z_{t - 1, a, b}) \E_t[\exp(z_{t, a, b})]\\
    &\leq \exp(Z_{t - 1, a, b}) \Phi (w_{t,a}, w_{t,b}, \lambda_{t,a,b}, \alpha_{t,a,b}, \beta_{t,a,b})\\
    &\leq \exp(Z_{t - 1, a, b})~,
  \end{align*}
  where the first equality follows from the definition of $Z_{t, a, b}$, the first inequality follows by Lemma~\ref{lem:safetyconvex}, which we can apply as we assumed that $\Delta_{a, b} \leq 0$ and the second inequality is by assumption on $\Phi$ and the relevant parameters.

  For the second part of the proof, spell out our stopping condition: In order to recommend a wrong action $\atilde \neq \astar$, there must be a timestep $t$ where $\atilde$ beats all other actions $b$ and especially including $\astar$. Thus by direct calculation we can show that
  \begin{align*}
    \Prob \big( \text{$\tau < \infty$ and $\atilde \neq \astar$} \big) &= \Prob \l( \exists \atilde \neq \astar, \; \exists t, \; \forall b \neq \atilde \colon Z_{t, \atilde, b} \geq \log \l( \frac{K - 1}{\delta} \r) \r)\\
    &\leq \Prob \l( \exists \atilde \neq \astar \; \exists t \colon Z_{t, \atilde, \astar} \geq \log \l( \frac{K - 1}{\delta} \r) \r)\\
    &\leq \sum_{\atilde \neq \astar} \Prob \l( \forall t \colon Z_{t, \atilde, \astar} \geq \log \l( \frac{K - 1}{\delta} \r) \r)\\
    &\leq \delta
  \end{align*}
  where the equality is from the stopping condition as outlined in Equation~\ref{eq:stopping.and.recommendation}, the first inequality is relaxing the stopping condition to a sub-optimal action $\atilde$ beating the optimal action $\astar$, as opposed to a sub-optimal action beating all other actions including the optimal action. The second inequality follows by a union bound and the final inequality follows by applying the first half of this proof with $\delta' = (K - 1)/\delta$.
\end{proof}

\begin{restatable}{relemma}{exactphi} \label{lem:exact alpha beta}
  The optimisation problem $D$ from \eqref{eq:Dtab} is maximised given fixed $\lambda \ge 0$ at
  \begin{align*}
    \alpha^*(w_a, w_b, \lambda)
    &~\coloneq~
    \frac{\lambda^{2} \sigma^{2}}{2 w_{a}}
    - w_{a} \log \frac{\left(w_{a} + w_{b}\right) \sinh{\left(\frac{U \lambda}{w_{b}} \right)}}{
      w_{a} \sinh \left(U \lambda \left(
          \frac{1}{w_{a}}
          + \frac{1}{w_{b}}
        \right)
      \right)
    }
  \\
  \beta^*(w_a, w_b, \lambda)
  &~\coloneq~
  \frac{\lambda^{2} \sigma^{2}}{2 w_{b}}
  - w_{b} \log \frac{\left(w_{a} + w_{b}\right) \sinh{\left(\frac{U \lambda}{w_{ a}} \right)}}{w_{ b} \sinh \left(U \lambda \left(
          \frac{1}{w_{a}}
          + \frac{1}{w_{b}}
        \right)
      \right) }
  ,
  \end{align*}
  (where $\sinh(x) \coloneq \frac{e^x-e^{-x}}{2}$) so that in particular
  \begin{align*}
    D(w_a, w_b, \Delta)
    ~=~
    \max_{\lambda \ge 0}~ \lambda \Delta - \alpha^*(w_a, w_b, \lambda) - \beta^*(w_a, w_b, \lambda)
    .
  \end{align*}
\end{restatable}

\begin{proof}
  Fix any $\lambda, w_a, w_b \geq 0$, we aim to find $\alpha$ and $\beta$ optimally. Since $\Phi(w_a, w_b, \lambda, 0, 0) \geq 1$ and $\Phi$ and the objective value is decreasing in both $\alpha$ and $\beta$ the optimal $\alpha$ and $\beta$ will fulfill the safety constraint with equality, that is $\Phi(w_{a}, w_{b}, \lambda, \alpha, \beta) = 1$. Spelling that constraint out with $\Phi$ from Equation~\eqref{eqn:def Phi} gives
  \begin{align*}
    \max_{u \in \{-U, U\}} w_{a} e^{ \frac{\lambda u - \alpha}{w_{a}} + \frac{\lambda^2 \sigma^2}{2 w_{a}^2}} + w_{b} e^{\frac{-\lambda u - \beta}{w_{b}} + \frac{\lambda^2 \sigma^2}{2 w_{b}^2}} = w_a + w_b~.
  \end{align*}
  We will argue that at optimality both cases are saturated, i.e.\
  \begin{subequations}\label{eq:two.u}
  \begin{align}
    w_{a} e^{ \frac{\lambda U - \alpha}{w_{a}} + \frac{\lambda^2}{w_{a}^2} \frac{\sigma^2}{2} } + w_{b} e^{ \frac{-\lambda U - \beta}{w_{b}} + \frac{\lambda^2}{w_{b}^2} \frac{\sigma^2}{2} } &= w_a + w_b\\
    w_{a} e^{ \frac{-\lambda U - \alpha}{w_{a}} + \frac{\lambda^2}{w_{a}^2} \frac{\sigma^2}{2} } + w_{b} e^{ \frac{\lambda U - \beta}{w_{b}} + \frac{\lambda^2}{w_{b}^2} \frac{\sigma^2}{2} } &= w_a + w_b~.
  \end{align}
\end{subequations}
Solving for $\alpha,\beta$ uniquely results in the expressions in the lemma. It remains to check that this is optimal. We will do so by witnessing non-negative Lagrange multipliers for the two constraints \eqref{eq:two.u} and checking the KKT conditions. That is, we solve for $p$ and $q$ so that the weighted gradient sum of \eqref{eq:two.u} cancels the gradient of the objective \eqref{eq:Dtab}, which is $(-1,-1)$. This results in
\[
  p
  ~=~
  \frac{w_a w_b}{w_a+w_b} \left(
    \frac{\frac{1}{w_b}}{1-e^{-\frac{2 \lambda  U}{w_b}}}
    - \frac{\frac{1}{w_a}}{e^{\frac{2 \lambda  U}{w_a}}-1}
  \right)
  \quad\text{and}\quad
  q
  ~=~
  \frac{w_a w_b}{w_a+w_b} \left(
    \frac{\frac{1}{w_a}}{1-e^{-\frac{2 \lambda  U}{w_a}}}
    - \frac{\frac{1}{w_b}}{e^{\frac{2 \lambda  U}{w_b}}-1
    }\right)
  .
\]
  It then remains to check that both $p$ and $q$ are non-negative. As this is symmetric, we do it for $p$. We use that $e^x \ge 1+x$ twice, to find
  \[
    p
    ~\ge~
    \frac{w_a w_b}{w_a+w_b} \left(
    \frac{\frac{1}{w_b}}{\frac{2 \lambda  U}{w_b}}
    - \frac{\frac{1}{w_a}}{\frac{2 \lambda  U}{w_a}}
  \right)
  ~=~
  0.
  \]
\end{proof}

\section{Upper Bound - Maximin Drift - Details}\label{app:drift}

We will show in Lemma~\ref{lem:exact alpha beta} that the optimisation over $\alpha$ and $\beta$ can be performed analytically. In fact, we will show that for each $\lambda \ge 0$, abbreviating $h(x) \coloneq \ln \frac{\sinh x}{x}$ and $W \coloneq \frac{w_a w_b}{w_a + w_b}$, we have
\begin{equation}\label{eq:h2}
  \min_{\substack{
      \alpha, \beta \in \reals
      \\
      \Phi(w_{a}, w_{b}, \lambda, \alpha, \beta) \le 1
    }}
  \alpha + \beta
  =
  \frac{\lambda^2 \sigma^2}{2 W} + h_2(\lambda)
\end{equation}

where $h_2(\lambda) \coloneq (w_a+w_b) h\left(\frac{\lambda U}{W}\right) - w_b h\left(\frac{\lambda  U}{w_a}\right) - w_a h\left(\frac{\lambda  U}{w_b}\right)$.

Further progress is not possible in closed-form. Yet both for interpretation and further analysis of $D(w_a,w_b,\Delta)$ the following sandwich is useful:
\begin{lemma}\label{lem:lambda}
  Let $W \coloneq \frac{1}{\frac{1}{w_a} + \frac{1}{w_b}} = \frac{w_a w_b}{w_a + w_b}$ so that $W \in [0,\frac{1}{4}]$, and define the convex functions $h_1 \le h_2 \le h_3$ by $h_1(\lambda) \coloneq 0$, \eqref{eq:h2} and $h_3(\lambda) \coloneq \frac{\lambda^2 U^2}{2 W}$. For $i \in \setl{1,2,3}$, let $g_i$ and $\lambda^*_i$ denote the value and maximiser of the problem $g_i \coloneq \max_{\lambda \ge 0}~ \lambda \Delta - \frac{\lambda^2 \sigma^2}{2 W} - h_i(\lambda)$. Then
  \[
    \frac{\Delta^2 W }{2 \left(\sigma^2+U^2\right)}
    ~=~ g_3
    ~\le~ g_2
    ~\le~ g_1
    ~=~
    \frac{\Delta^2 W}{2 \sigma^2}
    \qquad
    \text{and}
    \qquad
    \frac{\Delta  W}{\sigma^2+U^2}
    ~=~ \lambda_3^*
    ~\le~ \lambda_2^*
    ~\le~ \lambda_1^*
    ~=~ \frac{\Delta W}{\sigma^2}
    .
  \]
\end{lemma}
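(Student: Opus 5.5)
The plan is to compute the two extreme problems in closed form, establish the pointwise sandwich $h_1 \le h_2 \le h_3$, and then deduce the ordering of values and of maximisers from monotonicity.

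\textbf{Step 1: the closed-form extremes.} For $i=1$ and $i=3$ the objective $\lambda\Delta - \lambda^2 c/(2W)$ is a concave quadratic, with $c=\sigma^2$ and $c=\sigma^2+U^2$ respectively. Its maximiser is $\lambda^* = \Delta W/c$ and its value is $\Delta^2 W/(2c)$, which gives the stated formulas. The bound $W\le 1/4$ follows from $w_a+w_b\le 1$ and $w_aw_b\le (w_a+w_b)^2/4$.

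\textbf{Step 2: the pointwise sandwich $0 \le h_2(\lambda) \le \lambda^2U^2/(2W)$.} Write $x=\lambda U/w_a$ and $y=\lambda U/w_b$, so that $\lambda U/W = x+y$. Then $w_a = \lambda U/x$ and $w_b=\lambda U/y$, and after dividing by $\lambda U>0$ the quantity becomes
\[
\frac{h_2(\lambda)}{\lambda U} = \Big(\frac1x+\frac1y\Big)h(x+y) - \frac{h(y)}{x}\cdot\frac{x}{y}\cdot\frac{y}{x}\ \text{-type terms}.
\]
Concretely, $h_2/(\lambda U) = \frac{h(x+y)-h(y)}{x}+\frac{h(x+y)-h(x)}{y}$.

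For the lower bound: $h(t)=\ln(\sinh t/t)$ is even, convex and increasing on $[0,\infty)$ with $h(0)=0$. Each difference quotient is therefore nonnegative, so $h_2 \ge 0$.

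For the upper bound: since $h$ is convex, $\frac{h(x+y)-h(y)}{x}\le h'(x+y)$ and $\frac{h(x+y)-h(x)}{y}\le h'(x+y)$. Using $h'(t)=\coth t - 1/t \le t/3$, this gives $h_2 \le 2\lambda U\,h'(x+y)\le \frac{2}{3}\lambda U(x+y) = \frac{2\lambda^2U^2}{3W}$. That constant is $2/3$ rather than the required $1/2$, so a finer argument is needed. I would instead integrate: $h(x+y)-h(y) = \int_y^{x+y}h'$ and $h'(t)\le t/3$. I would also use the Hoeffding-type bound $h(t)\le t^2/6$, together with $h''\le 1/3$ (note $h'' = 1/t^2 - 1/\sinh^2 t \in[0,1/3]$). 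A second-order Taylor expansion of $h(x+y)$ around $y$ and around $x$ gives
\[
\frac{h(x+y)-h(y)}{x}+\frac{h(x+y)-h(x)}{y}\le h'(y)+h'(x)+\frac{x+y}{6}\le \frac{x+y}{3}+\frac{x+y}{6}=\frac{x+y}{2}.
\]
Multiplying by $\lambda U$ yields exactly $\lambda^2U^2/(2W)$. This sharpening is the main obstacle; if it falls short, I would fall back to the exact identity \eqref{eq:h2} as a log-moment-generating function. That would mean viewing $h_2$ as a cumulant of a bounded shift in $[-U,U]$ and invoking Hoeffding's lemma, which gives variance proxy $U^2$ and hence the bound $\lambda^2U^2/(2W)$.

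\textbf{Step 3: ordering.} Since the objectives satisfy $F_3\le F_2\le F_1$ pointwise, the values satisfy $g_3\le g_2\le g_1$. For the maximisers, $F_2$ is concave because $h_2$ is convex, which I take from the statement or verify via the convexity of $\Phi$ in Lemma~\ref{lem:safetyconvex} and partial minimisation. The functions $h_1$, $h_2 - h_1$ and $h_3-h_2$ should have nonnegative derivatives on $\lambda \ge 0$; for $h_3-h_2$ this is the derivative version of Step 2, using $h'\le t/3$ and $h''\le 1/3$. Given this, the first-order conditions $\Delta = \lambda\sigma^2/W + h_i'(\lambda)$ with increasing right-hand sides ordered $h_1'\le h_2'\le h_3'$ give $\lambda_3^*\le\lambda_2^*\le\lambda_1^*$.
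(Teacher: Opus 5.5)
Your argument is correct, but it follows a different route from the paper. The paper proves a structural fact: $h_2-h_1=h_2$ and $h_3-h_2$ are convex, even and vanish at $0$. The second of these is Lemma~\ref{lem:WOW}, which rests on expanding $x^2 f''(x)$, for $f(x)=\frac{x^2}{6}-h(x)$, as a series of convex terms $\rho(x/(n\pi))$. Both differences are therefore automatically nonnegative and nondecreasing on $\lambda\ge0$, and the orderings of values and maximisers follow at once.

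You instead compare $h_2$ with $h_1$ and $h_3$ directly, for values (Step~2) and for first derivatives (Step~3). You use only $h'\ge0$, $0\le h''\le\frac13$ and $h'(t)\le\frac t3$. This is more elementary and proves exactly what is needed, namely monotonicity of $h_3-h_2$ rather than its convexity. Your appeal to the joint convexity of $\Phi$ (Lemma~\ref{lem:safetyconvex}) and partial minimisation over $(\alpha,\beta)$ also cleanly supplies the concavity of the middle objective, which the paper takes ``by inspection''.

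The derivative step, which you only assert, does go through. With $x=\lambda U/w_a$ and $y=\lambda U/w_b$,
\[
\frac{h_2'(\lambda)}{U}=\frac{x}{y}\bigl(h'(x+y)-h'(x)\bigr)+\frac{y}{x}\bigl(h'(x+y)-h'(y)\bigr)+2h'(x+y).
\]
This is nonnegative because $h',h''\ge0$. It is at most $\frac x3+\frac y3+\frac{2(x+y)}3=x+y=h_3'(\lambda)/U$.

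You should still justify $h''\le\frac13$. One way is termwise from the derivative of Euler's expansion used in Lemma~\ref{lem:Koolen}: $\frac1{t^2}-\frac1{\sinh^2t}=\sum_{k\ge1}\frac{2(\pi^2k^2-t^2)}{(\pi^2k^2+t^2)^2}$, where each term is at most $\frac{2}{\pi^2k^2}$.

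Finally, in Step~2 plain integration, $h(x+y)-h(y)\le\frac{(x+y)^2-y^2}{6}$, already gives the constant $\frac12$; this is exactly the argument of Lemma~\ref{lem:Koolen}. So neither the Taylor refinement nor the Hoeffding fallback is needed. The Hoeffding fallback, being range-based, would in any case only give $h_2\le\frac{(w_a+w_b)^2}{4w_aw_b}h_3$ when applied to the two constraints behind \eqref{eq:h2}.
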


\begin{proof}
  The difference $h_2-h_1 = h_2$ is convex, even and zero at zero by inspection, and the difference $h_3-h_2$ is convex, even and zero at zero by Lemma~\ref{lem:WOW}. Hence the values and maximisers are monotonic as stated.
\end{proof}

\begin{lemma}\label{lem:WOW}
  Fix $\theta \in [0,1]$ and let $h(x) \coloneq \ln \frac{\sinh x}{x}$. This function is convex, even, and zero at $x=0$:
\[
  x ~\mapsto~
  \theta (1-\theta) \frac{x^2}{2}
  - \dell{
    h(x)
    - \theta h(\theta x)
    - (1-\theta) h((1-\theta) x)
  }
  .
\]
\end{lemma}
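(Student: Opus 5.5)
Evenness and vanishing at zero are immediate. Write $h(x)=\ln\frac{\sinh x}{x}$; it is even because $\frac{\sinh x}{x}$ is even, and $h(0)=0$ by continuous extension since $\frac{\sinh x}{x}\to 1$. Every term of the displayed function $f(x)\coloneq\theta(1-\theta)\frac{x^2}{2}-\bigl(h(x)-\theta h(\theta x)-(1-\theta)h((1-\theta)x)\bigr)$ is even and vanishes at $0$. So the content of the lemma is convexity, and I would prove it by showing $f''\ge 0$ directly.

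\textbf{Reduction.} Differentiating $h(x)=\ln\sinh x-\ln x$ twice gives $h''(y)=\frac{1}{y^2}-\frac{1}{\sinh^2 y}$. Hence
\[
f''(x)=\theta(1-\theta)-h''(x)+\theta^3h''(\theta x)+(1-\theta)^3h''((1-\theta)x).
\]
Introduce $k(y)\coloneq y^2h''(y)=1-\frac{y^2}{\sinh^2 y}$, which satisfies $k(0)=0$. Multiplying by $x^2$ and using $\theta^3x^2h''(\theta x)=\theta\,k(\theta x)$, the claim $f''\ge 0$ becomes
\[
k(x)-\theta k(\theta x)-(1-\theta)k((1-\theta)x)~\le~\theta(1-\theta)x^2 .
\]
Now pass to the squared variable: let $K(s)\coloneq k(\sqrt s)$ for $s=x^2\ge 0$. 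Splitting $K(s)=\theta K(s)+(1-\theta)K(s)$, the left-hand side equals
\[
\theta\bigl[K(s)-K(\theta^2 s)\bigr]+(1-\theta)\bigl[K(s)-K((1-\theta)^2s)\bigr].
\]

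\textbf{Conclusion from a Lipschitz bound.} Suppose $K'(s)\le\frac13$ for all $s\ge 0$. Then the expression above is at most
\[
\tfrac13\bigl[\theta(1-\theta^2)+(1-\theta)\bigl(1-(1-\theta)^2\bigr)\bigr]s=\tfrac13\theta(1-\theta)\bigl[(1+\theta)+(2-\theta)\bigr]s=\theta(1-\theta)s,
\]
which is exactly the required bound. The point is that the constant $\frac13$ turns the bracket into exactly $3\theta(1-\theta)s$.

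\textbf{Main obstacle.} It remains to verify $K'(s)\le\frac13$, which in terms of $y=\sqrt s$ reads $k'(y)\le\frac{2y}{3}$. Near $0$ this is tight, since $k(y)=\frac{y^2}{3}-\frac{y^4}{15}+\dots$. Explicitly,
\[
k'(y)=\frac{2y^2\cosh y-2y\sinh y}{\sinh^3 y},
\]
so the target is equivalent to
\[
3y\cosh y-3\sinh y~\le~\sinh^3 y \qquad (y>0).
\]
I would try to prove this by comparing Taylor series. Using $\sinh^3y=\frac{\sinh 3y-3\sinh y}{4}$, the right side is $\sum_n\frac{3^{2n+1}-3}{4}\frac{y^{2n+1}}{(2n+1)!}$. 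The left side is $\sum_n 3\bigl((2n+1)-1\bigr)\frac{y^{2n+1}}{(2n+1)!}=\sum_n 6n\frac{y^{2n+1}}{(2n+1)!}$. Both series have nonnegative coefficients, so it suffices to check $6n\le\frac{3^{2n+1}-3}{4}$ for every $n\ge 0$. This holds with equality at $n=0,1$, and for $n\ge 2$ it follows by a simple induction since $3^{2n+1}$ grows geometrically. If this coefficient comparison needs refinement, the fallback is to differentiate the difference once and reduce to an elementary hyperbolic inequality.
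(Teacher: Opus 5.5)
Your proof is correct. The skeleton matches the paper's; the difference is in how the one key analytic fact is proved.

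\textbf{Same reduction.} The paper sets $g(x) = x^2\bigl(\frac13 - h''(x)\bigr) = \frac{x^2}{3} - k(x)$. This is $x^2$ times the second derivative of its auxiliary function $\frac{x^2}{6} - h(x)$. Your condition $K'(s)\le\frac13$ says exactly that $g$ is nondecreasing on $[0,\infty)$. Your Lipschitz step $\theta\bigl[K(s)-K(\theta^2 s)\bigr]\le\frac{\theta}{3}(1-\theta^2)s$ is the paper's $g(\theta x)\le g(x)$ written additively. The identity $1-\theta^3-(1-\theta)^3 = 3\theta(1-\theta)$, which the paper leaves inside ``unpacks to'', is the one you make explicit.

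\textbf{Different proof of monotonicity.} The paper uses Euler's partial fraction expansion to write $g(x)=\sum_{n\ge1}\rho\bigl(\frac{x}{n\pi}\bigr)$ with $\rho(x)=\frac{2x^4(x^2+3)}{(x^2+1)^2}$, and states that $\rho$ is even and convex. You instead reduce to $3y\cosh y-3\sinh y\le\sinh^3 y$ and compare Taylor coefficients using $\sinh^3 y=\frac{\sinh 3y-3\sinh y}{4}$. That comparison is already complete and needs no fallback: $6n\le\frac{3^{2n+1}-3}{4}$ is equivalent to $9^n\ge 1+8n$, which is Bernoulli's inequality.

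\textbf{What each buys.} Your route is fully elementary and self-contained. The paper's representation is more structural and yields more, namely convexity of $g$ rather than just monotonicity. The price is reliance on a series identity and on a convexity claim for $\rho$ that the paper does not verify. Only the weaker monotonicity is actually used.

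\textbf{One small point.} Multiplying by $x^2$ only covers $x\neq 0$. At $x=0$, either appeal to continuity of the second derivative, or note directly that it equals $\theta(1-\theta)-\frac13\bigl(1-\theta^3-(1-\theta)^3\bigr)=0$.
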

Positivity of this function is in fact Lemma~\ref{lem:Koolen}.

\begin{proof}
  Let $f(x) = \frac{x^2}{6} - h(x)$ and $g(x) = x^2 f''(x) = -\frac{2 x^2}{3}+x^2 \coth^2(x)-1 =
  \sum_{n=1}^\infty \rho\dell{\frac{x}{n \pi}}$ where $\rho(x) = \frac{2 x^4 \left(x^2+3\right)}{\left(x^2+1\right)^2}
  $. The latter function is even and convex (as $\rho$ is), and hence increasing for positive inputs. We hence obtain
  \[
    g(x)
    ~=~
    \theta g(x) + (1-\theta) g(x)
    ~\ge~
    \theta g(\theta x) + (1-\theta) g((1-\theta) x)
  \]
  Rewriting that inequality gives
  \[
    x^2 f''(x)
    ~\ge~
    \theta^3 x^2 f''(\theta x)
    + (1-\theta)^3 x^2 f''((1-\theta) x)
  \]
  Dividing by $x^2$ gives
  \[
    f''(x)
    ~\ge~
    \theta^3 f''(\theta x)
    + (1-\theta)^3 f''((1-\theta) x)
  \]
  thus proving that the following function is convex
  \[
    x ~\mapsto~ f(x) - \theta f(\theta x) - (1-\theta) f((1-\theta) x)
  \]
  which finally unpacks to convexity of
  \[
    x
    ~\mapsto~
    \theta (1-\theta) \frac{x^2}{2}
    - \dell{
      h(x)
      - \theta h(\theta x)
      - (1-\theta) h((1-\theta) x)
    }
    .
  \]
\end{proof}

\section{Upper Bound - Known Gaps - Details}\label{app:power}

\begin{restatable}{relemma}{ismdrift}\label{lem:drift ism}
  For each pair $a \neq b$, the centred evidence $z_{t,a,b} - \ex_t[z_{t,a,b}]$ is sub-Gaussian with variance proxy
    \begin{align*}
      \rho^2
      &~=~
      \frac{ \sigma^2 \lambda_{t, a, b}^2}{ \min \l\{ w_{t, a}, w_{t, b} \r\}^2 }\\
      &\qquad + \max \setl{
        \frac{1}{2} \left(\left| \frac{\alpha_{t,a,b} }{w_{t,a}}-\frac{\beta_{t,a,b} }{w_{t,b}}\right| +\lambda_{t,a,b}  U \left(\frac{1}{w_{t,a}}+\frac{1}{w_{t,b}}\right)\right)
        ,
        \frac{\lambda_{t,a,b}  U}{w_{t,a}},
        \frac{\lambda_{t,a,b}  U}{w_{t,b}}
      }^2.
    \end{align*}
\end{restatable}
\begin{proof}
  Let $D \coloneq \E_t \l[ z_{t, a, b}\r]$. We first define the shift-conditional drift
  \begin{equation}\label{eq:q}
    q_{t, a, b}
    ~\coloneq~ \E_t \sbrcl{ z_{t, a, b} }{B_t}
    ~=~
    \frac{\indicator_{t, a}}{w_{t, a}} \cdot \dell{
      \lambda_{t,a,b} \mu_{t,a}  - \alpha_{t,a,b}
    }
    + \frac{\indicator_{t, b}}{w_{t, b}} \cdot \dell{
      -\lambda_{t,a,b} \mu_{t,b}- \beta_{t,a,b}
    }~,
  \end{equation}
  which we then add and subtract
  \begin{align*}
    \E_t \l[ \exp \l( \xi D - \xi z_{t, a, b} \r) \r]
    &= \E_t \l[ \exp \l( \xi D - \xi q_{t, a, b} + \xi q_{t, a, b} - \xi z_{t, a, b} \r)  \r] \numberthis \label{eqn:D q decomp}\\
    &= \E_t \l[ \exp \l( \xi D - \xi q_{t, a, b} \r)  \E_t \l[ \exp \l( \xi q_{t, a, b} - \xi z_{t, a, b} \r) \given* A_t, B_t \r] \r]~.
  \end{align*}
  We bound the factors separately, starting with innermost conditional expectation. Combining \eqref{eq:z} and \eqref{eq:q}, we find that the shift-conditional centred evidence is
  \begin{align*}
    z_{t, a, b} - q_{t, a, b}
    ~=~
    \begin{cases}
      \frac{\lambda_{t, a, b} (X_t - \mu_{t,a})}{w_{t, a}}, & \text{if } A_t = a,\\
      \frac{- \lambda_{t, a, b} (X_t - \mu_{t,b})}{w_{t, b}}, & \text{if } A_t = b,\\
      0,  & \text{otherwise}
    \end{cases}~.
  \end{align*}
  Given $A_t$ and $B_t$ (in addition to $H_{t-1}$), this is a fixed scaling of a centred $\sigma^2$-sub-Gaussian random variable. Taking worst case over the scaling, we hence find that
  \begin{align*}
    \E_t \l[ \exp \l( \xi q_{t, a, b} - \xi z_{t, a, b} \r) \given* A_t, B_t \r]
    &\leq \exp \l( \frac{\xi^2 \sigma^2  \lambda_{t, a, b}^2}{2 \min \l\{ w_{t, a}, w_{t, b} \r\}^2 } \r)~, \numberthis \label{eqn:q z bound}
  \end{align*}
  For the other factor we use Hoeffding's Inequality \citep[Lemma~A.1]{cesa-bianchi2006plg}, which states that for any bounded random variable $Y \in [x, y]$
  \begin{align*}
      \E [\exp(\xi Y)] \leq \exp \l( \xi \E[Y] + \frac{\xi^2 (x - y)^2}{8} \r)~,
    \end{align*}
    Recall that the shift-conditional means $\mu_{t,a}$ and $\mu_{t,b}$ are bounded by $[\pm U]$. So the tightest range on $q_{t,a,b}$ is of radius
    \begin{align*}
      U'
      &~\coloneq~
      \min_{c \in \reals} \max_{u \in \setl{\pm U}} \max \setl{ \absl{\frac{\lambda_{t,a,b} u - \alpha_{t,a,b}}{w_{t,a}} - c}, \absl{\frac{- \lambda_{t,a,b} u - \beta_{t,a,b}}{w_{t,b}}-c}}
          \\
      &~=~
      \max \setl{
        \frac{1}{2} \left(\left| \frac{\alpha_{t,a,b} }{w_{t,a}}-\frac{\beta_{t,a,b} }{w_{t,b}}\right| +\lambda_{t,a,b}  U \left(\frac{1}{w_{t,a}}+\frac{1}{w_{t,b}}\right)\right)
        ,
        \frac{\lambda_{t,a,b}  U}{w_{t,a}},
        \frac{\lambda_{t,a,b}  U}{w_{t,b}}
      }
\end{align*}
from which we get
  \[ \Et \l[ \exp \l( \xi q_{t, a, b} - \xi D \r) \r]\\
    ~\leq~ \exp \l(
    \frac{\xi^2}{2} U'^2
    \r).
  \]
  Together with Equations~\eqref{eqn:D q decomp} and ~\eqref{eqn:q z bound} we have
  \begin{align*}
    \E_t \l[ \exp \l( \xi D - \xi z_{t, a, b} \r)  \r]
    &\leq \exp \l( \frac{\xi^2 \sigma^2 \lambda_{t, a, b}^2}{2 \min \l\{ w_{t, a}^2, w_{t, b}^2 \r\} } \r) \cdot
    \exp \l(
    \frac{\xi^2}{2} U'^2
    \r)\\
    &= \exp \l( \frac{\xi^2}{2} \l(
      \frac{ \sigma^2 \lambda_{t, a, b}^2}{ \min \l\{ w_{t, a}^2, w_{t, b}^2 \r\} }
      + U'^2
    \r) \r)~,
  \end{align*}
  as required.
\end{proof}

\oracle*

\begin{proof}
  \newcommand{\dif}{\mathrm{d}}
  Throughout, let us abbreviate $C := \logdeltaK$. By the layer cake representation, we have
\[
  \ex[\tau]
  ~=~
  \int_{0}^\infty \pr\setl{\tau > t} \dif t
\]
For possibly non-integer $t$, with $\floor{t}$ its floor, and for any $\xi > 0$,
\begin{align*}
  \pr \setl{\tau > t}
  &~\le~
  \pr \setl{\exists b \neq a^*: \sum_{s=1}^{\floor{t}} z_s\ups{\astar, b} < C}
  \\
  &~\le~
  \sum_{b \neq a^*} \pr \setl{\sum_{s=1}^{\floor{t}} z_s\ups{\astar, b} < C}
  \\
  &~=~
  \sum_{b \neq a^*} \pr \setl{e^{\xi \sum_{s=1}^{\floor{t}} (D -  z_s\ups{\astar, b})} > e^{\xi(\floor{t} D - C)}}
  \\
  &~\le~
  (K-1) e^{\frac{1}{2} \floor{t} \xi^2 \rho^2 - \xi(\floor{t} D - C)}
  \\
  &~\le~
  (K-1) e^{\frac{1}{2} (t-1) \xi^2 \rho^2 - \xi((t-1) D - C)}
  ,
\end{align*}
where the inequalities are enlarging the event of the stopping rule \eqref{eq:stopping.and.recommendation} not firing, a union bound over arms, sub-Gaussianty and integer rounding. As also trivially $\pr\setl{\tau > t} \le 1$, we conclude that
\begin{align*}
  \ex[\tau]
  &~\le~
  \int_0^\infty
  \min \setl{1,
    (K-1) e^{\frac{1}{2} (t-1) \xi^2 \rho^2 - \xi((t-1) D - C)}
  }
  \dif t
  \\
  &~=~
  1 + \frac{C+\frac{1+\ln (K-1)}{\xi }}{D -\frac{\xi  \rho ^2}{2}}
  .
\end{align*}
The optimal choice for $\xi$, found by cancelling the derivative, is
\[
  \xi
  ~=~
  \frac{\frac{\sqrt{(1 + \ln (K-1)) \left(2 C D +\rho ^2 (1 + \ln (K-1))\right)}}{\rho }-(1 + \ln (K-1))}{C}
\]
which is indeed positive. Plugging it in, the bound becomes
\[
  1
  +
  \left(\sqrt{\frac{C}{D }+\frac{\rho ^2}{2 D ^2} (1 + \ln (K-1))}+\sqrt{\frac{\rho ^2}{2 D ^2} (1 + \ln (K-1))}\right)^2
\]
as desired.
\end{proof}

\section{Upper Bound - Interpretable Bounds - Details}\label{app:interpretable}

\lemESbound*

\begin{proof}
  Let $\theta_b = \frac{w_{\astar}}{w_b + w_{\astar}}$ and $\xi_b = \frac{w_{\astar}w_b}{w_b + w_{\astar}}$. By Lemma~\ref{lem:exact alpha beta} we have that  
  \begin{alignat*}{2}
      D^*(\bm \Delta) &= \mathrlap{\max_w \min_{b \neq {\astar}} \max_{\lambda > 0}  D(w_{\astar}, w_b, \Delta_b)}\\
      & = \max_w \min_{b \neq {{\astar}}} \max_{\lambda \ge 0} \mathrlap{\lambda \Delta_b - \alpha^*(w_{\astar}, w_b, \lambda) - \beta^*(w_{\astar}, w_b, \lambda)}\\
      & = \max_w \min_{b \neq {\astar}} \max_{\lambda \ge 0} \lambda \Delta_b && - \frac{\lambda^{2} \sigma^{2}}{2 w_{\astar}}
  + w_{\astar} \log \frac{w_{\astar} + w_{b}}{w_{\astar}} \frac{\sinh{\left(\frac{U \lambda}{w_{b}} \right)}}{
    \sinh \left(U \lambda \left(
        \frac{1}{w_{\astar}}
        + \frac{1}{w_{b}}
      \right)
    \right)} \\
    &&& - \frac{\lambda^{2} \sigma^{2}}{2 w_{b}}
+ w_{b} \log \frac{w_{\astar} + w_{b}}{w_{b}} \frac{\sinh{\left(\frac{U \lambda}{w_{\astar}} \right)}}{\sinh \left(U \lambda \left(
        \frac{1}{w_{\astar}}
        + \frac{1}{w_{b}}
      \right)
    \right) } \\
      & = \max_w \min_{b \neq {\astar}} \max_{\lambda \ge 0} \lambda \Delta_b && - \frac{\lambda^{2} \sigma^{2}}{2 w_{\astar}}
  + w_{\astar} \log \frac{1}{\theta_b} \frac{\sinh{\left(\frac{U \lambda}{w_{b}} \right)}}{
    \sinh \left(U \lambda \left(
        \frac{1}{w_{\astar}}
        + \frac{1}{w_{b}}
      \right)
    \right)} \\
    &&& - \frac{\lambda^{2} \sigma^{2}}{2 w_{b}}
+ w_{b} \log \frac{1}{1 - \theta_b} \frac{\sinh{\left(\frac{U \lambda}{w_{\astar}} \right)}}{\sinh \left(U \lambda \left(
        \frac{1}{w_{\astar}}
        + \frac{1}{w_{b}}
      \right)
    \right) }~.
  \end{alignat*}
  By reparametrizing $\lambda = \xi_b \eta$ for $\eta > 0$ we have 
  \begin{align*}
    D^*(\bm \Delta) = &\max_w \min_{b \neq {\astar}} \max_{\eta \ge 0} \xi_b \eta \Delta - \xi_b (1 - \theta_b)\frac{\eta^{2} \sigma^{2}}{2 } - \xi_b \theta_b\frac{\eta^{2} \sigma^{2}}{2}\\
    & \quad + (w_{\astar} + w_b)\Big( \theta_b \log \frac{\sinh{\left(\theta_b{U \eta} \right)}}{\theta_b
    \sinh \left(U \eta
    \right)} + (1-\theta_b) \log \frac{\sinh{\left({U \eta (1-\theta_b)} \right)}}{ \left(1-\theta_b\right) \sinh \left(U \eta
    \right) }\Big)~. \numberthis \label{eq:reparDstar}
  \end{align*}
  By Lemma~\ref{lem:Koolen}, we have
  \begin{align}\label{eq:sinhinproof}
      \theta_b \log \frac{\sinh{\left(\theta_b{U \eta} \right)}}{\theta_b
    \sinh \left(U \eta
    \right)} + (1-\theta_b) \log \frac{\sinh{\left({U \eta (1-\theta_b)} \right)}}{ \left(1-\theta_b\right) \sinh \left(U \eta
    \right) } \geq - \theta_b(1 - \theta_b) \frac{(U \eta)^2}{2}\,.
  \end{align}
  Thus, 
  \begin{align*}
      & D^*(\bm \Delta) \\
    & \geq \max_w \min_{b \neq {\astar}} \max_{\eta \ge 0} \xi_b \eta \Delta_b - \xi_b (1 - \theta_b)\frac{\eta^{2} \sigma^{2}}{2 } - \xi_b \theta_b\frac{\eta^{2} \sigma^{2}}{2} - (w_{\astar} + w_b)\theta_b(1 - \theta_b) \frac{\eta^2 U^2}{2}
    \\
    & = \max_w \min_{b \neq {\astar}} \max_{\eta \ge 0} \xi_b\Big( \eta \Delta_b - \eta^{2}\frac{ \sigma^{2} + U^2}{2}\Big) \\
    & = \max_w \min_{b \neq {\astar}} \frac{w_{\astar}w_b}{w_{\astar} + w_b}\frac{\Delta_b^2}{2(\sigma^{2} + U^2)}~,
  \end{align*}
  and we obtain the first result after recalling that $b^* = \argmin_{b \in \Acal \setminus \{\astar\}} \Delta_b$ is the second best action and following the same steps as Appendix~A.4 of \citet{garivier16optimalBAI}, to find that 
  \begin{align*}
      \max_w \min_{b \neq {\astar}} \frac{w_{\astar}w_b}{w_{\astar} + w_b}\frac{\Delta_b^2}{2(\sigma^{2} + U^2)} \geq \frac{1}{2}\l(\frac{2\big(U^2 + \sigma^2\big)}{\Delta_{{b^*}}^2} + \sum_{a \in [K]\setminus \astar}\frac{2\big(U^2 + \sigma^2\big)}{\Delta_a^2}\r)^{-1}~.
  \end{align*}

  For the second part of the statement we commence by upper bounding \eqref{eq:sinhinproof}. By Lemma~\ref{lem:Koolen}, with $\gamma = 3(\coth 1 - 1) \approx 0.939$ as defined there, for $\eta$ such that $\eta U \leq 1$ we have that
  \begin{align*}
      - \theta_b \log \frac{\sinh{\left(\theta_b{U \eta} \right)}}{\theta_b
    \sinh \left(U \eta
    \right)} - (1-\theta_b) \log \frac{\sinh{\left({U \eta (1-\theta_b)} \right)}}{ \left(1-\theta_b\right) \sinh \left(U \eta
    \right) } \geq \theta_b(1 - \theta_b) \frac{\gamma (U \eta)^2}{2}\,,
  \end{align*}
  and for $\eta$ such that $\eta U > 1$ we have that
  \begin{align*}
      - \theta_b \log \frac{\sinh{\left(\theta_b{U \eta} \right)}}{\theta_b
    \sinh \left(U \eta
    \right)} - (1-\theta_b) \log \frac{\sinh{\left({U \eta (1-\theta_b)} \right)}}{ \left(1-\theta_b\right) \sinh \left(U \eta
    \right) } \geq \theta_b(1 - \theta_b) \frac{ \gamma\, \eta U }{2}\,.
  \end{align*}
  Thus, continuing from \eqref{eq:reparDstar}, for $2 \max_b \Delta_b \leq \gamma U$, we have that
  \begin{align*}
  D^*(\bm \Delta)
    & \leq \max_w \min_{b \neq {\astar}} \max\Big\{ \max_{\eta \in [0, \frac1U]} \xi_b \eta \Delta_b - \xi_b \frac{\eta^{2} \big(\sigma^{2} + \gamma U^2\big)}{2 } \,,~
    \max_{\eta > \frac1U} \xi_b \eta \big(\Delta_b - \tfrac{\gamma}{2}U\big) - \xi_b \frac{\eta^{2} \sigma^{2}}{2 } \Big\}\\
    & \leq \max_w \min_{b \neq {\astar}}  \frac{w_{\astar}w_b}{w_{\astar} + w_b}\frac{\Delta_b^2}{2\big(\sigma^{2} + \gamma U^2\big)} \,,
  \end{align*}
  where we used that $\frac{\Delta_b}{\gamma U^2 + \sigma^2} \leq \frac{1}{U}$, so that the first inner maximum is attained inside $[0, \frac1U]$, and that the second inner maximum is non-positive. We finish again by following the same steps as Appendix~A.4 of \citet{garivier16optimalBAI}, to find that 
  \begin{align*}
    \max_w \min_{b \neq {\astar}} \frac{w_{\astar}w_b}{w_{\astar} + w_b}\frac{\Delta_b^2}{2(\sigma^{2} + \gamma U^2)} \leq \l(\frac{2\big(\gamma U^2 + \sigma^2\big)}{\Delta_{{b^*}}^2} + \sum_{a \in [K]\setminus \astar}\frac{2\big(\gamma U^2 + \sigma^2\big)}{\Delta_a^2}\r)^{-1}~.
  \end{align*}
  
\end{proof}

\begin{lemma}\label{lem:Koolen}
Let $\psi(x) \coloneq \ln \frac{\sinh x}{x}$ with $\psi(0) \coloneq 0$, and let $\gamma \coloneq 3(\coth 1 - 1) \approx 0.939$. For all $\theta \in [0,1]$ and $x \ge 0$,
\[
  \tfrac{\gamma}{2}\, \theta(1-\theta)\, \min\{x, x^2\}
  ~\le~
  \psi(x) - \theta\, \psi(\theta x) - (1-\theta)\, \psi((1-\theta) x)
  ~\le~
  \tfrac{1}{2}\, \theta(1-\theta)\, x^2
  ~.
\]
\end{lemma}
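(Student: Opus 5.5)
Write $F_\theta(x) \coloneq \psi(x) - \theta\psi(\theta x) - (1-\theta)\psi((1-\theta)x)$. The upper bound needs no new work: it is exactly the nonnegativity of the function in Lemma~\ref{lem:WOW}. That function is convex, even and vanishes at $0$, so $0$ is a global minimum and it is nonnegative everywhere. Hence $F_\theta(x) \le \frac12\theta(1-\theta)x^2$.

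For the lower bound the plan is to mirror the proof of Lemma~\ref{lem:WOW}, but with the comparison quadratic $\gamma x^2/6$ in place of $x^2/6$, and only on $[0,1]$. Note that $\theta\cdot\theta^2 + (1-\theta)(1-\theta)^2 = 1-3\theta(1-\theta)$, so the quadratic part of the combination gives $\frac{\gamma}{6}x^2\cdot 3\theta(1-\theta) = \frac{\gamma}{2}\theta(1-\theta)x^2$. We then want $x\mapsto F_\theta(x) - \frac{\gamma}{2}\theta(1-\theta)x^2$ to be nonnegative on $[0,1]$.

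\begin{itemize}
\item Set $f_\gamma(x) \coloneq \psi(x) - \frac{\gamma}{6}x^2$. It suffices to show $f_\gamma(x) - \theta f_\gamma(\theta x) - (1-\theta)f_\gamma((1-\theta)x) \ge 0$ on $[0,1]$.
\item This holds if $x^2 f_\gamma''(x) = x^2\psi''(x) - \frac{\gamma}{3}x^2$ is nondecreasing on $[0,1]$. The superadditivity argument of Lemma~\ref{lem:WOW} then gives $f_\gamma'' (x)\ge \theta^3 f_\gamma''(\theta x)+(1-\theta)^3 f_\gamma''((1-\theta)x)$, hence convexity of the difference. That difference is even and zero at $0$ with zero derivative there, so it is nonnegative.
\item We have $x^2\psi''(x) = 1 - x^2/\sinh^2 x$. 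So we must check that $x \mapsto 1 - x^2\operatorname{csch}^2 x - \frac{\gamma}{3}x^2$ is nondecreasing on $[0,1]$. Equivalently, $\frac{d}{dx}\left(1 - x^2\operatorname{csch}^2 x\right)/(2x) \ge \gamma/3$, i.e.\ $\frac{x\coth x - 1}{x\sinh^2 x}\ge \frac{\gamma}{3}$ on $(0,1]$.
\item The left-hand side should be decreasing in $x$. This can be checked via its series, or by comparing with $-\frac{1}{2x}\frac{d}{dx}$ of the even function used in Lemma~\ref{lem:WOW}. Its value at $x=1$ is $(\coth 1 - 1)/\sinh^2 1$, and we hope this reproduces the constant $\gamma = 3(\coth 1 - 1)$.
\end{itemize}

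I expect this step to be the main obstacle: the constant may instead come from evaluating the ratio $F_\theta(x)/(\theta(1-\theta)x^2)$ at $x=1$ in the worst case $\theta\to 0$. That limit is $\psi'(1)\cdot x - \ldots$, giving $(\coth 1 - 1)$ times a factor. If the monotonicity route gives a worse constant, I would switch to showing that $F_\theta(x)/(\theta(1-\theta)x^2)$ is decreasing in $x$ and minimized over $\theta$ at the endpoint $\theta\to 0$. There it equals $(x\coth x - 1)/x^2$, and at $x=1$ this is $\coth 1 - 1$. The factor $3$ then comes from a sharper comparison, which must be reconciled when fixing $\gamma$.

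For $x \ge 1$ the target is $\frac{\gamma}{2}\theta(1-\theta)x$. Here I use that $F_\theta$ is convex in $x$. Write $F_\theta = \bigl[\frac12\theta(1-\theta)x^2 - G(x)\bigr]$ with $G$ the convex function of Lemma~\ref{lem:WOW}; if that is not enough, compute $F_\theta''$ directly via the decomposition $\psi = \frac{x^2}{6} - f$. Convexity and $F_\theta(0)=0$ imply $F_\theta(x)/x$ is nondecreasing. Hence for $x\ge1$, $F_\theta(x) \ge x F_\theta(1) \ge \frac{\gamma}{2}\theta(1-\theta)x$ by the $[0,1]$ case at $x=1$. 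Convexity of $F_\theta$ itself is the secondary risk. If it fails, I would instead use that $F_\theta'(x)$ tends to $\theta\log\frac1\theta+\ldots$ and is increasing, via the series $\psi'(x)=\coth x - 1/x$, which is concave increasing on $x\ge0$, making $\psi'(x) - \theta^2\psi'(\theta x)-(1-\theta)^2\psi'((1-\theta)x)$ handled by concavity.
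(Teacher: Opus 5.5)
There is a genuine gap in the lower bound on $[0,1]$ with the stated constant. Your upper bound, via nonnegativity of the convex, even function of Lemma~\ref{lem:WOW}, is correct. Your reduction of $x\ge 1$ to $x=1$ is also correct. $F_\theta$ is indeed convex, by your own superadditivity argument with $\gamma=0$, because $t^2\psi''(t)=1-t^2/\sinh^2 t$ is nondecreasing on $[0,\infty)$.

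\emph{The monotonicity route fails.} Correcting a slip, $\frac{1}{2x}\frac{d}{dx}\bigl(1-\frac{x^2}{\sinh^2 x}\bigr)=\frac{x\coth x-1}{\sinh^2 x}$, with no extra $1/x$. At $x=1$ this equals $(\coth 1-1)/\sinh^2 1\approx 0.227$, which is below $\gamma/3=\coth 1-1\approx 0.313$. So your condition fails for the stated $\gamma$, and the hoped-for match is off by the factor $\sinh^2 1$. The best this route gives is $\gamma'=3(\coth 1-1)/\sinh^2 1\approx 0.68$ instead of $\gamma\approx 0.94$.

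\emph{No refinement of the convexity approach can close this.} Since $F_\theta(x)=\theta\,\phi(x)+O(\theta^2)$ with $\phi(x)=\psi(x)+x\psi'(x)$, the second derivative of $F_\theta(x)-\frac{\gamma}{2}\theta(1-\theta)x^2$ equals $\theta\bigl(3\psi''(x)+x\psi'''(x)-\gamma\bigr)+O(\theta^2)$. At $x=1$ the bracket is about $0.73-0.94<0$, so the difference is not convex on $[0,1]$ for small $\theta$.

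\emph{The fallback does not rescue it.} It is unproven, and it rests on a miscomputed limit. As $\theta\to0$, $F_\theta(x)/(\theta(1-\theta)x^2)\to\phi(x)/x^2$; you dropped the $\psi(x)$ term. At $x=1$ the correct limit is $\ln\sinh 1+\coth 1-1\approx 0.474$, barely above $\gamma/2\approx 0.470$. Your value $\coth 1-1\approx 0.313$ would actually contradict the lemma. This near-tightness leaves no room for a lossy second-order comparison.

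\emph{The missing idea is to compare first derivatives.} Write
$F_\theta(x)=\theta\bigl(\psi(x)-\psi(\theta x)\bigr)+(1-\theta)\bigl(\psi(x)-\psi((1-\theta)x)\bigr)$.
Euler's partial fractions give $\psi'(t)/t=2\sum_{k\ge1}(\pi^2k^2+t^2)^{-1}$, which is decreasing, while $\psi'$ itself is increasing. For $0\le s\le x$ this yields
$\psi(x)-\psi(s)=\int_s^x t\cdot\frac{\psi'(t)}{t}\,dt\ge\frac{\psi'(x)}{x}\cdot\frac{x^2-s^2}{2}$.
Apply this with $s=\theta x$ and with $s=(1-\theta)x$, and use $\theta(1-\theta^2)+(1-\theta)(1-(1-\theta)^2)=3\theta(1-\theta)$. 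This gives $F_\theta(x)\ge\frac32\theta(1-\theta)\,x\psi'(x)$.

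Then $x\psi'(x)\ge\psi'(1)\min\{x,x^2\}$: use monotonicity of $\psi'(t)/t$ for $x\le1$ and of $\psi'$ for $x\ge1$. Finally $\frac32\psi'(1)=\frac32(\coth 1-1)=\frac{\gamma}{2}$. This covers both regimes at once and needs no convexity of $F_\theta$. The same display with $\psi'(t)/t\le\frac13$ also gives the upper bound.
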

\begin{proof}
Let $L(t) \coloneq \psi'(t) = \coth t - \frac{1}{t}$. By Euler's partial fraction expansion $\coth t = \frac{1}{t} + 2t \sum_{k=1}^\infty \frac{1}{\pi^2 k^2 + t^2}$ we have
\[
  \frac{L(t)}{t} ~=~ 2 \sum_{k=1}^\infty \frac{1}{\pi^2 k^2 + t^2}~,
\]
which is positive and decreasing in $t > 0$ with limit $\frac{1}{3}$ at $t = 0$, and $L$ is increasing since $L'(t) = \frac{1}{t^2} - \frac{1}{\sinh^2 t} > 0$. Now fix $0 \le s \le x$. Writing the integrand as $L(t) = t \cdot \frac{L(t)}{t}$ and using $\frac{L(x)}{x} \le \frac{L(t)}{t} \le \frac{1}{3}$ for all $t \in (0, x]$, we obtain
\[
  \frac{L(x)}{x} \cdot \frac{x^2 - s^2}{2}
  ~=~ \frac{L(x)}{x} \int_s^x t \, d t
  ~\le~
  \underbrace{\int_s^x t \cdot \frac{L(t)}{t} \, d t}_{\psi(x) - \psi(s)}
  ~\le~
  \frac{1}{3} \int_s^x t \, d t
  ~=~ \frac{x^2 - s^2}{6}~,
\]
where the underbrace follows as $L = \psi'$.
Writing the middle expression of the lemma as $\theta\big(\psi(x) - \psi(\theta x)\big) + (1-\theta)\big(\psi(x) - \psi((1-\theta)x)\big)$, and using the above display twice, once with $s = \theta x$ and once with $s = (1-\theta) x$, we obtain
\begin{align*}
  \frac{L(x)}{x} \cdot \frac{x^2}{2} \Big(\theta (1 - \theta^2) + (1-\theta)\big(1 - (1-\theta)^2\big)\Big)
  &~\le~
  \psi(x) - \theta\, \psi(\theta x) - (1-\theta)\, \psi((1-\theta) x) \\
  &~\le~
  \frac{x^2}{6} \Big(\theta (1 - \theta^2) + (1-\theta)\big(1 - (1-\theta)^2\big)\Big)~.
\end{align*}
Since
\[
  \theta (1 - \theta^2) + (1-\theta)\big(1 - (1-\theta)^2\big) ~=~ \theta(1-\theta)\big((1+\theta) + (2-\theta)\big) ~=~ 3\, \theta(1-\theta)~,
\]
this gives
\[
  \tfrac{3}{2}\, \theta(1-\theta)\, x L(x)
  ~\le~
  \psi(x) - \theta\, \psi(\theta x) - (1-\theta)\, \psi((1-\theta) x)
  ~\le~
  \tfrac{1}{2}\, \theta(1-\theta)\, x^2
  ~.
\]
The right-hand side is the claimed upper bound. For the lower bound, since $L(x)/x$ is decreasing we have $x L(x) \ge L(1)\, x^2$ for $x \le 1$, and since $L$ is increasing we have $x L(x) \ge L(1)\, x$ for $x \ge 1$. Hence $x L(x) \ge L(1) \min\{x, x^2\}$, and $\tfrac{3}{2} L(1) = \tfrac{\gamma}{2}$.
\end{proof}

\section{Lower Bounds - Details}\label{app:lower bounds}

\lemklinflate*
\begin{proof}
For $\triangle \in \reals$, let $q_\triangle$ denote the density of $\triangle + \epsilon + B'$, so that the claim reads $\KL{q_0}{q_\triangle} \le \triangle^2 / (2 (\sigma^2 + R^2/\pi^2))$ for all $|\triangle| \le d$. For a random variable $X$ with absolutely continuous density $q$, write $I(X) \coloneq \int q'(x)^2 / q(x) \, d x$ for its Fisher information for location. The density $q_0$ is the convolution of $p$ with a centred Gaussian, hence a Gaussian mixture with means in $[-R, R]$: it is smooth, strictly positive, and has Gaussian tails. The location family $\{q_\triangle\}_{\triangle \in \reals}$ therefore satisfies the regularity conditions of \citet[Definition~7.21]{polyanskiy2025information}, exactly as a Gaussian location family does, and \citet[Theorem~7.22]{polyanskiy2025information}, applied to $\triangle \mapsto q_\triangle$ and to $\triangle \mapsto q_{-\triangle}$, yields
\begin{equation}\label{eqn:klfisher}
  \lim_{\triangle \to 0} \frac{\KL{q_0}{q_\triangle}}{\triangle^2 / 2} ~=~ I(\epsilon + B')~.
\end{equation}
We next bound this Fisher information. We have $I(\epsilon) = 1/\sigma^2$. For $B'$, let $G$ have density $g(u) = \cos^2(\pi u / 2)$ on $[-1, 1]$, so that $R \, G \sim p$. This $g$ has Fisher information $I(G) = \pi^2$, which is the smallest Fisher information of any density supported on $[-1, 1]$ \citep[Equation~(29.13)]{polyanskiy2025information}. By the scaling rule of Fisher information \citep[Equation~(2.35)]{polyanskiy2025information}, we obtain
\[
  I(B') ~=~ I(R \, G) ~=~ \frac{I(G)}{R^2} ~=~ \frac{\pi^2}{R^2}~.
\]
The Fisher information inequality \citep[Exercise~I.47]{polyanskiy2025information}, states that for independent $X$ and $Y$
\[
  \frac{1}{I(X + Y)} ~\ge~ \frac{1}{I(X)} + \frac{1}{I(Y)}~,
\]
with equality only if both $X$ and $Y$ are Gaussian \citep[Section~II-C4]{rioul2011information}. As $B'$ is bounded, we thus have the strict inequality
\begin{equation}\label{eqn:stam}
  I(\epsilon + B') ~<~ \frac{1}{\sigma^2 + R^2/\pi^2}~.
\end{equation}
Combining~\eqref{eqn:klfisher} and~\eqref{eqn:stam}, there is $d > 0$ such that
\[
  \KL{q_0}{q_\triangle} ~\le~ \frac{\triangle^2}{2 (\sigma^2 + R^2/\pi^2)}
  \qquad \text{for all } |\triangle| \le d~,
\]
which is the claim.
\end{proof}

\thmlowerbound*

\begin{proof}
  Fix any $\delta$-correct algorithm and let $\bm \Delta$ be gaps with optimal arm $\astar$ as in the statement of the theorem, that is, with $R \coloneq U - \max_a \Delta_a / 2 > 0$, the constant $d > 0$ of Lemma~\ref{lem:kl inflate} for this $\sigma^2$ and $R$ satisfies $\max_a \Delta_a \le d$. Let $p$ be the shift distribution of Equation~\eqref{eqn:hardshiftdist} with this $R$, and let $\lband$ be the hard shift instance for $\bm \Delta$, that is, $B_t = \max_a \Delta_a/2 + B'_t$ with $B'_t \sim p$ i.i.d. Since $B'_t \in [-R, R]$, we have $B_t \in [\max_a \Delta_a/2 - R,\, \max_a \Delta_a/2 + R] = [\max_a \Delta_a - U, U]$ and hence $|B_t - \Delta_a| \le U$ for every arm $a$, so $\lband$ fulfills Assumption~\ref{ass:main}.

  Then, our lower bound construction follows \cite{garivier16optimalBAI} and thus we now introduce an alternative set of gaps $\tilde{\bm \Delta}$ %
  with a different best arm, that is $\argmin_a \tilde{\Delta}_a \neq \astar$, and let $\lalt$ be the instance with gaps $\tilde{\bm \Delta}$ and the same shifts $B_t$. Note that $\tilde{\bm \Delta}$ does not need to be a real gap vector, i.e., it does not need to contain an action with gap of 0. We then consider the KL of the entire history when playing against $\lband$ compared to the entire history playing against $\lalt$. The idea of the proof is to bound this KL from both sides, the KL is given as
  \begin{align}\label{eqn:big kl}
    \KL{ \Prob_{\lband} \{ A_1, X_1, \ldots, A_\tau, X_\tau, \tau, \atilde \} }{ \Prob_{\lalt} \{ A_1, X_1, \ldots, A_\tau, X_\tau, \tau, \atilde \} }~,
  \end{align}
  where we recall that $\tau$ is the stopping time and $\atilde$ is the arm the algorithm ultimately recommends. We then lower bound that KL with 
  \begin{align*}
    &\quad \KL{ \Prob_{\lband} \{ A_1, X_1, \ldots, A_\tau, X_\tau, \tau, \atilde \} }{ \Prob_{\lalt} \{ A_1, X_1, \ldots, A_\tau, X_\tau, \tau, \atilde \} }\\
    &\geq \KL{ \Prob_{\lband} \{ \atilde \neq \astar \} }{ \Prob_{\lalt} \{ \atilde \neq \astar \} }\\
    &\geq \kl{\delta}{1 - \delta}~, \numberthis \label{eqn:KL lower}
  \end{align*}
  where we used the data processing inequality in the first inequality to cut out all variables except $\atilde$ and the second inequality follows as the algorithm is $\delta$-correct on both of $\lband$ and $\lalt$ and the fact that $\astar \neq \argmin_a \tilde{\Delta}_a$ is not the best arm for $\tilde{\bm \Delta}$.
  For the upper bound we start by some direct calculation employing that conditioned on the value of $A_t = a$, we have $X_t = B_t - \Delta_a + \epsilon_t$ with $\epsilon_t \sim \Ncal(0, \sigma^2)$,
  \begin{align*}
    &\quad \KL{ \Prob_{\lband} \{ A_1, X_1, \ldots, A_\tau, X_\tau, \tau, \atilde \} }{ \Prob_{\lalt} \{ A_1, X_1, \ldots, A_\tau, X_\tau, \tau, \atilde \} }\\
    &= \mathop{\E_{\lband}} \l[ \log  \frac{\Prob_{\lband}\{A_1, X_1, \ldots, A_\tau, X_\tau, \tau, \atilde\}}{\Prob_{\lalt}\{A_1, X_1, \ldots, A_\tau, X_\tau, \tau, \atilde\}}  \r]\\
    &= \mathop{\E_{\lband}} \l[ \log  \prod_{t = 1}^\tau \frac{\Prob_{\lband}\{X_t \mid A_t, H_{t - 1}\}}{\Prob_{\lalt}\{X_t \mid A_t, H_{t - 1}\}}  \r]\\
    &= \sum_{t = 1}^\infty \sum_{a \in [K]}  \mathop{\E_{\lband}} \l[   \indicator[A_t = a, \tau \ge t] \cdot \ex_{\lband} \sbrcl{
      \log  \frac{\Prob_{\lband}\{X_t \mid A_t=a, H_{t - 1}\}}{\Prob_{\lalt}\{X_t \mid A_t=a, H_{t - 1}\}} }{A_t=a, H_{t-1}} \r]\\
    &= \sum_{t = 1}^\infty \sum_{a = 1}^K \Prob_{\lband}\{ A_t = a, \tau \geq t \} \KL{B_t - \Delta_a + \epsilon_t}{B_t - \tilde{\Delta}_{a} + \epsilon_t}~, \numberthis \label{eqn:KL upper half way}
  \end{align*}
  where we first used the definition of the KL, then cancelled the actions in the log as their distribution given the history (i.e.\ the sampling rule) is identical in $\lband$ and $\lalt$. The third equality is by enumerating the possible values for $A_t$, the tower rule, pulling the sum outside of the log. The final equality comes from recognizing the inside expectation as a one-round KL that is independent of time. Since $B_t = \max_a \Delta_a/2 + B'_t$ and the KL divergence is invariant under a common translation of both arguments, this one-round KL equals $\KL{\epsilon_t + B'_t}{(\Delta_a - \tilde \Delta_a) + \epsilon_t + B'_t}$. We now plan to apply Lemma~\ref{lem:kl inflate} to it with $\triangle = \Delta_a - \tilde \Delta_a$, where we simply assume that $|\Delta_a - \tilde \Delta_a| \leq \max_a \Delta_a \le d$ for now, which we will verify later. That gives the upper bound for the KL of~\eqref{eqn:big kl}, which we use together with the lower bound of~\eqref{eqn:KL lower} to find that
  \begin{align*}
    \kl{\delta}{1 - \delta}
    &~\leq~  \sum_{a = 1}^K \sum_{t = 1}^\infty \Prob_{\lband}\{A_t = a, \tau \geq t \} \frac{(\Delta_a - \tilde{\Delta}_a)^2}{2 (\sigma^2 + R^2/\pi^2)}.
    \\
    &~=~ \ex_{\lband}\sbrl{\tau }  \sum_{a = 1}^K \frac{\sum_{t = 1}^\infty \Prob_{\lband}\{A_t = a, \tau \geq t \}}{\ex_{\lband}\{\tau \}} \frac{(\Delta_a - \tilde{\Delta}_a)^2}{2 (\sigma^2 + R^2/\pi^2)}~.
  \end{align*}
  using that $\sum_{a \in [K]} \sum_{t = 1}^\infty \Prob_{\lband}\{A_t = a, \tau \geq t \} = \sum_{t = 1}^\infty \Prob_{\lband}\{\tau \geq t \} = \ex_{\lband}[\tau]$.
  We further focus on the RHS of the above equation and worst-case it over all legal alternative gaps $\tilde{\bm \Delta}$, and give the learner the benefit of the doubt, to find
  \begin{align*}
    \kl{\delta}{1 - \delta}
    \leq &\E_{\lband}[\tau] \sup_{\bm w \in \triangle_K} \min_{\substack{\tilde{\bm\Delta} \\  \argmin_a \tilde{\Delta}_a \neq \astar}} \sum_{a = 1}^K w_a \frac{(\Delta_a - \tilde{\Delta}_a)^2}{2 (\sigma^2 + R^2/\pi^2)}\\
    = &\E_{\lband}[\tau] \sup_{\bm w \in \triangle_K} \min_{m, \; b \neq \astar} w_{\astar} \frac{m^2}{2 (\sigma^2 + R^2/\pi^2)} + w_{b} \frac{(\Delta_b - m)^2}{2 (\sigma^2 + R^2/\pi^2)}\\
    = &\E_{\lband}[\tau] \sup_{\bm w \in \triangle_K} \min_{b \neq \astar} \frac{w_{\astar} w_{b}}{w_{\astar} + w_{b}} \frac{\Delta_b^2}{2 (\sigma^2 + R^2/\pi^2)}~,
  \end{align*}
  where we first upper bounded the probability to play an arm by maximizing instead, then recognized that the best alternative will only change the best arm and a second arm $b$, and move them to meet at a midpoint $m$. The last equality is by optimizing over $m = \frac{w_b \Delta_b}{w_{\astar} + w_b}$, where now can also verify that $|\Delta_a - \tilde \Delta_a| = |\Delta_b - m| \leq \Delta_b \leq \max_a \Delta_a$ and thus the application of Lemma~\ref{lem:kl inflate} above was appropriate. By following the same argument as by \citet[Appendix~A.4]{garivier16optimalBAI}, one obtains that
  \begin{align*}
    \sup_{\bm w \in \triangle_K} \min_{b \neq \astar} \frac{w_{\astar} w_{b}}{w_{\astar} + w_{b}} \frac{\Delta_b^2}{2 (\sigma^2 + R^2/\pi^2)} \leq \frac{1}{2(\sigma^2 + R^2/\pi^2)} \l( \frac{1}{\Delta_{{b^*}}^2} + \sum_{a \in [K]\setminus \astar}\frac{1}{\Delta_a^2} \r)^{-1}~.
  \end{align*}
  By rearranging the pieces, we obtain $\E[\tau] \geq 2 (\sigma^2 + R^2/\pi^2) \l( \frac{1}{\Delta_{{b^*}}^2} + \sum_{a \in [K]\setminus \astar}\frac{1}{\Delta_a^2} \r) \kl{\delta}{1-\delta}$, and the claim follows from $\lim_{\delta \to 0} \kl{\delta}{1-\delta} / \logdelta = 1$ and $R = U - \max_a \Delta_a / 2$.
\end{proof}

\section{Concentration: Bernstein with bounded means and sub-Gaussian noise}
We are interested in concentration of time-averages of the importance-weighted reward:
\[
  \dell{
    \frac{\indicator_{t, a} }{w_{t, a}} - \frac{\indicator_{t, b}}{w_{t, b}}
  } X_t
  .
\]

To bound the exponential moment we start with a definition.

\begin{definition}
Fix $R > 0$, $U > 0$ and $\sigma^2 > 0$. Define
\[
  \Xi(R, U, \sigma^2)
  ~\coloneq~
  \max_{r \in [\pm R]} \max_{x \in [\pm U]}~
  \frac{e^{r x + \frac{1}{2} r^2 \sigma^2} - 1 - r x}{r^2}
\]
\end{definition}

\begin{lemma}\label{lem:Xi}
  Fix $R > 0$, $U > 0$ and $\sigma^2 > 0$. Then
  \[
    \Xi(R, U, \sigma^2)
    ~=~
    \frac{e^{R U + \frac{1}{2} R^2 \sigma^2} - 1 - R U}{R^2}
  \]
\end{lemma}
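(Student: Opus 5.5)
The plan is to reduce the two-dimensional maximisation to a one-dimensional monotonicity statement. We substitute $y \coloneq r x$, which ranges over $[-|r|U, |r|U]$ for fixed $r$, and write $c \coloneq \frac12 r^2\sigma^2 \ge 0$. The objective becomes $g_c(y)/r^2$ with
\[
  g_c(y) \coloneq e^{y + c} - 1 - y .
\]
The value $r=0$ is excluded, or understood as a limit, and is dominated by what follows.

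\textbf{Step 1 (maximise over $x$ for fixed $r$).} The function $g_c$ is convex in $y$, so its maximum over the symmetric interval $[-s,s]$, with $s \coloneq |r|U$, is attained at an endpoint. Comparing the two endpoints,
\[
  g_c(s) - g_c(-s) ~=~ 2 e^{c}\sinh s - 2 s ~\ge~ 2(\sinh s - s) ~\ge~ 0 ,
\]
using $e^c \ge 1$ and $\sinh s \ge s$ for $s \ge 0$. So for every $r$ the inner maximum is attained at $rx = |r|U$. The objective then depends on $r$ only through $s' \coloneq |r|$ and equals
\[
  F(s') ~\coloneq~ \frac{e^{s'U + \frac12 s'^2\sigma^2} - 1 - s'U}{s'^2} .
\]

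\textbf{Step 2 (monotonicity in $|r|$).} It remains to show that $F$ is nondecreasing on $(0,\infty)$; evaluating at $s'=R$ then gives the claim. I would split the numerator as
\[
  \bigl(e^{s'U} - 1 - s'U\bigr) + e^{s'U}\bigl(e^{\frac12 s'^2\sigma^2} - 1\bigr)
\]
and divide each part by $s'^2$. The first part becomes $\sum_{k\ge 2} U^k s'^{k-2}/k!$, a power series with nonnegative coefficients, hence nondecreasing in $s' > 0$. The second part is the product of $e^{s'U}$ with $\sum_{k\ge1} (\sigma^2/2)^k s'^{2k-2}/k!$. Both factors are nonnegative and nondecreasing, so their product is nondecreasing as well.

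The only delicate point is choosing this decomposition of the numerator. Without it, differentiating $F$ directly gives a messy sign analysis, while the series-based split makes monotonicity immediate. Everything else is routine.
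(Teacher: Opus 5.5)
Your proof is correct and follows the same route as the paper: convexity in $x$ (equivalently in $y = rx$) puts the inner maximiser at an endpoint, the endpoint $rx = |r|U$ dominates, and the resulting function is nondecreasing in $|r|$. The paper only asserts the endpoint comparison and the monotonicity; your bound $2e^{c}\sinh s - 2s \ge 0$ and your split of the numerator into two nonnegative, nondecreasing pieces supply exactly those omitted checks.
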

For interpretation sake, we observe that for small $R$ (which is typically under our control),
\[
  \frac{e^{R U + \frac{1}{2} R^2 \sigma^2} - 1 - R U}{R^2}
  ~\le~
  \frac{1}{2} (1 + R U) \left(U^2+\sigma^2\right)
  + O(R^2)
\]
As we will see below, it is not important that $\Xi(R, U, \sigma^2)$ is a small multiple of $U^2+\sigma^2$. What matters is that is constant over rounds.

\begin{proof}[Proof of Lemma~\ref{lem:Xi}]
  The objective is convex in $x$, so the maximiser is at the boundary. By considering both cases, and then by checking that the remaining function is increasing in $r \ge 0$, we find
  \[
    \frac{e^{r x + \frac{1}{2} r^2 \sigma^2} - 1 - r x}{r^2}
    ~\le~
    \frac{e^{|r| U + \frac{1}{2} r^2 \sigma^2} - 1 - |r| U}{r^2}
    ~\le~
    \frac{e^{R U + \frac{1}{2} R^2 \sigma^2} - 1 - R U}{R^2}
    .
  \]
\end{proof}
With that result under our belt, we can now bound the exponential moment.

\begin{lemma}\label{lem:MGFbound}
  In every round, for $\eta$ such that $\sup_a \frac{|\eta|}{w_{t,a}} \le C$ for some $C > 0$, 
  \[
    \ln \ex_t \sbrl{
      e^{\eta \dell{
          \frac{\indicator_{t, a} }{w_{t, a}} - \frac{\indicator_{t, b}}{w_{t, b}}
        } X_t }
    }
    ~\le~
    \eta \Delta_{a,b}
    + \eta^2  \dell{
      \frac{1}{w_{t, a}}
      + \frac{1}{w_{t, b}}
    } \Xi(C, U, \sigma^2)
    .
  \]
\end{lemma}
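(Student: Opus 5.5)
We condition on the latent shift $B_t$, bound the conditional exponential moment by an expression that does not depend on $B_t$, and then take logarithms. Write $Y_t \coloneq \dell{\frac{\indicator_{t,a}}{w_{t,a}} - \frac{\indicator_{t,b}}{w_{t,b}}} X_t$.

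First, use the consistency assumption $X_t = X_{t,A_t}$ and the independence $(\bm X_t, B_t) \indep A_t \mid H_{t-1}$ to resolve the randomness of $A_t$. This gives
\[
  \ex_t\sbrcl{e^{\eta Y_t}}{B_t}
  ~=~
  1 - w_{t,a} - w_{t,b}
  + w_{t,a}\, \ex_t\sbrcl{e^{\frac{\eta}{w_{t,a}} X_{t,a}}}{B_t}
  + w_{t,b}\, \ex_t\sbrcl{e^{-\frac{\eta}{w_{t,b}} X_{t,b}}}{B_t}.
\]
This is the same computation as in the proof of Lemma~\ref{lem:safetyconvex}.

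Second, bound each of the two remaining conditional moments. Set $r = \eta/w_{t,a}$ for arm $a$ and $r = -\eta/w_{t,b}$ for arm $b$. By hypothesis $|r| \le C$ in both cases. Assumption~\ref{ass:main}\ref{it:b} gives
\[
  \ex_t\sbrcl{e^{r X_{t,c}}}{B_t} ~\le~ e^{r \mu_{t,c} + \frac{1}{2} r^2 \sigma^2}.
\]
Assumption~\ref{ass:main}\ref{it:a} gives $|\mu_{t,c}| \le U$. The definition of $\Xi$ then yields
\[
  e^{r \mu_{t,c} + \frac12 r^2\sigma^2} ~\le~ 1 + r\mu_{t,c} + r^2\, \Xi(C, U, \sigma^2).
\]
The case $r = 0$ (that is, $\eta = 0$) is trivial and is handled separately, since $\Xi$ divides by $r^2$. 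Multiplying by the weights $w_{t,a}$ and $w_{t,b}$, the terms $w_{t,c}\, r^2$ become $\eta^2/w_{t,c}$, and the linear terms become $\eta\mu_{t,a}$ and $-\eta\mu_{t,b}$. The constant terms sum to $1$, so
\[
  \ex_t\sbrcl{e^{\eta Y_t}}{B_t}
  ~\le~
  1 + \eta(\mu_{t,a} - \mu_{t,b}) + \eta^2 \dell{\frac{1}{w_{t,a}} + \frac{1}{w_{t,b}}} \Xi(C,U,\sigma^2).
\]

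Third, use Assumption~\ref{ass:main}\ref{it:c}: $\mu_{t,a} - \mu_{t,b} = \Delta_{a,b}$ almost surely. The right-hand side above is therefore deterministic given $H_{t-1}$. Taking $\ex_t$ over $B_t$ leaves it unchanged, and applying $\ln(1+x) \le x$ gives the claim.

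I do not expect a real obstacle. The one point needing care is that the bound must hold pointwise in $B_t$ before averaging over it. This is exactly what the fixed-gap assumption delivers: without it, the shift-dependent linear term would not cancel. The role of the constraint $|\eta|/w_{t,c} \le C$ is only to keep the effective exponent $r$ within the range over which $\Xi$ is taken. Lemma~\ref{lem:Xi} is not needed for this step; it only serves to evaluate $\Xi$ afterwards.
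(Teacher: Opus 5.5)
Your proposal is correct and follows essentially the same route as the paper: condition on $B_t$, resolve $A_t$ via the independence assumption, apply conditional sub-Gaussianity, bound each exponential by $1 + r\mu_{t,c} + r^2\,\Xi(C,U,\sigma^2)$ with $r = \pm\eta/w_{t,c}$, let the fixed gap cancel the shift, then average over $B_t$ and take logarithms. The paper nominally cites Lemma~\ref{lem:Xi} for the $\Xi$ step, but, as you point out, only the definition of $\Xi$ is actually used there.
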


\begin{proof}
Well,
\begin{align*}
  &\ex \sbrcl{
    e^{\eta \dell{
          \frac{\indicator_{t, a} }{w_{t, a}} - \frac{\indicator_{t, b}}{w_{t, b}}
        } X_t }
  }{B_t}
  \\
  &~=~
  w_{t,a} \ex \sbrcl{
    e^{\eta \dell{
        \frac{X_t}{w_{t, a}}
        }}
  }{A_t=a,B_t}
  + w_{t,b} \ex \sbrcl{
    e^{\eta \dell{
        - \frac{X_t}{w_{t, b}}
        }}
  }{A_t=b,B_t}
  + (1-w_{t,a}-w_{t,b})
  \\
  &~\le~
  w_{t,a} e^{\eta \dell{
      \frac{B_t-\Delta_a}{w_{t, a}}
      } + \frac{\eta^2 \sigma^2}{2 w_{t,a}^2}
  }
  + w_{t,b} e^{\eta \dell{
      - \frac{B_t - \Delta_b}{w_{t, b}}
      }
    + \frac{\eta^2 \sigma^2}{2 w_{t,b}^2}
  }
  + (1-w_{t,a}-w_{t,b})
\intertext{
  By two applications of Lemma~\ref{lem:Xi}, one to the first exponential with $r = \frac{\eta}{w_a} \in [\pm C]$ and $x=B_t-\Delta_a \in [\pm U]$ and one to the second exponential with the analogous substitution for arm $b$, we find
  }
  &~\le~
  w_{t,a}
  \dell{
    1
    + \frac{\eta}{w_{t, a}} (B_t-\Delta_a)
    + \frac{\eta^2}{w_{t, a}^2} \Xi(C, U, \sigma^2)
  }
  \\
  & \quad + w_{t,b} \dell{
    1
    - \frac{\eta}{w_{t, b}} (B_t - \Delta_b)
    + \frac{\eta^2}{w_{t, b}^2} \Xi(C, U, \sigma^2)
  }
  + (1-w_{t,a}-w_{t,b})
  \\
  &~=~
  1
  + \eta (\Delta_b - \Delta_a)
  + \eta^2  \dell{
    \frac{1}{w_{t, a}}
    + \frac{1}{w_{t, b}}
  } \Xi(C, U, \sigma^2)
  \\
  &~\le~
  e^{\eta (\Delta_b - \Delta_a)
    + \eta^2  \dell{
      \frac{1}{w_{t, a}}
      + \frac{1}{w_{t, b}}
    } \Xi(C, U, \sigma^2)
  }.
\end{align*}
Taking the expectation and the logarithm gives the result.
\end{proof}

Similarly, we can bound the exponential moment of our evidence measure $z_{t,a,b}$, which we now do. 
\begin{lemma}\label{lem:MGFbound_z}
  There are constants $\bar \lambda$ and $\bar U$, depending only on $U$ and $\sigma^2$, such that in every round, for $\eta$ with $\sup_a \frac{|\eta|}{w_{t,a}} \le C$ for some $C > 0$,
  \[ \ln \mathbb{E}\left[ e^{\eta z_{t,a,b}} \right] \le {\eta \mathbb{E}[z_{t,a,b}]} + {\eta^2 \left(\frac{1}{w_{t,a}} + \frac{1}{w_{t,b}}\right) \Xi(C, \bar{U}, \sigma^2 \bar\lambda^2)}. \]
    
\end{lemma}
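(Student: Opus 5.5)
I will follow the template of Lemma~\ref{lem:MGFbound} almost verbatim: condition on $B_t$ (and $H_{t-1}$), resolve the randomness of $A_t$, bound each arm's conditional exponential moment using sub-Gaussianity of the noise, and then linearise each exponential via the quadratic bound encoded in $\Xi$ (Lemma~\ref{lem:Xi}). The key is to write $z_{t,a,b}$ on the event $A_t=a$ as $\frac{1}{w_{t,a}}(\lambda X_t-\alpha)$ and on $A_t=b$ as $\frac{1}{w_{t,b}}(-\lambda X_t-\beta)$. Then the conditional moment becomes $(1-w_{t,a}-w_{t,b})+w_{t,a}e^{\frac{\eta}{w_{t,a}}(\lambda\mu_{t,a}-\alpha)+\frac{\eta^2\lambda^2\sigma^2}{2w_{t,a}^2}}+w_{t,b}e^{\frac{\eta}{w_{t,b}}(-\lambda\mu_{t,b}-\beta)+\frac{\eta^2\lambda^2\sigma^2}{2w_{t,b}^2}}$, using Assumption~\ref{ass:main}\ref{it:b} with scaling $\eta\lambda/w$.

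Next I identify the constants. The parameters \eqref{eq:params} are maximisers of $D$ at gaps clipped to $[\cdot]\wedge 2U$, so by Lemma~\ref{lem:lambda} we get $\lambda\le \lambda_1^*\le 2U W/\sigma^2\le U/(2\sigma^2)$, since $W\le 1/4$. Hence $\bar\lambda\coloneq \max\{1,U/(2\sigma^2)\}$ (or simply this bound) bounds $\lambda$. I also need the shifted "means" $x_a\coloneq \lambda\mu_{t,a}-\alpha$ and $x_b\coloneq-\lambda\mu_{t,b}-\beta$ to lie in $[\pm\bar U]$. I will try to use the closed forms of Lemma~\ref{lem:exact alpha beta}: there $\alpha^* = \frac{\lambda^2\sigma^2}{2w_a} - w_a\log(\cdot)$. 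The issue is that the $1/w_a$ term is not bounded uniformly. I therefore plan to rescale. Writing $x_a=\lambda\mu_{t,a}-\alpha$, the exponent is $r x_a + \frac12 r^2(\lambda^2\sigma^2)$ with $r=\eta/w_{t,a}\in[\pm C]$. So $\Xi(C,\bar U,\sigma^2\bar\lambda^2)$ applies once $|x_a|\le \bar U$ and $\lambda\le\bar\lambda$ (monotonicity of $\Xi$ in $\sigma^2$ and $U$ follows from its closed form).

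The main obstacle is bounding $|\alpha|,|\beta|$ uniformly in $w$. I would first try the crude sandwich from the constraint $\Phi=1$. Each exponential term is at most $(w_a+w_b)/w_a$, so $\frac{\lambda u-\alpha}{w_a}+\frac{\lambda^2\sigma^2}{2w_a^2}\le\log\frac{w_a+w_b}{w_a}$. This gives lower bounds on $\alpha$, but ones involving $\lambda^2\sigma^2/w_a$, which blow up. If the uniform bound fails, I would instead absorb the deterministic $\frac{\lambda^2\sigma^2}{2w}$ part of $\alpha$ into the variance term. Since $\alpha^*=\frac{\lambda^2\sigma^2}{2w_a}+\tilde\alpha$ with $|\tilde\alpha|\le w_a\cdot O(\lambda U/W)$ bounded, the remainder is controlled by $\lambda U$. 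This suggests taking $\bar U$ of order $U\bar\lambda(1+\dots)$, and accepting that the stated lemma must implicitly rely on such a decomposition.

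Finally, exactly as in Lemma~\ref{lem:MGFbound}, I apply $e^{y}\le 1+y+r^2\Xi$ to each branch. Summing gives $1+\eta\,\E_t[z_{t,a,b}\mid B_t]+\eta^2(\frac1{w_{t,a}}+\frac1{w_{t,b}})\Xi$. The linear term is $\eta(\lambda\Delta_{a,b}-\alpha-\beta)$, which equals $\eta\E[z_{t,a,b}]$ and does not depend on $B_t$. I then use $1+y\le e^y$, take the expectation over $B_t$, and take logarithms.
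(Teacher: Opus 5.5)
Your architecture matches the paper's. You condition on $B_t$, resolve $A_t$, and apply sub-Gaussianity with scale $\eta\lambda_{t,a,b}/w_{t,\cdot}$. You then linearise each branch via Lemma~\ref{lem:Xi} with $r=\eta/w_{t,\cdot}\in[\pm C]$ and variance proxy $\lambda_{t,a,b}^2\sigma^2\le\bar\lambda^2\sigma^2$, and recombine so the linear term is $\eta(\lambda_{t,a,b}\Delta_{a,b}-\alpha_{t,a,b}-\beta_{t,a,b})$. The gap is the one step that is not a copy of Lemma~\ref{lem:MGFbound}. You need a constant $\bar U$, depending only on $U$ and $\sigma^2$, with $|\lambda_{t,a,b}\mu_{t,a}-\alpha_{t,a,b}|\le\bar U$ and $|\lambda_{t,a,b}\mu_{t,b}+\beta_{t,a,b}|\le\bar U$. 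You leave this open and hedge that it may fail because of the $\frac{\lambda^2\sigma^2}{2w_{t,a}}$ term in $\alpha^*$. Your fallback would not give the stated form. That offset enters only through the branch $A_t=a$, inside a mixture of exponentials, so it cannot be factored out of $\E_t[e^{\eta z_{t,a,b}}]$. The bound $e^{rx+\frac12 r^2 s^2}\le 1+rx+r^2\,\Xi$ with a round-independent $\Xi$ genuinely needs $|x|$ bounded: for $r<0$ the left side grows exponentially in the offset, the right side only linearly.

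The missing observation is that your own bound $\lambda_{t,a,b}\le\lambda_1^*=\Delta W/\sigma^2$ from Lemma~\ref{lem:lambda} already suffices. You just have to keep the factor $W=\frac{w_{t,a}w_{t,b}}{w_{t,a}+w_{t,b}}\le\min\{w_{t,a},w_{t,b}\}$ instead of discarding it via $W\le\frac14$. Write $\theta=\frac{w_{t,a}}{w_{t,a}+w_{t,b}}$ and $x=U\lambda/W$. Then Lemma~\ref{lem:exact alpha beta} reads $\alpha^*=\frac{\lambda^2\sigma^2}{2w_{t,a}}-w_{t,a}\ln\frac{\sinh(\theta x)}{\theta\sinh x}$, and with the projected gap $\Delta\le 2U$,
\[
  0\le\frac{\lambda^2\sigma^2}{2w_{t,a}}\le\frac{\Delta^2W^2}{2\sigma^2 w_{t,a}}\le\frac{\Delta^2 W}{2\sigma^2}\le\frac{U^2}{2\sigma^2},
  \qquad
  x\le\frac{U\Delta}{\sigma^2}\le\frac{2U^2}{\sigma^2}.
\]
Since $y\mapsto\sinh(y)/y$ is increasing on $y\ge 0$, the logarithm lies in $[-\ln\frac{\sinh x}{x},0]$. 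Hence $0\le\alpha_{t,a,b}\le\frac{U^2}{2\sigma^2}+\ln\frac{\sinh(2U^2/\sigma^2)}{2U^2/\sigma^2}$. The same holds for $\beta_{t,a,b}$ with $\theta$ replaced by $1-\theta$. For a nonpositive plug-in gap the maximiser is $\lambda=\alpha=\beta=0$. Together with $|\mu_{t,a}|\le U$ and $\lambda\le U/(2\sigma^2)$, this gives $\bar U=\frac{U^2}{\sigma^2}+\ln\frac{\sinh(2U^2/\sigma^2)}{2U^2/\sigma^2}$. The paper only asserts this boundedness (``by Lemma~\ref{lem:exact alpha beta} applied to the projected gaps''). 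With the computation above inserted, your proof is complete and coincides with the paper's.
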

\begin{proof}
    Let $\bar \lambda \coloneq \frac{U}{2 \sigma^2}$. By Lemma~\ref{lem:lambda}, $\lambda_{t,a,b} \le \bar \lambda$ in every round, as the projected gaps are at most $2U$ and $W \le \frac{1}{4}$. Let $\bar U$ be a constant with $|\lambda_{t,a,b}(B_t - \Delta_a) - \alpha_{t,a,b}| \le \bar U$ and $|\lambda_{t,a,b}(B_t - \Delta_b) - \beta_{t,a,b}| \le \bar U$ in every round; it exists since $|B_t - \Delta_a| \le U$ and $\alpha_{t,a,b}, \beta_{t,a,b}$ are bounded by Lemma~\ref{lem:exact alpha beta} applied to the projected gaps. The rest of the proof is similar to that of Lemma~\ref{lem:MGFbound}. We present it here for completeness.
    \begin{align*}
        &\mathbb{E}\left[ e^{ \eta \left( \frac{\indicator_{t,a}}{w_{t,a}}(\lambda_{t,a,b} X_t - \alpha_{t,a,b}) + \frac{\indicator_{t,b}}{w_{t,b}} (-\lambda_{t,a,b} X_t - \beta_{t,a,b}) \right) } \bigg| B_t \right]\\
        &=\mathbb{E}\left[ e^{ \eta \lambda_{t,a,b} \left( \frac{\indicator_{t,a}}{w_{t,a}} -\frac{\indicator_{t,b}}{w_{t,b}} \right) X_t -\eta\left( \frac{\alpha_{t,a,b}\indicator_{t,a}}{w_{t,a}} +  \frac{\beta_{t,a,b}\indicator_{t,b}}{w_{t,b}} \right) } \bigg | B_t \right]\\ 
        &= w_{t,a} e^{-\eta\frac{\alpha_{t,a,b}}{w_{t,a}}} \mathbb{E}\left[ e^{\eta \frac{\lambda_{t,a,b} X_t}{w_{t,a}} } \bigg| B_t, A_t = a \right] + w_{t,b}e^{-\eta\frac{\beta_{t,a,b}}{w_{t,b}}} \mathbb{E}\left[ e^{-\eta \frac{\lambda_{t,a,b X_t}}{w_{t,b}}} \bigg| B_t, A_t = b \right]\\
        &\qquad\qquad + (1-w_{t,a} - w_{t,b})\\
        &\le w_{t,a} e^{\frac{\eta}{w_{t,a}} \left({-\alpha_{t,a,b}} + {\lambda_{t,a,b}} (B_t - \Delta_a)\right)} e^{\frac{\eta^2\sigma^2\lambda^2_{t,a,b}}{2w^2_{t,a}}}\\
        & \qquad \qquad + w_{t,b} e^{\frac{\eta}{w_{t,b}} \left({-\beta_{t,a,b}} - {\lambda_{t,a,b}} (B_t - \Delta_b)\right)} e^{\frac{\eta^2\sigma^2\lambda^2_{t,a,b}}{2w^2_{t,b}}} + (1-w_{t,a} - w_{t,b})\\
        \intertext{
            Again, applying Lemma~\ref{lem:Xi} twice, one to the first exponential with $r = \frac{\eta}{w_a} \in [\pm C]$ and $x=\lambda_{t,a,b}(B_t-\Delta_a) - \alpha_{t,a,b} \in [\pm \bar{U}]$ and variance proxy $\sigma^2 \lambda_{t,a,b}^2 \le \sigma^2 \bar\lambda^2$, using that $\Xi$ is increasing in its last argument, and one to the second exponential with the analogous substitution for arm $b$, we find}\\
        &\le w_{t,a} \left( 1 + \frac{\eta}{w_{t,a}} (\lambda_{t,a,b}(B_t - \Delta_a) - \alpha_{t,a,b}) + \frac{\eta^2}{w^2_{t,a}} \Xi(C, \bar{U}, \sigma^2 \bar\lambda^2) \right) \\
        &\qquad\qquad + w_{t,b} \left( 1 + \frac{\eta}{w_{t,b}} (-\lambda_{t,a,b}(B_t - \Delta_b) - \beta_{t,a,b}) + \frac{\eta^2}{w^2_{t,b}} \Xi(C, \bar{U}, \sigma^2 \bar\lambda^2) \right)\\
        &\qquad\qquad + (1-w_{t,a} - w_{t,b})\\
        &= 1 + \eta \left( \lambda_{t,a,b}(\Delta_b - \Delta_a) - \alpha_{t,a,b} - \beta_{t,a,b} \right) + \eta^2 \left(\frac{1}{w_{t,a}} + \frac{1}{w_{t,b}}\right)\Xi(C, \bar{U}, \sigma^2 \bar\lambda^2)\\
        &\le e^{\eta\left(\lambda_{t,a,b} \Delta_{a,b} - \alpha_{t,a,b} - \beta_{t,a,b} \right)} e^{\eta^2 \left(\frac{1}{w_{t,a}} + \frac{1}{w_{t,b}}\right) \Xi(C, \bar{U}, \sigma^2 \bar\lambda^2)}\\
        &= e^{\eta \mathbb{E}[z_{t,a,b}]}e^{\eta^2 \left(\frac{1}{w_{t,a}} + \frac{1}{w_{t,b}}\right) \Xi(C, \bar{U}, \sigma^2 \bar\lambda^2)}.
    \end{align*}
    Finally, taking the total expectation and the logarithm gives the result.

\end{proof}

\section{\algname{} Sample Complexity Analysis}\label{sec:scproof}

\samplecplx*
At a high level, our proof follows the same broad strategy as the sample-complexity analysis of the TAS algorithm. The main difference lies in the treatment of evidence: TAS recomputes its evidence statistic at each time step, whereas our method accumulates evidence over time. This distinction leads us to work with a different set of concentration events.

\begin{proof}
Here, we derive an upper bound on the stopping time as a function of $\delta$. Hence, to make the dependence of $\tau$ on $\delta$ explicit, we denote the stopping time by $\tau_\delta$. 

Let $\bm \Delta$ be the true gap with best arm $a^*$. For any pair-wise gap vector $\bar{\bm \Delta}$, and all pairs of arms $a\ne b$, let $\lambda_{a,b}(\bar{\bm \Delta})$, $\alpha_{a,b}(\bar{\bm \Delta})$, $\beta_{a,b}(\bar{\bm \Delta})$, and $\bm w(\bar{\bm \Delta})$ denote the optimizers in $D^*(\bar{\bm \Delta})$. For $\epsilon > 0$ (to be chosen small enough, later), define
\[ I_\epsilon := [\bm\Delta - \epsilon, \bm\Delta + \epsilon] = \otimes_{b\in [K]} [\Delta_b - \epsilon, \Delta_b + \epsilon].\]
We will choose $\epsilon$ small enough so that for any $\bar{\bm \Delta}\in I_\epsilon$, we have that $\bar{\Delta}_{a^*,b} > 0$, for all $b\ne a^*$ (so that the best arm $a^*$ for $\bm \Delta$ is also the best arm in $\bar{\bm \Delta}$). Further, since $\bm\Delta$ is fixed, in the rest of this proof, we denote $D^*(\bm \Delta)$ by $D^*$. Note that as $\bm \Delta \in [0,2U]^K$, $\hat{\bm \Delta}_t \in I_\epsilon$ implies $\hat{\bm \Delta}_t \wedge 2U \in I_\epsilon$, so the projected gaps that \algname{} plugs in lie in $I_\epsilon$ whenever $\hat{\bm \Delta}_t$ does.

Then, there exists $\epsilon_1(\epsilon)$ such that tuning for $\bar{\bm \Delta}$ is only $\epsilon_1$ suboptimal compared to tuning for the true $\bm \Delta$ itself, i.e.
\[ \bar{\bm \Delta} \in I_\epsilon
  ~\implies~
  \lambda_{a^*,b}(\bar{\bm \Delta}) \Delta_b - \alpha_{a^*,b}(\bar{\bm \Delta}) - \beta_{a^*,b}(\bar{\bm \Delta}) \ge D^* - \epsilon_1(\epsilon) \quad \forall ~b \neq a^*. \numberthis\label{eq:epsilon1} \]
Note that the existence of $\epsilon_1(\epsilon)$ follows from Lemma~\ref{eq:continuity}: the optimizers depend continuously on $\bar{\bm \Delta}$ and $\Delta_b$ is fixed, so the left-hand side of \eqref{eq:epsilon1} is continuous in $\bar{\bm \Delta}$ and equals $D^*$ at $\bar{\bm \Delta} = \bm \Delta$.
We suppress the dependence of $\epsilon_1$ on $\epsilon$, for simplicity of notation.
For $T  > 0$, and $\epsilon_2 > 0$, define 
\[ \mathcal G_T:= \bigcap\limits_{t=T^\frac{1}{4}}^T \left\{ \hat{\Delta}_t \in I_\epsilon \right\} \bigcap \left\{ \forall b\ne a^* \sum\limits_{t=1}^T z_{t,a^*, b} \ge (T-T^\frac{1}{4}) (D^* - \epsilon_1 - \epsilon_2 ) - T^\frac{1}{4} L \right\}, \numberthis\label{eq:goodset} \]
where $-L$ is a constant lower bound on $\ex_t[z_{t,a^*,b}]$. It exists since the projected gaps, and hence the parameters, are bounded, see the proof of Lemma~\ref{lem:evidenceconc}.
Next, define 
    \[ T_\delta= \min \left\{T:(T-T^\frac{1}{4})(D^*-\epsilon_1-\epsilon_2) -T^\frac{1}{4} L \ge \log\frac{K-1}{\delta}  \right\}. \]
We now show that for any $T\ge T_\delta$, on $\mathcal G_T$, $\tau_\delta \le T$. To see this, fix $T > T_\delta$, and recall from \eqref{eq:stopping.and.recommendation} that on $\mathcal G_T$, since $\hat{\Delta}_t \in I_\epsilon$ for all $t\ge T^\frac{1}{4}$, 
\begin{align*} 
    \tau_\delta 
    &\le \min\left\{ t \ge T^\frac{1}{4}: \min\limits_{b\ne a^*}  \sum\limits_{s=1}^t z_{s,a^*,b} \ge \log\frac{K-1}{\delta} \right\}.
\end{align*}
Using this, 
\begin{align*}
\left\{ \tau_\delta \ge T \right\}\cap \mathcal{G}_T &\subset \left\{ \min\limits_{b\ne a^*}~\sum\limits_{s=1}^T z_{s,a^*,b} \le \ln\frac{K-1}{\delta}\right\} ~\bigcap ~\mathcal G_T\\
&\subset \left\{ (T-T^\frac{1}{4})(D^* - \epsilon_1 - \epsilon_2) - T^\frac{1}{4}L \le \min\limits_{b\ne a^*} \sum\limits_{s=1}^T z_{s,a^*,b} \le \ln\frac{K-1}{\delta}\right\}\\
&= \emptyset, \tag{Since $T > T_\delta$}
\end{align*}
proving that  $\tau_\delta > T_\delta \implies \mathcal G^c_T$. We use this implication in the following:
\begin{align*}
    \mathbb{E}\left[ \tau_\delta \right]
    &= \sum\limits_{T=0}^\infty \mathbb{P}\left(\tau_\delta > T\right)\\
    &\le T_\delta + \sum\limits_{T=T_\delta}^\infty \mathbb{P}\left( \tau_\delta > T \right)\\
    &\le T_\delta + \sum\limits_{T=T_\delta}^\infty \mathbb{P}\left( \mathcal G^c_T \right).
\end{align*}
Using Lemmas~\ref{lem:probgc} and~\ref{lem:tdelta} to bound the two terms in the bound above, dividing both sides by $\log\frac{1}{\delta}$, and taking the limit of $\delta \to 0$, we get
\[ \lim\limits_{\delta\rightarrow 0} ~ \frac{\mathbb{E}[\tau_\delta]}{\log\frac{1}{\delta}} \le \frac{1}{D^* - \epsilon_1 - \epsilon_2}. \]
Finally, since the above bound holds for any $\epsilon >0$ and $\epsilon_2 > 0$, we get the desired bound by optimizing over these free parameters (taking $\epsilon, \epsilon_2 \to 0$).
\end{proof}

The following lemma proves the continuity of the optimizers in $D^*$ as a function of the gap vector.
\begin{lemma}\label{eq:continuity}
  Fix an arm $a^* \in [K]$ and restrict to gap vectors $\bm \Delta \in [0, 2U]^K$ for which $a^*$ is the unique best arm, i.e.\ $\Delta_{a^*} < \Delta_b$ for all $b \neq a^*$. On this set, the maximisers of $D^*(\bm \Delta)$,
  \[
    \bm \Delta ~\mapsto~ \bm w(\bm \Delta)
    \qquad \text{and} \qquad
    \bm \Delta ~\mapsto~ \dell{\lambda_{a^*,b}(\bm \Delta), \alpha_{a^*,b}(\bm \Delta), \beta_{a^*,b}(\bm \Delta)}
    \quad \text{for } b \neq a^*,
  \]
  are continuous in $\bm \Delta$.
\end{lemma}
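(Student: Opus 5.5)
Continuity of maximisers will follow from Berge's maximum theorem combined with uniqueness of the maximisers. If a parametrised maximisation problem has a continuous objective, a continuous compact-valued constraint correspondence, and a unique maximiser at every parameter, then the maximiser map is continuous. I would treat the two stages separately: the inner problem $D(w_a,w_b,\Delta)$ for the evidence parameters, and the outer problem $D^*$ for $\bm w$.

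\textbf{Inner stage.} By Lemma~\ref{lem:exact alpha beta}, for fixed $\lambda$ the optimal $\alpha,\beta$ are the explicit smooth functions $\alpha^*(w_a,w_b,\lambda)$ and $\beta^*(w_a,w_b,\lambda)$. These are jointly continuous wherever $w_a,w_b>0$. It therefore suffices to show that $\lambda^*(w_a,w_b,\Delta)=\argmax_{\lambda\ge0}\lambda\Delta-\alpha^*-\beta^*$ is unique and continuous. Using \eqref{eq:h2}, the objective equals $\lambda\Delta-\frac{\lambda^2\sigma^2}{2W}-h_2(\lambda)$, where $h_2$ is convex (Lemma~\ref{lem:lambda}). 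The objective is therefore strictly concave in $\lambda$, because $\sigma^2>0$, so the maximiser is unique. Lemma~\ref{lem:lambda} also confines it to the compact interval $[0,\Delta W/\sigma^2]$, uniformly over bounded $\Delta\le 2U$. Berge then gives continuity of $\lambda^*$, and hence of $(\lambda^*,\alpha^*,\beta^*)$, jointly in $(w_a,w_b,\Delta)$ on the region $w_a,w_b>0$. The same argument gives continuity of the value $D(w_a,w_b,\Delta)$.

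\textbf{Outer stage.} The function $F(\bm w,\bm\Delta)=\min_{b\neq a^*}D(w_{a^*},w_b,\Delta_b)$ is continuous on the compact simplex, with $D$ extended by $0$ when a weight is zero. Continuity at the boundary holds because $0\le D\le \Delta^2W/(2\sigma^2)$ and $W\to0$ there. Since $a^*$ is the unique best arm, every $\Delta_b>0$, so $D^*>0$. Consequently any maximiser has all $w_b$, including $w_{a^*}$, bounded away from $0$ locally uniformly in $\bm\Delta$. Indeed, if some $w_b\to0$, then $F\le \Delta_b^2 w_b/(2\sigma^2)\to 0<D^*$, which contradicts optimality. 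So near any fixed $\bm\Delta$ we may restrict to a compact subset of the interior of the simplex, where the inner-stage continuity applies. Berge then gives upper hemicontinuity of the argmax correspondence, and this becomes continuity once the maximiser is shown to be unique. Composing the continuous $\bm w(\bm\Delta)$ with the inner-stage continuous maps yields continuity of $(\lambda_{a^*,b},\alpha_{a^*,b},\beta_{a^*,b})$.

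\textbf{Main obstacle: uniqueness of $\bm w(\bm\Delta)$.} This is the step I expect to be hardest. I would use the same approach as \citet{garivier16optimalBAI}, in two parts.

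First, each $D(w_{a^*},w_b,\Delta_b)$ is strictly increasing in $w_b$ and in $w_{a^*}$, and jointly concave. Concavity holds because $D$ is the maximum, over a jointly convex feasible set (Lemma~\ref{lem:safetyconvex}), of an objective that is linear in the parameters. Strict monotonicity should follow from the envelope theorem applied to the binding constraint $\Phi=1$.

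Second, these properties force every maximiser to equalise all the terms $D(w_{a^*},w_b,\Delta_b)$. Otherwise, mass could be moved from a non-binding arm to improve $F$. Once $w_{a^*}$ is fixed, equalisation determines each $w_b$ uniquely by strict monotonicity in $w_b$, and the simplex constraint is a strictly monotone scalar equation in $w_{a^*}$.

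If strict monotonicity proves delicate, the fallback is to argue directly from concavity: a maximiser set containing two points would, by equalisation at both and concavity along the segment between them, give a contradiction with strict monotonicity in at least one coordinate. Failing that, the fallback for the main theorem is to invoke only upper hemicontinuity, which suffices for \eqref{eq:epsilon1} since the value is evaluated uniformly over the argmax set.
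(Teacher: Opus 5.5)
\textbf{Gap: uniqueness of $\bm w(\bm\Delta)$.} Your two-stage Berge argument is sound up to its last step, and in places it is more careful than the paper. You get a unique $\lambda^*$ from the strictly concave term $-\lambda^2\sigma^2/(2W)$. You also keep $\bm w$ locally away from the boundary of the simplex, which is what the paper's appeal to ``convex and bounded, hence continuous'' actually needs. The genuine gap is uniqueness of $\bm w(\bm\Delta)$, and neither route you give proves it.

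Fix the level at $D^*$ and let $\phi_b(x)$ solve $D(x,\phi_b(x),\Delta_b)=D^*$. Each $\phi_b$ is decreasing and convex, being the lower boundary of a convex super-level set of a concave increasing function. Moreover, $D$ is positively $1$-homogeneous in $(w_a,w_b)$, because the constraint $\Phi\le 1$ is invariant under scaling $(w_a,w_b,\lambda,\alpha,\beta)$ by $c>0$. Hence $G(x)\coloneq x+\sum_{b\neq\astar}\phi_b(x)$ has minimum value exactly $1$, attained precisely at the optimal $w_{\astar}$. So your ``simplex equation'' $G(w_{\astar})=1$ is a tangency condition, not a strictly monotone one. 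Its solution set is the argmin interval of a convex function, which need not be a single point.

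Your concavity fallback fails for the same reason. Two equalised maximisers must differ in $w_{\astar}$, with every $w_b$ moving the opposite way. Along such a segment each $D_b$ can stay constant without contradicting concavity or strict monotonicity. For instance, with $D_b=w_{\astar}+w_b$ and $K=2$, every point of $\triangle_2$ is optimal.

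\textbf{The missing ingredient.} What you need is strict concavity of $g_b(r)\coloneq D(1,r,\Delta_b)$. By homogeneity, a flat piece in a level set of $D_b$ is the same thing as an affine piece of $g_b$. This does hold here:
\begin{itemize}
\item Write $D(1,r,\Delta)=\max_{\lambda}\,\lambda\Delta-\frac{\lambda^2\sigma^2}{2}(1+\frac1r)-h_2$.
\item $h_2$ is jointly convex in $(r,\lambda)$, since it is the $\sigma=0$ value of $\min_{\Phi\le 1}\alpha+\beta$.
\item $(r,\lambda)\mapsto\lambda^2(1+\frac1r)$ has a positive-definite Hessian (determinant $4\lambda^2/r^3$) on $\lambda>0$, which is where the maximiser lies by Lemma~\ref{lem:lambda}.
\item Partial maximisation of a strictly jointly concave function then gives a strictly concave $g_b$. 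So $G$ is strictly convex and $\bm w(\bm\Delta)$ is unique.
\end{itemize}

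\textbf{Comparison with the paper.} The paper's proof applies Berge once to the joint concave problem in $(\bm w,\bm\lambda)$, with $\lambda$ confined to $[\Delta W/(\sigma^2+U^2),\Delta W/\sigma^2]$. It concludes only that the argmax is an upper hemicontinuous, convex-valued correspondence, and never proves uniqueness either. Your last fallback (upper hemicontinuity only) is therefore exactly what the paper establishes, and it is also all that \eqref{eq:epsilon1} uses. The single-valued continuity claimed in the statement, however, needs the strict-concavity step above.
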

\begin{proof}
We start by plugging the partial solution to the $D$ problem from Lemma~\ref{lem:exact alpha beta} into the $D^*$ problem. Following the notation of Lemma~\ref{lem:lambda}, this results in
\[
  D^*(\bm \Delta)
  ~=~
  \max_{\bm w \in \triangle_K} \min_{b \neq a^*} \max_{\lambda \ge 0}~
  \lambda \Delta_b - \frac{\lambda^2 \sigma^2}{2 W_{a^*,b}} - h_{2,a^*,b}(\lambda)
  .
\]
This is a convex problem jointly in the weights $(w_a)_{a \in [K]}$ and parameters $(\lambda_{a^*,b})_{b \neq a^*}$ (which we can hoist out of the $\min$). Our $\lambda$ and $\bm w$ are found as its argmax, and we want to see their continuity in $\bm \Delta$. The objective function here is not bounded below (the problematic case is taking weights to $0$, for which both $\frac{1}{W} \to \infty$ and $h_2(\lambda) \to \infty$). Yet by Lemma~\ref{lem:lambda}, we may restrict the domain to $\lambda \in \sbrl{\frac{\Delta W}{\sigma^2 + U^2},\frac{\Delta W}{\sigma^2}}$ without changing the optimum. On that domain, which is compact, the function value \emph{is} bounded below, namely by
\begin{align*}
  &
  \lambda \Delta - \frac{\lambda^2 \sigma^2}{2 W} - h_2(\lambda)
  \\
  &~\ge~
  \lambda \Delta - \frac{\lambda^2 (\sigma^2 + U^2)}{2 W}
  \\
  &~\ge~
  \frac{\Delta W}{\sigma^2} \Delta - \frac{(\frac{\Delta W}{\sigma^2})^2 (\sigma^2 + U^2)}{2 W}
  \\
  &~=~
  \frac{\Delta^2 W}{2 \sigma^2}
  - \frac{\Delta^2 W}{2 \sigma^4} U^2
  \\
  &~\ge~
  -\frac{\Delta^2 U^2}{8 \sigma^4}
\\
  &~\ge~
  -\frac{U^4}{2 \sigma^4}
\end{align*}
Hence, the objective function, being convex and bounded, is continuous. By Berge's theorem, we conclude that $\argmax_{\bm w,\bm\lambda} D^*$ is an upper-hemicontinous (compact, convex, non-empty) set-valued function of $\bm \Delta$.
\end{proof}

\begin{lemma}\label{lem:probgc} For $T > 1$, 
\[ \mathbb{P}(\mathcal G^c_T) \le 2T K e^{- \frac{\epsilon^2 T^\frac{1}{8}}{8 K \Xi(C, U, \sigma^2)}} + K e^{-\frac{\epsilon^2_2 \sqrt{T}}{128 K \Xi(C, \bar{U}, \sigma^2 \bar\lambda^2)}}. \]
\end{lemma}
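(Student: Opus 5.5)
We bound $\mathcal G_T^c$ by a union bound over its two constituent events and treat each with a Chernoff/Ville-type martingale argument. Both parts rest on the exponential-moment bounds of Lemma~\ref{lem:MGFbound} and Lemma~\ref{lem:MGFbound_z}. What makes these bounds usable is forced exploration: \eqref{eq:w} gives $w_{t,a} \ge \frac{1}{K\sqrt t}$, so $\frac{1}{w_{t,a}} + \frac{1}{w_{t,b}} \le 2K\sqrt t$.

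\textbf{Gap concentration.} First fix $t \in [T^{1/4}, T]$, a pair $a \neq b$, and a sign. The increments $Y_s \coloneq \dell{\frac{\indicator_{s,a}}{w_{s,a}} - \frac{\indicator_{s,b}}{w_{s,b}}} X_s$ are handled with a predictable-free constant $\eta$. Taking $\eta = \frac{\epsilon}{c K \sqrt t}$, we have $|\eta|/w_{s,a} \le \epsilon/c \le C$ for all $s \le t$, with $C$ a fixed constant. Lemma~\ref{lem:MGFbound} then shows that
\[
\exp\Big(\eta\sum_{s\le t} (Y_s - \Delta_{a,b}) - \eta^2 \sum_{s \le t} 2K\sqrt s\,\Xi(C,U,\sigma^2)\Big)
\]
is a supermartingale. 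Markov's inequality at the fixed time $t$ gives
\[
\pr\{\hat\Delta_{t,a,b} - \Delta_{a,b} \ge \epsilon\} \le \exp\big(-\eta t\epsilon + 2K\eta^2 t^{3/2}\Xi\big).
\]
Optimising $\eta = \frac{\epsilon}{4K\sqrt t\,\Xi}$ (and checking that this $\eta/w$ stays below $C$ once $C$ is chosen accordingly) yields a bound of order $e^{-\epsilon^2 \sqrt t/(8K\Xi)}$. Since $t \ge T^{1/4}$, this is at most $e^{-\epsilon^2 T^{1/8}/(8K\Xi)}$. One subtlety must be verified here: $\hat\Delta_{t,b} = \max_a \hat\Delta_{t,a,b}$ lies within $\epsilon$ of $\Delta_b$ once every $\hat\Delta_{t,a,b}$ does. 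It suffices to control the pairs $(a^*,b)$ and the one-sided deviations of the others, giving at most $K$ pairs per sign. A union bound over two signs, these $K$ relevant pairs, and at most $T$ times then gives the first term $2TKe^{-\epsilon^2T^{1/8}/(8K\Xi(C,U,\sigma^2))}$.

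\textbf{Evidence concentration.} We work on the event that $\hat{\bm\Delta}_{t-1}\in I_\epsilon$ for $t > T^{1/4}$. There \eqref{eq:epsilon1} gives $\ex_t[z_{t,a^*,b}] \ge D^*-\epsilon_1$, and in all earlier rounds $\ex_t[z_{t,a^*,b}] \ge -L$. Hence
\[
\sum_{t\le T}\ex_t[z_{t,a^*,b}] \ge (T-T^{1/4})(D^*-\epsilon_1) - T^{1/4}L .
\]
(Index shifts between $t$ and $t-1$ are absorbed into the choice of $T^{1/4}$.) It therefore suffices to bound the lower deviation $\pr\{\sum_{t\le T}(z_{t,a^*,b}-\ex_t z_{t,a^*,b}) \le -\epsilon_2(T-T^{1/4})\}$. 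Apply Lemma~\ref{lem:MGFbound_z} with $-\eta$ and $\eta = \frac{\epsilon_2}{c' K\sqrt T\,\Xi(C,\bar U,\sigma^2\bar\lambda^2)}$. The variance sum is at most $2K T^{3/2}$, and we use $T-T^{1/4}\ge T/2$ for $T>1$ large enough, or else adjust constants. The resulting Chernoff bound is $e^{-\epsilon_2^2\sqrt T/(128 K\Xi)}$, where the constant $128$ comes from the factors $2$ in $(T-T^{1/4})\ge T/2$ and in the variance bound. A union bound over the $K-1 \le K$ arms $b$ gives the second term. To be careful, we would intersect with the gap event before applying the drift bound, so that the deviation event itself needs no conditioning: the inequality on the sum of drifts holds pointwise on the gap event, and the martingale deviation bound holds unconditionally.

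\textbf{Main obstacle.} The main obstacle is bookkeeping of the constraint $|\eta|/w_{t,a}\le C$ together with the $\sqrt t$-growth of the variance proxy. Because $w_{t,a}$ can be as small as $1/(K\sqrt t)$, the optimised $\eta$ must scale like $1/\sqrt T$. This scaling is exactly what degrades the exponent from linear in $T$ to $\sqrt T$ (and to $T^{1/8}$ at time $T^{1/4}$). We also need $C$ to be a fixed constant independent of $T$, which holds provided $\epsilon,\epsilon_2$ are small compared to $\Xi$. If the union over pairs in the gap step requires all $K(K-1)$ pairs, the constant $K$ in front would become $K^2$; we would first try the reduction to pairs involving $a^*$ described above.
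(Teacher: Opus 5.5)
Your argument has the same structure as the paper's proof. You use the split $\pr(\mathcal G_T^c)\le\pr(A^c)+\pr(B^c\cap A)$, and in each part the Chernoff bound that the paper packages as Lemmas~\ref{lem:deltaconc} and~\ref{lem:evidenceconc}, driven by Lemmas~\ref{lem:MGFbound}, \ref{lem:MGFbound_z} and forced exploration $w_{t,a}\ge 1/(K\sqrt t)$. The evidence part is fine; centring by the conditional drifts $\ex_t[z_{t,a^*,b}]$ is in fact more careful than the paper's notation. The one step that does not hold up as written is the reduction of $\{\hat{\bm\Delta}_t\in I_\epsilon\}$ to $K$ events per sign. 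If you really had to control the upper deviations of the other pairs $(a,b)$ with $a\neq a^*$, that is of order $K^2$ further events. A union bound would then give a prefactor of order $TK^2$, not $2TK$, and your proposal leaves this unresolved.

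The missing observation is that these extra events are not needed. By \eqref{eq:unbiased}, $\hat\Delta_{t,a,b}=\hat\Delta_{t,a^*,b}-\hat\Delta_{t,a^*,a}$ holds exactly, since both sides are differences of per-arm importance-weighted averages. Suppose $|\hat\Delta_{t,a^*,c}-\Delta_c|\le\epsilon$ for every $c\neq a^*$ and $\epsilon\le\Delta_{\min}$. The latter is already forced by the paper's requirement that $\bar\Delta_{a^*,b}>0$ on $I_\epsilon$, which needs $\epsilon<\Delta_{\min}/2$. Then $\hat\Delta_{t,a^*,a}\ge 0$ for all $a\neq a^*$, so $\hat\Delta_{t,a,b}\le\hat\Delta_{t,a^*,b}$ for every $a$; the case $a=b$ uses $\hat\Delta_{t,b,b}=0\le\hat\Delta_{t,a^*,b}$. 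Hence $\hat\Delta_{t,b}=\hat\Delta_{t,a^*,b}\in[\Delta_b-\epsilon,\Delta_b+\epsilon]$ for $b\neq a^*$, and $\hat\Delta_{t,a^*}=0$. So $\hat{\bm\Delta}_t\in I_\epsilon$ follows from the two-sided deviations of the $K-1$ pairs $(a^*,b)$ alone. This is what the paper's union bound over $\pr(\hat\Delta_{t,a^*,b}\notin I_\epsilon)$ silently uses, and it yields your $2TK$.

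A minor bookkeeping point: $128=8\cdot 16$ comes from $T-T^{1/4}\ge T/4$, which holds for all integers $T\ge 2$. The bound $T-T^{1/4}\ge T/2$ that you invoke would give $32$, but it fails at $T=2$.
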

\begin{proof} Recall from~\eqref{eq:goodset} that
\[ \mathcal G_T:= \underbrace{\bigcap\limits_{t=T^\frac{1}{4}}^T \left\{ \hat{\Delta}_t \in I_\epsilon \right\}}_{:=A} \bigcap \underbrace{\left\{ \forall b\ne a^* \sum\limits_{t=1}^T z_{t,a^*, b} \ge (T-T^\frac{1}{4}) (D^* - \epsilon_1 - \epsilon_2 ) - T^\frac{1}{4} L \right\}}_{:=B}. \]
Using this, 
\begin{align*}
    \mathbb{P}\left( \mathcal G^c_T \right) &= \mathbb{P}(A^c) + \mathbb{P}(B^c \cap A) \\
    &\le \sum\limits_{t=T^\frac{1}{4}}^T\sum\limits_{b}\mathbb{P}\left(  \hat{\Delta}_{t,a^*,b} \notin I_\epsilon \right) \\
    &\qquad \qquad + \sum\limits_{b\ne a^*}\mathbb{P}\left( \left\{\sum\limits_{t=1}^T z_{t,a^*,b} < (T- T^\frac{1}{4})(D^* - \epsilon_1 - \epsilon_2) - T^\frac{1}{4} L \right\} \bigcap A  \right).
\end{align*}
We bound the two terms on the right-hand side above, separately. Using Lemma~\ref{lem:deltaconc}, we get
\begin{align*}
    \sum\limits_{t=T^\frac{1}{4}}^T \sum\limits_{b} \mathbb{P}\left( \hat{\Delta}_{t,a^*,b} \notin I_\epsilon \right) \le 2T K e^{- \frac{\epsilon^2 T^\frac{1}{8}}{8 K \Xi(C, U, \sigma^2)}}.
\end{align*}
Similarly, using Lemma~\ref{lem:evidenceconc}, we get 
\[    \sum\limits_{b\ne a^*}\mathbb{P}\left( \left\{\sum\limits_{t=1}^T z_{t,a^*,b} < (T- T^\frac{1}{4})(D^* - \epsilon_1 - \epsilon_2) - T^\frac{1}{4} L \right\} \bigcap A  \right) \le K e^{-\frac{\epsilon^2_2 \sqrt{T}}{128 K \Xi(C, \bar{U}, \sigma^2 \bar\lambda^2)}}.
\]
We get the desired bound by summing the two bounds.
\end{proof}

\begin{lemma}\label{lem:deltaconc}
    For every pair of arms $a \neq b$, time-step $t \ge 1$, and $\epsilon > 0$ such that $\epsilon/(4 \Xi(C, U, \sigma^2)) \le C$, the following hold:
    \[ \mathbb{P}\left( \hat{\Delta}_{t,a,b} \le \Delta_{a,b} - \epsilon  \right) \le  e^{-\frac{\epsilon^2 \sqrt{t}}{8 K \Xi(C, U, \sigma^2)}}, \]
    and
    \[ \mathbb{P}\left( \hat{\Delta}_{t,a,b} \ge \Delta_{a,b} + \epsilon \right) \le e^{-\frac{\epsilon^2 \sqrt{t}}{8 K \Xi(C, U, \sigma^2)}}, \]
    where $\Xi(C, U, \sigma^2)$ is the constant from Lemma~\ref{lem:MGFbound}.
\end{lemma}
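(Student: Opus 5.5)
The plan is a Chernoff bound applied to the martingale of importance-weighted increments, using the per-round exponential-moment bound of Lemma~\ref{lem:MGFbound} together with the forced exploration built into \eqref{eq:w}. Write
\[
  Y_s \coloneq \dell{\frac{\indicator_{s,a}}{w_{s,a}} - \frac{\indicator_{s,b}}{w_{s,b}}} X_s,
  \qquad\text{so that}\qquad
  t\,\hat\Delta_{t,a,b} = \sum_{s=1}^t Y_s .
\]
By Assumption~\ref{ass:main}\ref{it:c}, $\ex_s[Y_s] = \ex_s[\mu_{s,a} - \mu_{s,b}] = \Delta_b - \Delta_a = \Delta_{a,b}$.

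The first step is a deterministic lower bound on the sampling weights. By \eqref{eq:w}, $w_{s,c} \ge \frac{1}{K\sqrt{s}} \ge \frac{1}{K\sqrt{t}}$ for every arm $c$ and every $s \le t$. Hence
\[
  \frac{1}{w_{s,a}} + \frac{1}{w_{s,b}} \le 2K\sqrt{t}
  \qquad\text{and}\qquad
  \frac{|\eta|}{w_{s,c}} \le |\eta| K \sqrt{t}
\]
for every round $s\le t$. So if we fix a single $\eta$ with $|\eta| K\sqrt{t} \le C$, Lemma~\ref{lem:MGFbound} applies uniformly over $s \le t$ and gives
\[
  \ln \ex_s\sbrl{e^{\eta (Y_s - \Delta_{a,b})}} \le 2\eta^2 K\sqrt{t}\,\Xi(C,U,\sigma^2).
\]
This bound holds for both signs of $\eta$, since the lemma only constrains $|\eta|$.

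The second step builds a supermartingale. Since the right-hand side above is deterministic, the process
\[
  M_r \coloneq \exp\dell{\eta \sum_{s\le r} (Y_s - \Delta_{a,b}) - 2 r \eta^2 K\sqrt{t}\,\Xi}, \qquad r \le t,
\]
is a nonnegative supermartingale with $M_0=1$. Markov's inequality then gives, for $\eta>0$,
\[
  \pr\setl{\hat\Delta_{t,a,b} \ge \Delta_{a,b} + \epsilon}
  \le \exp\dell{-\eta t \epsilon + 2 t \eta^2 K \sqrt{t}\, \Xi}.
\]
Choosing $\eta = \frac{\epsilon}{4K\sqrt{t}\,\Xi}$ makes the exponent equal to $-\frac{t\epsilon^2}{8K\sqrt{t}\,\Xi} = -\frac{\epsilon^2\sqrt{t}}{8K\Xi}$. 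The admissibility condition $|\eta| K\sqrt{t} \le C$ then reads exactly $\epsilon/(4\Xi) \le C$, which is the hypothesis of the lemma. The lower tail follows identically with $\eta = -\frac{\epsilon}{4K\sqrt{t}\,\Xi}$.

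The only delicate point is the uniform domain condition for Lemma~\ref{lem:MGFbound}. The exploration floor degrades like $1/\sqrt{t}$, so the variance factor grows like $\sqrt{t}$ and the admissible $\eta$ shrinks accordingly. This is why the rate is $\sqrt{t}$ rather than $t$, and why the lemma must use the Bernstein-type bound, whose variance term is linear in $1/w$, rather than Hoeffding's, which would be quadratic in $1/w$. With the $t$-dependent but deterministic choice of $\eta$ above, no further obstacle remains.
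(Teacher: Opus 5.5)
Your proposal is correct and follows essentially the same route as the paper: a Chernoff bound using Lemma~\ref{lem:MGFbound} round by round, and the forced-exploration floor $w_{s,c}\ge 1/(K\sqrt{s})\ge 1/(K\sqrt{t})$. You also make the same choice $\eta=\epsilon/(4K\sqrt{t}\,\Xi(C,U,\sigma^2))$, whose admissibility reduces exactly to the hypothesis $\epsilon/(4\Xi(C,U,\sigma^2))\le C$. Your explicit supermartingale, built from a deterministic per-round bound, justifies the tower-rule step slightly more cleanly than the paper's step (a), which writes the random weights outside the expectation before bounding them.
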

\begin{proof}
Recall, that 
\[ \hat{\Delta}_{t,a,b} = \frac{1}{t} \sum\limits_{s=1}^t \left( \frac{\indicator_{s,a}}{w_{s,a}} - \frac{\indicator_{s,b}}{w_{s,b}} \right)X_s. \]
For any $\eta > 0$, the following hold: 
    \begin{align*}
    \mathbb{P}\left( \hat{\Delta}_{t,a,b} \ge \Delta_{a,b} + \epsilon \right) 
    &\le \mathbb{P}\left( \eta \sum\limits_{s=1}^t \left( \frac{\indicator_{s,a}}{w_{s,a}} - \frac{\indicator_{s,b}}{w_{s,b}} \right)X_s \ge t\eta \Delta_{a,b} + t\eta \epsilon \right) \\
    &\le \mathbb{E}\left[ e^{\eta\sum\limits_{s=1}^t \left( \frac{\indicator_{s,a}}{w_{s,a}} - \frac{\indicator_{s,b}}{w_{s,b}} \right) X_s} \right] e^{-t\eta \Delta_{a,b} - t\eta\epsilon}\\
    &\overset{(a)}{\le} e^{\sum\limits_{s=1}^t \left( \eta \Delta_{a,b} + \eta^2\left( \frac{1}{w_{s,a}} + \frac{1}{w_{s,b}}  \right)\Xi(C, U, \sigma^2)   \right)}e^{-t\eta \Delta_{a,b} - t\eta\epsilon}\\
    &\le e^{\sum\limits_{s=1}^t \left( \eta \Delta_{a,b} + 2\eta^2\Xi(C, U, \sigma^2)  K \sqrt{s}  \right)}e^{-t\eta \Delta_{a,b} - t\eta\epsilon} \tag{$w_{s,a} \ge \frac{1}{K \sqrt{s}}$, $\forall a$}\\
    &\le e^{2\eta^2\Xi(C, U, \sigma^2)  K t\sqrt{t}}e^{- t\eta\epsilon},
    \end{align*}
    where the inequality $(a)$ follows from Lemma~\ref{lem:MGFbound} (assuming that $\sup_a \eta/w_{s,a} \le C$) and the constant $\Xi(C, U, \sigma^2)$ is as in the Lemma. We will later see that the optimized $\eta$ indeed satisfies this condition for small $\epsilon$, so that the assumption in Lemma~\ref{lem:MGFbound} is satisfied, and this step is legal. 

    Finally, we get the desired bound by optimizing the right-hand side in the above inequality with respect to $\eta \ge 0$ (setting $\eta = \eta^* := {\epsilon}/(4 \Xi(C, U, \sigma^2) K \sqrt{t})$). Now, recall that by forced exploration, $w_{s,a} \ge \frac{1}{K \sqrt{s}}$, and hence, $\eta^* / w_{s,a} \le \epsilon/(4 \Xi(C, U, \sigma^2))$. Choosing $\epsilon$ small enough guarantees that this bound is at most $C$, proving the result. 
    The other bound follows symmetrically.
\end{proof}

\begin{lemma}\label{lem:evidenceconc} For $T > 1$, $\epsilon > 0$ and 
\[  A:= \bigcap\limits_{t=T^\frac{1}{4}}^T \left\{ \hat{\Delta}_t \in I_\epsilon \right\} ,\]
we have
\[\mathbb{P}\left( \left\{\sum\limits_{t=1}^T z_{t,a^*,b} < (T- T^\frac{1}{4})(D^* - \epsilon_1 - \epsilon_2) - T^\frac{1}{4} L \right\}  \bigcap A \right) \le e^{-\frac{\epsilon^2_2 \sqrt{T}}{128 K \Xi(C, \bar{U}, \sigma^2 \bar\lambda^2)}}, \]
where the constants $C$, $\bar{U}$, and $\Xi(C, \bar{U}, \sigma^2 \bar\lambda^2)$ are as in Lemma~\ref{lem:MGFbound_z}.
\end{lemma}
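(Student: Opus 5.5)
We propose a Chernoff bound applied to the centred evidence sum, restricted to rounds $t \ge T^{1/4}$ on which $A$ guarantees good drift. Write $\sum_{t=1}^T z_{t,a^*,b} = \sum_{t=1}^T \ex_t[z_{t,a^*,b}] + M_T$, where $M_T \coloneq \sum_{t=1}^T (z_{t,a^*,b} - \ex_t[z_{t,a^*,b}])$ is a martingale.

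The drift is handled first. For $t \le T^{1/4}$ we use the uniform lower bound $\ex_t[z_{t,a^*,b}] \ge -L$, so these rounds contribute at least $-T^{1/4}L$. For $T^{1/4} < t \le T$, on $A$ we have $\hat{\bm\Delta}_{t-1}\in I_\epsilon$, and hence also $\hat{\bm\Delta}_{t-1}\wedge 2U \in I_\epsilon$. By \eqref{eq:epsilon1}, the parameters chosen in \eqref{eq:params} then give drift $\lambda\Delta_b-\alpha-\beta \ge D^*-\epsilon_1$.

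One subtlety is that the parameters are tuned for the mixed weights $\bm w_t$ of \eqref{eq:w}, not for $\tilde{\bm w}_t$. We absorb the $1/\sqrt{t}$ mixing into $\epsilon_1$, using continuity of $D$ in $\bm w$ (Lemma~\ref{eq:continuity}) for $t \ge T^{1/4}$ large. A second subtlety is the index shift $t$ versus $t-1$, which is absorbed by an off-by-one in the range. On $A$, the drift sum is therefore at least $(T-T^{1/4})(D^*-\epsilon_1) - T^{1/4}L$. The event in question then implies $M_T < -(T-T^{1/4})\epsilon_2$, which we weaken to $M_T < -T\epsilon_2/2$ for $T$ large.

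Next we bound $\pr(M_T < -T\epsilon_2/2)$ by the exponential supermartingale method. Fix $\eta>0$ and apply Lemma~\ref{lem:MGFbound_z} with $-\eta$. Combined with $w_{t,a}\ge 1/(K\sqrt t)$, this gives
\[
\ln \ex_t\bigl[e^{-\eta(z_{t,a^*,b}-\ex_t[z_{t,a^*,b}])}\bigr] \le 2\eta^2 K\sqrt{t}\,\Xi(C,\bar U,\sigma^2\bar\lambda^2),
\]
valid whenever $\eta K\sqrt{T}\le C$. Iterating over $t$ via the tower rule bounds the cumulative log-moment by $2\eta^2 K T^{3/2}\Xi$. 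Markov's inequality then yields
\[
\pr(M_T<-T\epsilon_2/2) \le \exp\bigl(2\eta^2 K T^{3/2}\Xi - \eta T\epsilon_2/2\bigr).
\]
Choosing $\eta = \epsilon_2/(8K\sqrt T\,\Xi)$ gives the exponent $-\epsilon_2^2\sqrt T/(32K\Xi)$, which is at most the claimed $-\epsilon_2^2\sqrt T/(128K\Xi)$ once we allow the slack lost from replacing $T-T^{1/4}$ by $T/2$ and similar constants. The constraint $\eta/w_{t,a}\le \epsilon_2/(8\Xi)\le C$ holds for $\epsilon_2$ small, mirroring Lemma~\ref{lem:deltaconc}.

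The main obstacle is the interplay between the event $A$ and the drift bound. Because $A$ is not predictable, we cannot condition on it inside the martingale. We therefore bound the probability of the intersection, using $A$ only to lower bound the deterministic-given-history drift terms pathwise, while the concentration of $M_T$ holds unconditionally. A second delicate point is that boundedness of $\lambda,\alpha,\beta$ (the constants $\bar\lambda,\bar U$ and $L$) must hold for every projected estimate and not only on $A$. This holds because the projection keeps the gaps in $(-\infty,2U]$ and the weights are bounded below by forced exploration. Negative estimated gaps, however, require noting that $\lambda^*\ge0$ and $D\ge 0$ keep $\alpha,\beta$ controlled.
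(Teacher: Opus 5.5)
Your proposal is correct and takes essentially the same route as the paper. You use $A$ only pathwise to lower bound the predictable drifts: at least $D^*-\epsilon_1$ in late rounds and at least $-L$ in the first $T^{1/4}$ rounds. You then apply a Chernoff bound to the centred sum, using Lemma~\ref{lem:MGFbound_z} together with $1/w_{t,a^*}+1/w_{t,b}\le 2K\sqrt{t}$, and optimise $\eta$.

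The only difference is in the constants. The paper keeps $T-T^{1/4}$ inside $\eta$ and uses $T-T^{1/4}\ge T/4$, which holds for all integer $T\ge 2$ and gives exactly the constant $128$. Your $T/2$ slack requires $T\ge 2^{4/3}$, though it yields the sharper constant $32$.
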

\begin{proof}
    For $t\ge 1$, recall from \eqref{eq:z} that
    \[ z_{t,a^*,b} = \frac{\indicator_{t,a^*}}{w_{t,a^*}} (\lambda_{t,a^*,b} X_t - \alpha_{t,a^*,b}) - \frac{\indicator_{t,b}}{w_{t,b}}(\lambda_{t,a^*,b}X_t + \beta_{t,a^*,b}), \]
    where the parameters $\lambda_{t,a^*,b}$, $\alpha_{t,a^*,b}$, and $\beta_{t,a^*,b}$ are the optimizers in \eqref{eq:params} tuned for the projected gaps $\hat{\bm \Delta}_{t-1} \wedge 2U$. For the bandit with true gap $\bm \Delta$,
    \[ \mathbb{E}\left[ z_{t,a^*,b} \right] = \lambda_{t,a^*,b} \Delta_b - \alpha_{t,a^*,b} - \beta_{t,a^*,b}. \]
    Clearly, on the event $A$, for $t \in [T^\frac{1}{4}, T]$, $\mathbb{E}[ z_{t,a^*,b}] \ge  D^* - \epsilon_1$. This follows from~\eqref{eq:epsilon1}, together with the observation that $\epsilon_1$ can be taken small enough to accommodate the forced exploration added in \eqref{eq:w}. Moreover, for $t < T^\frac{1}{4}$, we also have $\mathbb{E}[z_{t,a^*,b}] \ge -L$, for a constant $L$. This follows since $\Delta_b$, $\lambda_{t,a^*,b}$, $\alpha_{t,a^*,b}$, and $\beta_{t,a^*,b}$ are bounded (by Assumption~\ref{ass:main} and Lemmas~\ref{lem:lambda} and~\ref{lem:exact alpha beta}, the latter applied to the projected gaps in $[0, 2U]$; this is where the projection in \eqref{eq:params} is needed). Using these, for any $\eta > 0$ we get the following inequalities.
    \begin{align*}
        &\mathbb{P}\left( \left\{\sum\limits_{t=1}^T z_{t,a^*,b} < (T- T^\frac{1}{4})(D^* - \epsilon_1 - \epsilon_2) - T^\frac{1}{4} L \right\}  \bigcap A \right) \\
        &\le \mathbb{P}\left( \sum\limits_{t=1}^T z_{t,a^*,b}  < \sum\limits_{t=1}^T \mathbb{E}[z_{t,a^*,b}] - \epsilon_2 (T- T^\frac{1}{4}) \right)\\
        &= \mathbb{P}\left( \sum_{t=1}^T \left( \frac{\indicator_{t,a^*}}{w_{t,a^*}}(\lambda_{t,a^*,b} X_t - \alpha_{t,a^*,b}) + \frac{\indicator_{t,b}}{w_{t,b}} (-\lambda_{t,a^*,b} X_t - \beta_{t,a^*,b}) \right) \right. \\
        &\qquad \qquad \left. < \sum_{t=1}^T \mathbb{E}[z_{t,a^*,b}] - \epsilon_2 (T-T^\frac{1}{4})  \right)\\
        &\le \mathbb{E}\left[ e^{ -\eta \sum\limits_{t=1}^T \left( \frac{\indicator_{t,a^*}}{w_{t,a^*}}(\lambda_{t,a^*,b} X_t - \alpha_{t,a^*,b}) + \frac{\indicator_{t,b}}{w_{t,b}} (-\lambda_{t,a^*,b} X_t - \beta_{t,a^*,b}) \right) } \right] e^{\eta \sum_{t=1}^T \mathbb{E}[z_{t,a^*,b}] - \eta\epsilon_2(T-T^\frac{1}{4})}\\
        &\overset{(\star)}{\le} e^{\eta^2\sum\limits_{t=1}^T \left(\frac{1}{w_{t,a^*}} + \frac{1}{w_{t,b}} \right) \Xi(C, \bar{U}, \sigma^2 \bar\lambda^2)} e^{ -\eta\epsilon_2(T-T^\frac{1}{4})}\\
        &\le e^{2\eta^2 K T \sqrt{T}\Xi(C, \bar{U}, \sigma^2 \bar\lambda^2)} e^{ -\eta\epsilon_2(T-T^\frac{1}{4})} \tag{Since $w_{t,a} \ge \frac{1}{K \sqrt{t}}$ for all $a$},
    \end{align*}
    where $(\star)$ follows from Lemma~\ref{lem:MGFbound_z} (assuming $\sup_a |\eta|/w_{s,a} \le C$) and the constants $C$, $\bar{U}$ and $\Xi(C, \bar{U}, \sigma^2 \bar\lambda^2)$ are as in Lemma~\ref{lem:MGFbound_z}.

    Finally, we get the desired bound by optimizing the right-hand side in the inequality above with respect to $\eta \ge 0$ (setting $\eta = \eta^* = \epsilon_2(T-T^\frac{1}{4})/(4 K T \sqrt{T} \Xi(C, \bar{U}, \sigma^2 \bar\lambda^2))$), and observing that $T-T^\frac{1}{4} \ge \frac{1}{4} T$ for $T > 1$.
\end{proof}

\begin{lemma}\label{lem:tdelta}
    For $A>0$, $B>0$, and threshold $\mathfrak{L} > 0$, let 
    \[ T_{\mathfrak{L}}= \min \left\{T:(T-T^\frac{1}{4})A -T^\frac{1}{4} B \ge \mathfrak{L} \right\}. \]
    Then, 
    \[ T_{\mathfrak{L}} \le \frac{\mathfrak{L}}{A} + o\left(\mathfrak{L}\right).\]
\end{lemma}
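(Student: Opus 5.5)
The plan is to exhibit an explicit candidate $T$ that satisfies the defining inequality, and then use that $T_{\mathfrak{L}}$ is the minimum. Set $f(T) \coloneq (T - T^{1/4})A - T^{1/4}B = TA - T^{1/4}(A+B)$. Since $f(T)\to\infty$ as $T\to\infty$, the set in the definition is nonempty and $T_{\mathfrak{L}}$ is well defined. For any $T$ with $f(T)\ge\mathfrak{L}$ we get $T_{\mathfrak{L}}\le T$. (If $T$ ranges over integers, rounding up adds at most $1$, which is $o(\mathfrak{L})$.)

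As candidate I take
\[
  T^\circ \coloneq \frac{\mathfrak{L}}{A} + c\,\mathfrak{L}^{1/4}
\]
with $c$ a constant to be chosen. Then
\[
  f(T^\circ) = \mathfrak{L} + cA\,\mathfrak{L}^{1/4} - (T^\circ)^{1/4}(A+B).
\]
For $\mathfrak{L}\ge 1$ we have $T^\circ \le (1/A + c)\mathfrak{L}$, so $(T^\circ)^{1/4} \le (1/A+c)^{1/4}\mathfrak{L}^{1/4}$. It therefore suffices to have
\[
  cA \ge (A+B)(1/A+c)^{1/4}.
\]
The left side grows linearly in $c$ and the right side like $c^{1/4}$, so such a $c$ exists, and it depends only on $A$ and $B$. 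For example, $c = \max\{1/A,\ (2^{1/4}(A+B)/A)^{4/3}\cdot A^{-1/3}\}$ works; I would only verify that the inequality holds for $c$ large, without optimizing.

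It follows that $T_{\mathfrak{L}} \le \mathfrak{L}/A + c\,\mathfrak{L}^{1/4} + 1$ for $\mathfrak{L}\ge 1$, and $c\,\mathfrak{L}^{1/4}+1 = o(\mathfrak{L})$ as $\mathfrak{L}\to\infty$, which is the claim. I do not expect a real obstacle. The only delicate step is the absorption inequality for $c$, i.e.\ controlling $(T^\circ)^{1/4}$ by a constant times $\mathfrak{L}^{1/4}$, and handling the regime $\mathfrak{L}<1$, which is irrelevant for an asymptotic statement.

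In the application, $A = D^*-\epsilon_1-\epsilon_2>0$ (for $\epsilon,\epsilon_2$ small), $B=L$ and $\mathfrak{L}=\ln\frac{K-1}{\delta}$. Since $\ln\frac{K-1}{\delta}/\ln\frac1\delta\to1$, this gives $T_\delta/\ln\frac1\delta \to 1/(D^*-\epsilon_1-\epsilon_2)$ as used in the proof of Theorem~\ref{thm:sample complexity}.
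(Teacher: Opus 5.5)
Your proof is correct and follows the paper's route. Both plug in the candidate $T=\mathfrak{L}/A+c\,\mathfrak{L}^{1/4}$ and absorb the $(A+B)T^{1/4}$ term into the surplus $cA\,\mathfrak{L}^{1/4}$. You differ only in two details. You bound $T^{1/4}\le(1/A+c)^{1/4}\mathfrak{L}^{1/4}$ for $\mathfrak{L}\ge1$, whereas the paper uses subadditivity of $x^{1/4}$, which leaves it a lower-order $\mathfrak{L}^{1/16}$ term to absorb asymptotically. You also account for integer rounding, which is legitimate because $f$ is convex with $f(0)=0$ and so is increasing beyond any point where it is positive.

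One small slip: your explicit example for $c$ has a spurious factor $A^{-1/3}$. For $A>1$ it fails your own sufficient condition: with $A=8$ and $B\to0$ it gives $c\approx0.63$ and $cA\approx5.04<7.46\approx(A+B)(1/A+c)^{1/4}$. The choice that actually follows from $c\ge 1/A$ is $c=\max\{1/A,(2^{1/4}(A+B)/A)^{4/3}\}$. Your linear-versus-$c^{1/4}$ growth argument already establishes that a suitable $c$ exists, so the proof is not affected.
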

\begin{proof}
By definition, $T_\delta$ is the smallest integer $T$ such that 
\begin{equation}
\label{eq:basic_ineq}
AT - (A+B) T^{1/4} \ge \mathfrak{L}.
\end{equation}
Fix a constant $u>0$ (depending only on $A$ and $B$, and independent of $\mathfrak{L}$) and consider
\[
T := \frac{\mathfrak{L}}{A} + u \mathfrak{L}^{1/4}.
\]
For all sufficiently large $\mathfrak{L}$, using concavity of the map $x\to x^\frac{1}{4}$,
\[
T^{1/4}
\le \left(\frac{\mathfrak{L}}{A}\right)^{1/4} + (u \mathfrak{L}^{1/4})^{1/4}
= A^{-1/4} \mathfrak{L}^{1/4} + u^{1/4} \mathfrak{L}^{1/16}.
\]
Substituting into the left-hand side of \eqref{eq:basic_ineq}, we obtain
\begin{align*}
AT - (A+B)T^{1/4}
&= \mathfrak{L} + Au \mathfrak{L}^{1/4} - (A+B)T^{1/4} \\
&\ge \mathfrak{L} + Au \mathfrak{L}^{1/4}
- (A+B)\big(A^{-1/4} \mathfrak{L}^{1/4} + u^{1/4} \mathfrak{L}^{1/16}\big).
\end{align*}
Choosing $u$ sufficiently large so that
\[
Au \ge 2 (A+B) A^{-1/4},
\]
the coefficient of $\mathfrak{L}^{1/4}$ is positive, and the remaining
$\mathfrak{L}^{1/16}$ term is negligible. Hence, for all sufficiently large $\mathfrak{L}$,
\[
AT - (A+B)T^{1/4} \ge \mathfrak{L},
\]
which shows that
\[
T_{\mathfrak{L}} \le \frac{\mathfrak{L}}{A} + u \mathfrak{L}^{1/4} = \frac{\mathfrak{L}}{A} + o(\mathfrak{L}),
\]
as claimed.
\end{proof}

\section{Variance of evidence}\label{apx:worst.case.variance}
We have
\begin{align*}
  &\ex_t\sbrcl{
    z_{t,a,b}^2
  }{B_t}
  \\
  &~=~
  \ex_t\sbrcl{
    \dell{
      \frac{\indicator_{t, a}}{w_{t, a}} \cdot \dell{
        \lambda_{t,a,b} X_t - \alpha_{t,a,b}
      }
      + \frac{\indicator_{t, b}}{w_{t, b}} \cdot \dell{
        -\lambda_{t,a,b} X_t - \beta_{t,a,b}
      }}^2
  }{B_t}
  \\
  &~=~
  \frac{1}{w_{t, a}} \ex_t\sbrcl{
     \dell{
        \lambda_{t,a,b} X_t - \alpha_{t,a,b}
      }^2
  }{A_t = a, B_t}
  + \frac{1}{w_{t, b}} \ex_t\sbrcl{
    \dell{
      -\lambda_{t,a,b} X_t - \beta_{t,a,b}
    }^2
  }{A_t = b, B_t}
  \\
  &~\le~
  \frac{
    (\lambda_{t,a,b}  (B_t - \Delta_a) - \alpha_{t,a,b})^2+\lambda_{t,a,b} ^2 \sigma ^2
  }{w_{t, a}}
  + \frac{
    (-\lambda_{t,a,b}  (B_t-\Delta_b) - \beta_{t,a,b})^2+\lambda_{t,a,b} ^2 \sigma ^2
  }{w_{t, b}}
\end{align*}
To get a worst-case bound, this convex function of $B_t$ needs to be maximised over the valid interval of $B_t \in [-U+\Delta_{\max}, U]$. It suffices to check the end-points and take the max. Finally, the variance is the second moment minus the square of the mean, which is $\ex_t[z_{t,a,b}] = \lambda_{t,a,b} \Delta_b - \alpha_{t,a,b} - \beta_{t,a,b}$.

\section{Computation}\label{sec:compute}
We start from the observation that we can write the $D^*(\bm \Delta)$ problem as
\[
  \max_{\bm w \in \triangle_K} \min_{b \neq a^*} w_{a^*} g_b\dell{\frac{w_b}{w_{a^*}}}
\]
where
\[
  g_b(r)
  ~=~
  \max_{\zeta}~
  \frac{r}{1+r} \dell{
    \zeta \frac{\Delta_b}{U}
    - \frac{\zeta^2 \sigma^2}{2 U^2}
  }
  +  h \dell{\frac{1}{1+r} \zeta}
  + r h \dell{
    \frac{r}{1+r} \zeta
  }
  - (1+r) h \dell{
    \zeta
  }
\]
Following \citet[Section~2.2]{garivier16optimalBAI}, this means we are looking for the largest possible level $C$ to which all $g_b$ can be equalised, for weights summing to one. That is, we are after
\[
  \max_C ~ \frac{C}{1 + \sum_{b \neq a^*} g_b^{-1}(C)}
  .
\]
Solving that concave maximisation problem amounts to cancelling the derivative, i.e.\ solving
\[
  \sum_{b \neq a^*} \setl{
    C (g_b^{-1})'(C)
    - g_b^{-1}(C)
  }
  ~=~
  1
  .
\]
Now let $r_b(C) = g_b^{-1}(C)$ and let $\zeta_b(r)$ be the maximiser in $g_b(r)$. Then we are looking for $C$ such that
\[
  \sum_{b \neq a^*} \setl{
    \frac{
      C
    }{
      g_b'(r_b(C))
    }
    - r_b(C)
  }
  ~=~
  1
\]
By the rule for the derivative of an inverse, we have
\[
  g_b'(r)
  ~=~
  \frac{
    \zeta \frac{\Delta_b}{U}
    - \frac{\zeta^2 \sigma^2}{2 U^2}
    - \zeta h' \dell{\frac{1}{1+r} \zeta}
    + r \zeta h' \dell{
      \frac{r}{1+r} \zeta
    }
  }{(1+r)^2}
  + h \dell{
    \frac{r}{1+r} \zeta
  }
  - h \dell{
    \zeta
  }
\]
With that, we have the pleasingly symmetric
\[
  \frac{
    C
  }{
    g_b'(r_b(C))
  }
  - r_b(C)
  ~=~
  \frac{
    \frac{
      r^2 \dell{
        \zeta \frac{\Delta_b}{U}
        - \frac{\zeta^2 \sigma^2}{2 U^2}
      }
      + r \zeta \dell{
         h' \dell{\frac{1}{1+r} \zeta}
        - r h' \dell{
          \frac{r}{1+r} \zeta
        }
      }
    }{(1+r)^2}
    + h \dell{\frac{1}{1+r} \zeta}
    - h \dell{
      \zeta
    }
  }{
    \frac{
      \dell{
        \zeta \frac{\Delta_b}{U}
        - \frac{\zeta^2 \sigma^2}{2 U^2}
      }
      - \zeta \dell{
         h' \dell{\frac{1}{1+r} \zeta}
        - r h' \dell{
          \frac{r}{1+r} \zeta
        }
      }
  }{(1+r)^2}
  + h \dell{
    \frac{r}{1+r} \zeta
  }
  - h \dell{
    \zeta
  }
  }
\]
All in all, this reveals that $\argmax D^*(\bm \Delta)$ can be computed with three nested bisections. The outer one optimises over the scalar $C$. Then for each arm $b \neq a^*$, the middle one computes the inverse $r_b(C)$, and within that, the innermost evaluates $g_b(r)$.

The advantage of this method is it reaches machine precision in $O(K)$ work. We may compare this to first-order iterative schemes (including gradient descent), where the run-time per iteration is also $O(K)$, but where the required number of iterations increases with the desired accuracy. We may also contrast it with second-order iterative schemes (including Newton's method), where the number of iterations is small, but the  run-time per iteration is $O(K^2)$.

\cite{purex.games} show that it is possible to get the best of both worlds: one can run one iteration of an iterative scheme per round, while updating the estimates between round, and obtain convergence guarantees using concentration and online learning regret bounds. We leave developing such schemes for our setting as an interesting future endeavour.

\section{Numerical Experiments - Additional Figures} \label{app:numerical}

\begin{figure}[H]
  \centering
  \begin{tabular}{cc}
    \includegraphics[width=.485\textwidth]{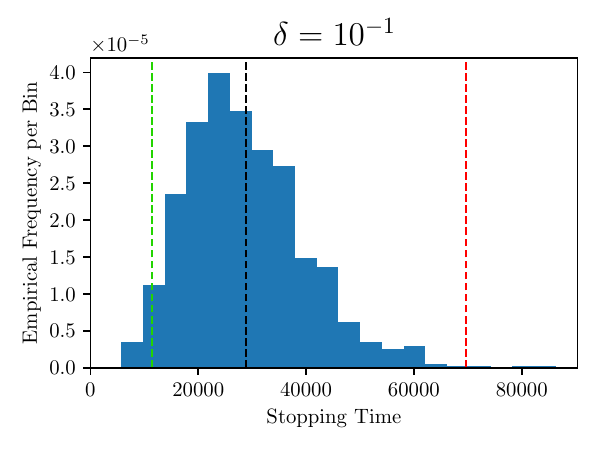}&
    \includegraphics[width=.485\textwidth]{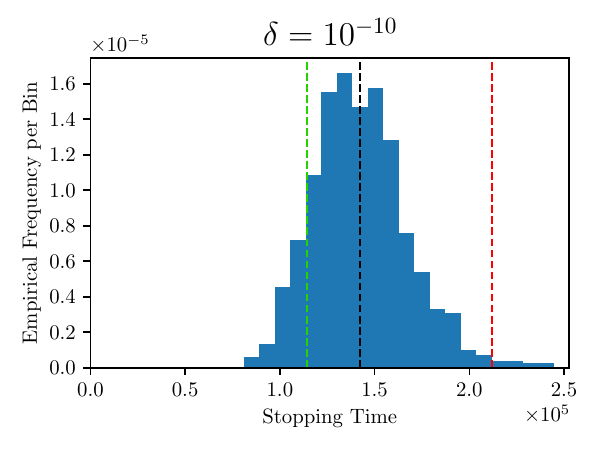}\\
    \includegraphics[width=.485\textwidth]{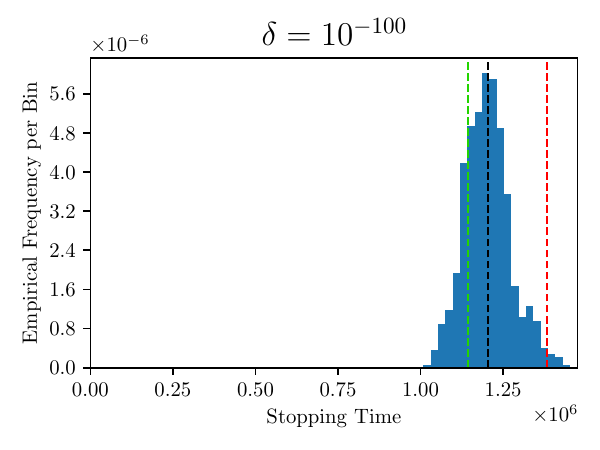}&
    \includegraphics[width=.485\textwidth]{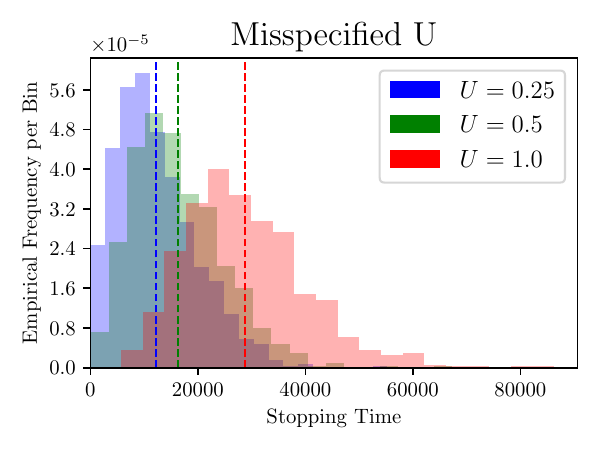}
  \end{tabular}
  \caption{Histograms of stopping times in $1000$ runs. The first three figures use $U=1$ and vary $\delta$, the dashed lines are (green) the expected stopping time as predicted by Theorem~\ref{thm:sample complexity}, (black) the empirical average stopping time, and (red) the stopping time as predicted by Lemma~\ref{lem:oracle} (as also spelled out in Table~\ref{tab:stopping times}). The last figure fixes $\delta = 1/10$ and mis-specifies $U = 1, \frac{1}{2}, \frac{1}{4}$. Here the dashes show the average stopping times. $U = \frac{1}{4}$ admits 4 mistakes, all other runs were perfect.
  }\label{fig:ism_e-1_delta}
\end{figure}

\begin{table}[H]
\centering
\begin{tabular}{lllllllll}
  $a$ & $b$ & $\Delta_b$ & $\lambda_{t,a,b}$ & $\alpha_{t,a,b}$ & $\beta_{t,a,b}$ & Drift $\ex_t[z_{t,a,b}]$ & $\var_t[z_{t,a,b}]$ bound \\
  \hline
  1 & 2 & 0.05 & 0.00805581 & 0.000100929 & 0.000100477 & 0.000201384 & 0.00038665 \\
  1 & 3 & 0.07 & 0.0057545 & 0.000104116 & 0.000097315 & 0.000201384 & 0.000374914 \\
  1 & 4 & 0.1 & 0.00402871 & 0.000105826 & 0.000095661 & 0.000201384 & 0.0003842
\end{tabular}
\caption{Optimised parameters for the pairs $a \neq b$ involving the best arm $a=a^*=1$. The drifts are perfectly equalised. The worst-case variances are computed as described in Appendix~\ref{apx:worst.case.variance}. They are not optimised to be similar, but they turn out to be.}\label{tab:values}
\end{table}

\end{document}

%% file: preamble.tex
\usepackage{graphicx} %

\usepackage{amsmath} %
\usepackage{amsthm} %
\usepackage{float}

\usepackage{algorithm}
\usepackage{algorithmic}

\usepackage{dsfont}
\usepackage{xcolor}
\usepackage{thmtools, thm-restate}
\usepackage{amsfonts}
\usepackage{amssymb} %
\usepackage{bm}
\usepackage{bbm}

\providecommand{\DeclareBoldMathCommand}[2]{%
  \expandafter\newcommand\csname #1\endcsname{\bm{#2}}%
}

\usepackage[shortlabels]{enumitem}

\usepackage{etoolbox}
\usepackage{thm-restate}
\usepackage{regexpatch}

\usepackage{nicefrac}
\usepackage{mathrsfs}
\usepackage{comment}

\usepackage[normalem]{ulem}

\makeatletter
\xpatchcmd\thmt@restatable{%
\csname #2\@xa\endcsname\ifx\@nx#1\@nx\else[{#1}]\fi
}{%
\ifthmt@thisistheone
\csname #2\@xa\endcsname\ifx\@nx#1\@nx\else[{#1}]\fi
\else
\csname #2\@xa\endcsname[{\textsc{restated}}]
\fi}{}{}
\makeatother

\newtheorem{theorem}{Theorem}
\newtheorem{lemma}[theorem]{Lemma}

\newtheorem{definition}[theorem]{Definition}

\newcommand*{\Acal}{\mathcal{A}}
\newcommand*{\Bcal}{\mathcal{B}}

\newcommand*{\Mcal}{\mathcal{M}}

\newcommand*{\reals}{\mathbb{R}}
\newcommand*{\nats}{\mathbb{N}}

\DeclareBoldMathCommand{\0}{0}
\DeclareBoldMathCommand{\1}{1}

\DeclareBoldMathCommand{\a}{a}
\DeclareBoldMathCommand{\A}{A}
\DeclareBoldMathCommand{\b}{b}
\DeclareBoldMathCommand{\B}{B}
\DeclareBoldMathCommand{\bell}{\ell}
\DeclareBoldMathCommand{\C}{C}
\DeclareBoldMathCommand{\D}{D}
\DeclareBoldMathCommand{\e}{e}
\DeclareBoldMathCommand{\Ebold}{E}
\DeclareBoldMathCommand{\g}{g}
\DeclareBoldMathCommand{\h}{h}
\DeclareBoldMathCommand{\H}{H}
\DeclareBoldMathCommand{\I}{I}
\DeclareBoldMathCommand{\L}{L}
\DeclareBoldMathCommand{\p}{p}
\DeclareBoldMathCommand{\P}{P}
\DeclareBoldMathCommand{\Phat}{\hat{P}}
\DeclareBoldMathCommand{\q}{q}
\DeclareBoldMathCommand{\S}{S}
\DeclareBoldMathCommand{\u}{u}
\DeclareBoldMathCommand{\v}{v}
\DeclareBoldMathCommand{\V}{V}
\DeclareBoldMathCommand{\w}{w}
\DeclareBoldMathCommand{\x}{x}
\DeclareBoldMathCommand{\y}{y}
\DeclareBoldMathCommand{\z}{z}

\DeclareBoldMathCommand{\mubold}{\mu}

\DeclareBoldMathCommand{\zetabold}{\zeta}
\DeclareBoldMathCommand{\xibold}{\xi}

\DeclareMathOperator{\E}{\mathbb{E}}
\DeclareMathOperator{\pr}{\mathbb{P}}

\DeclareMathOperator*{\Et}{\mathbb{E}_t}

\DeclareMathOperator*{\argmax}{arg\,max}
\DeclareMathOperator*{\argmin}{arg\,min}

\newcommand\numberthis{\addtocounter{equation}{1}\tag{\theequation}}
\newcounter{relctr} %
\everydisplay\expandafter{\the\everydisplay\setcounter{relctr}{0}} %

\AtBeginDocument{} %

\usepackage{mathtools}

\DeclarePairedDelimiterX{\inner}[2]{\langle}{\rangle}{#1, #2}
\DeclarePairedDelimiterX{\norm}[1]{\|}{\|}{#1}

\newcommand*{\Prob}{\mathbb{P}}

\usepackage{xparse}
\NewDocumentCommand\mset{ g }{
  \IfNoValueTF{#1}{\Bcal_{K, B}}{\Bcal_{K, #1}}
}

\newtheorem{assumption}{Assumption}

\makeatletter
\newcounter{learningparadigm}

\makeatother

\makeatletter
\newcommand{\given}{%
  \@ifstar
    {\mathrel{}\middle|\mathrel{}}%
    {\mid}%
}
\makeatother

\newcommand*{\sums}{\sum_{s = 1}^{t}}

\newcommand{\indicator}{\mathbb{I}}

\DeclareBoldMathCommand{\boldzero}{0}

\DeclareBoldMathCommand{\boldsigma}{\Sigma}

\DeclareBoldMathCommand{\lossmat}{\Theta}
\DeclareBoldMathCommand{\cocov}{\Sigma}

\DeclareBoldMathCommand{\Icalbold}{\mathcal{I}}

\DeclareBoldMathCommand{\Phit}{\Phi}
\DeclareBoldMathCommand{\Psit}{\Psi}
\DeclareBoldMathCommand{\Upsilont}{\Upsilon}

\DeclarePairedDelimiterX{\infdivx}[2]{(}{)}{%
  #1\;\delimsize\|\;#2%
}
\newcommand{\KL}{\textnormal{KL}\infdivx}

\newcommand{\kl}{\textnormal{kl}\infdivx}

\DeclareBoldMathCommand{\occ}{q}

\AtBeginDocument{%
  \let\l\relax
  \let\r\relax
  \newcommand{\l}{\left}%
  \newcommand{\r}{\right}%
}